\newif\ifDoubleColumn
    \title{\bf Cascaded Gaps: Towards Gap-Dependent Regret for Risk-Sensitive Reinforcement Learning}
    \author[1]{Yingjie Fei\thanks{The authors are presented in alphabetical order.}}
    \affil[1]{Bloomberg}
    \author[2]{Ruitu Xu$^*$}
    \affil[2]{Department of Statistics and Data Science, 
    Yale University}
    \date{}
\begin{document}

\ifDoubleColumn
    \relax
\else
    \footnotetext[1]{Email: \texttt{yf275@cornell.edu}}
    \footnotetext[2]{Email: \texttt{ruitu.xu@yale.edu}}
\fi

\ifDoubleColumn
    \maketitle
\else
    \maketitle
\fi

\begin{abstract}
    In this paper, we study gap-dependent regret guarantees for risk-sensitive reinforcement learning based on the entropic risk measure. We propose a novel definition of sub-optimality gaps, which we call  cascaded gaps, and we discuss their key components that adapt to the underlying structures of the problem. Based on the cascaded gaps, we derive non-asymptotic and logarithmic regret bounds for two model-free algorithms under episodic Markov decision processes. We show that, in appropriate settings, these bounds feature exponential improvement over existing ones that are independent of gaps. We also prove gap-dependent lower bounds, which certify the near optimality of the upper bounds. 
\end{abstract}

\section{Introduction}

We study the problem of risk-sensitive \gls*{RL} based on the entropic risk measure, in which we aim to identify a decision making rule (or policy) $\widehat{\pi}$ that solves the following optimization problem:
\begin{align}
    \max_{\pi} 
    \Big\{
    V^\pi = 
    \frac{1}{\beta} \log(\expect_\pi e^{\beta R})
    \Big\},
    \label{eq:risk_sens_informal}
\end{align}
where $R$ denotes the cumulative reward  and $\beta\neq 0$ is the risk parameter that induces risk-seeking learning when $\beta>0$ and risk-averse learning when $\beta<0$. The (standard) risk-neutral objective function used in RL, which is simply $\E_{\pi}[R]$, can be recovered from  \cref{eq:risk_sens_informal} by setting $\beta \to 0$. Moreover, the objective of \eqref{eq:risk_sens_informal} in the form of entropic risk measure admits a Taylor expansion $V^{\pi} = \E_\pi[R] + \frac{\beta}{2} \text{Var}_{\pi}(R) + O(\beta^2)$, which represents a trade-off between the expectation and the variance (and possibly higher-order statistics) of the reward.
Several lines of research on related problems have witnessed fruitful applications in a wide range of domains, including neuroscience  \citep{niv2012neural,shen2014risk},  robotics \citep{nass2019entropic,williams2016aggressive, williams2017information}, economics \citep{hansen2011robustness},  and etc. The formulation \eqref{eq:risk_sens_informal} has been related to notions of robustness \citep{osogami2012robustness,hansen2011robustness,follmer2011entropic} and bounded rationality \citep{simon1955behavioral,ortega2016human} in decision making and behavioral studies. A thermodynamic view on such formulation has also been proposed for understanding  sequential decision making systems \citep{ortega2013thermodynamics}.

For problem \eqref{eq:risk_sens_informal}, much recent work has been devoted to designing algorithms that attain finite-sample regret bounds under \glspl*{MDP}. Although the existing bounds are nearly optimal in the minimax sense,
they are overly pessimistic as they generally  fail to exploit particular structures of the underlying \glspl*{MDP}, such as sub-optimality gaps, which quantify the easiness of learning optimal policies under the \glspl*{MDP}.
Although previous work has explored and provided gap-dependent results for risk-neutral \gls*{RL}, 
it is unclear how the sub-optimality gaps should be constructed in the risk-sensitive setting. 
In particular, 
the definition of existing sub-optimality gaps, as we will elaborate in \cref{sec:cascaded-gaps}, crucially hinges on the linear structures of the risk-neutral setting, which no longer hold in the risk-sensitive setting characterized by the non-linear objective  \eqref{eq:risk_sens_informal}.
It therefore begs the following natural questions: 
1) how sub-optimality  gaps should be characterized in risk-sensitive \gls*{RL}, and 
2) whether we can obtain refined bounds on regret and sample complexity by taking advantage of the gap structures.

To answer the above questions, we study gap-dependent regret bounds for risk-sensitive \gls*{RL} based on the entropic risk measure. In particular, we identify two key conditions for a proper definition of sub-optimality gaps for risk-sensitive \gls*{RL}: Bellman difference condition and risk consistency condition. The Bellman difference condition states that the gaps induce a Bellman equation in which they play the role of reward functions; the risk consistency condition stipulates that the gaps stay on the same order of magnitude for both risk-averse and risk-seeking settings given fixed risk sensitivity $|\beta|$, and they reduce to risk-neutral gaps as $|\beta|$ vanishes.
Motivated by the two conditions, we propose a novel characterization of sub-optimality gaps for risk-sensitive \gls*{RL}, which we call \emph{cascaded gaps}. Cascaded gaps consist of three key components: 1) the difference of rewards along trajectories controlled by an optimal policy, 2)  the reward functions evaluated along a free trajectory (not controlled by any policy), and 3)  a normalization factor that depends on the risk parameter. 
The first two components together exhibit a cascading property and address the Bellman difference condition, whereas the third component facilitates risk consistency.

Based on the cascaded gaps, 
we derive  non-asymptotic regret bounds for two existing risk-sensitive \gls*{RL} algorithms, \textsc{RSVI2} and \textsc{RSQ2}, that scale logarithmically in the number of episodes and decay in the cascaded gaps. The proof is based on a unified framework for both algorithms. We demonstrate that under proper settings, our regret bounds attain an exponential improvement over existing results with respect to the number of episodes, as well as an exponential improvement in terms of risk sensitivity and horizon over existing sample complexity bounds.
We further show that the provided upper bounds are nearly optimal by deriving compatible lower bounds.
To the best of our knowledge, this is the first work that studies  sub-optimality gaps in risk-sensitive \gls*{RL} with the  entropic risk measure and derives  gap-dependent regret bounds.

\paragraph{Contributions.}
In summary, we make the following theoretical contributions in this paper:
\begin{enumerate}
    \item We propose a novel notion of sub-optimality gaps for risk-sensitive \gls*{RL} based on the entropic risk measure, which we call 
    cascaded gaps. We discuss  essential components of  cascaded gaps tailored to the unique structure 
    of risk-sensitive RL, and compare them with  sub-optimality gaps in the risk-neutral setting.
    
    \item We prove logarithmic regret bounds that adapt to the sub-optimality gaps for two existing risk-sensitive \gls*{RL} algorithms. The bounds are achieved via a unified framework for both algorithms,  
    and they imply exponential improvements in both regret and sample complexity under appropriate settings.
    \item We further derive lower bounds that nearly match the upper bounds, thereby showing that the upper bounds are nearly optimal.

\end{enumerate}

\paragraph{Notation.}
We write shorthand $[n] \defeq 1,\ldots,n$ for any $n\in\ZZ_{+}$. 
For any series of variables $\{v_i\}_{i \in [n]}$, 
we define the notation $\poly(v_1,\ldots,v_n) \defeq c_0 \prod_{i\in[n]}v_i^{c_i}$ and  $\polylog(v_1,\ldots,v_n) \defeq c_0\prod_{i\in[n]}\log(v_i)^{c_i}$ for some positive universal constants $\{c_i\}_{i \ge 0}$.
For $x>0$, we write $\widetilde O(x)$ to denote $O(x \polylog(x))$; we define $\widetilde\Omega(x)$ in a similar way.   Unless otherwise specified, $\log$ denotes the natural logarithm and $\log_2$ denotes the logarithm with base $2$. For any functions $f$ and $g$ with the same domain, we write $f\leq g$ to mean $f(y)\leq g(y)$ for all $y$ in the domain. We use notation $\phi(n)\lesssim \varphi(n)$ (or $\phi(n)\gtrsim \varphi(n)$) for functions $\phi$ and $\varphi$ that satisfy $\phi(n)\leq C\varphi(n)$ (or $\phi(n)\geq c \varphi(n)$) for every $n\in\ZZ_+$ with some universal constant $C>0$ (or $c>0$); further, we write $\phi(n)\asymp \varphi(n)$ to mean $\phi(n)\lesssim \varphi(n)$ and $\phi(n)\gtrsim \varphi(n)$.

\section{Related Works}
Initiated by \citet{howard1972risk,jacobson1973optimal}, risk-sensitive RL based on the entropic risk measure has been the focus of long-standing research efforts for the past decades \citep{borkar2002q,borkar2002risk,coraluppi1999risk,osogami2012robustness, shen2013risk, whittle1990risk,mihatsch2002risk, bauerle2014more,fleming1995risk, di1999risk, borkar2001sensitivity}. Most related to our work are perhaps those by \citet{fei2020risk,fei2021risk,fei2021exponential}: under the episodic and finite-horizon \glspl*{MDP}, they propose computationally efficient algorithms for risk-sensitive RL and provide finite-sample and nearly optimal regret guarantees in both tabular and linear settings. These results are general, holding without access to transitions or simulators. However, they fail to exploit particular structures of the underlying \glspl*{MDP}, such as sub-optimality gaps, and are therefore overly conservative under certain settings.

For risk-neutral \gls*{RL}, a series of works has established non-asymptotic and  gap-dependent regret bounds for optimistic algorithms, starting from \citet{simchowitz2019non}. Specifically, logarithmic regret are derived for optimistic Q-learning \citep{yang2021q} and value iteration \citep{he2021logarithmic}.
Despite these recent developments, it remains unclear whether the defintion of sub-optimality gaps in  risk-neutral RL is appropriate for the risk-sensitive setting, which the present work aims to address.

\section{Preliminaries}

\subsection{Episodic and Finite-Horizon MDPs}

We focus on the setting of tabular \glspl*{MDP}, represented by a tuple $(\calS,\calA,H,K,\calP,r)$. Here,  $\calS$ denotes the set of available states with cardinality $|\calS| = S$, $\calA$ the set of actions available to the agent with cardinality $|\calA| = A$, $K$ the number of episodes, $H$  the horizon,
$\calP = \{\calP_h\}_{h \in [H]}$ the set of transition kernels, and $r = \{r_h\}_{h \in [H]}$ the set of reward functions. We assume that reward $r_h:\calS\times\calA\to[0,1]$ is deterministic for every step $h$.
Without loss of generality, the agent starts at a fixed state $s_1^k = s_1$ in each episode $k \in [K]$. For  episode $k\in[K]$ and step $h \in [H]$, it takes action $a^k_h$ at state $s^k_h$ and receives reward $r^k_h(s^k_h,a^k_h)$. Then the  environment transitions into $s^k_{h+1}$ with probability equal to $\calP_h(s^k_{h+1}| s^k_h,a^k_h)$.

\subsection{Risk-Sensitive RL}

We define policy $\pi \defeq \{\pi_h: \cS\to\cA\}$ as a collection of functions that map states to actions. 
In risk-sensitive \gls*{RL} based on entropic risk measure, we define the state-value function with respect to any $\pi$:
\begin{align*}
    \Vpih(s) \defeq \frac{1}{\beta} \log\left\{ \expect\Big[e^{\beta \sum_{i=h}^H r_{i}(s_{i},\pi_{i}(s_{i}))}\Big] \given s_h=s \right\},
\end{align*}
for each $h\in[H]$ and $s\in\calS$, where the expectation is taken over the transition kernel $\calP$. The quantity $\beta\neq 0$ is the risk parameter of the entropic risk measure. In particular, $\beta>0$ yields a risk-seeking value function, while $\beta<0$ induces a risk-averse value function. 
The risk-neutral definition of the value function $\widetilde{V}^{\pi}_{h}(s) \defeq \expect[\sum_{i=h}^H r_{i}(s_{i},\pi_{i}(s_{i}))\given s_h=s]$ can be recovered through taking $\beta\to 0$.
Similarly, we define the corresponding action-value function as
\ifDoubleColumn
    \begin{align*}
        & \Qpih(s,a)  \\
        & \quad \defeq \frac{1}{\beta} \log\Big\{  \expect\Big[e^{\beta \sum_{i=h}^H r_{i}(s_{i},\pi_{i}(s_{i}))}\Big] \given s_h=s, a_h=a \Big\}.
    \end{align*}
\else
    \begin{align*}
        \Qpih(s,a) & \defeq \frac{1}{\beta} \log\Big\{  \expect\Big[e^{\beta \sum_{i=h}^H r_{i}(s_{i},\pi_{i}(s_{i}))}\Big] \given s_h=s, a_h=a \Big\}.
    \end{align*}
\fi
Note that we omit the dependency of $V_{h}^{\pi}$ and $Q_{h}^{\pi}$ on $\beta$ for simplicity. 
Consequently, the Bellman equation for risk-sensitive RL is given by
\begin{align}\label{eqn:bellman}
    \Qpih(s,a) = r_h(s,a) + \frac{1}{\beta}\log
    \expectph\big[ e^{\beta\cdot\Vpihh(s')} \big],
\end{align}
which relates the action-value function $\Qpih$ to the state-value function $\Vpihh$ of the next step. Note that the Bellman equation is non-linear in the value function due to the non-linearity of the entropic risk measure.
It can be shown that there always exists an optimal policy $\pi^*$ with the optimal value  $\Vsh(s) \defeq V^{\pi^*}_h(s) = \sup_\pi\Vpih(s)$ for every $h\in[H]$ and $s\in\calS$; we also write $Q^{*}_{h} \defeq Q^{\pi^*}_h$ for $h \in [H]$.

Under episodic \glspl*{MDP}, the agent aims to learn an optimal policy $\pi^*$ by interacting with the environment for $K$ episodes. We measure the performance of the agent that follows  policies $\{\pi^k\}_{k\in[K]}$ via the notion of  regret, which is defined as 
\begin{align*}
        \calR(K) \defeq \sumk (V_1^* - V_1^{\pi^k})(s_1^k).
\end{align*}

\section{Cascaded Gaps}\label{sec:cascaded-gaps}

\subsection{Bellman Difference Condition}\label{sec:bellman-diff-cond}

Since both regret and sub-optimality gaps represent some notion of sub-optimality with respect to an optimal policy $\pi^*$, it would be instrumental to  associate the two through a unified lens. We do so by introducing the following condition, which later plays a key role in our analysis.

\begin{cond}[Bellman Difference Condition] \label{cond:bellman_diff}
We say that gap functions $\{\gp_{h}:\cS\times\cA\to\real\}_{h\in[H]}$ 
satisfy the \textit{Bellman difference condition} 
if, for any policy $\pi$ and tuple $(h,s)\in [H]\times \cS$, there exists
some $Z_{h}^{\pi}:\cS \to \real$
such that
\begin{align*}
     \vdiff_{h}^{\pi}(s) & = \gp_{h}(s,a) + \E_{s'\sim \calP_h(\cdot|s,a)}[\vdiff_{h+1}^{\pi}(s')],
\end{align*}
where $\vdiff_{h}^{\pi}  \defeq  Z^{*}_h - Z^{\pi}_h$ and $a \defeq \pi_h(s)$.
\end{cond}

\cref{cond:bellman_diff}  stipulates that for any fixed policy $\pi$, the gaps induce a form of Bellman equation where the action follows policy $\pi$. The  function $\vdiff^{\pi}_h$, itself being the difference of two functionals with respect to some  $\widehat{\pi}^*$ (optimal with respect to $\{Z^{\pi}_h\}$) and $\pi$, takes the role of the value function, and the gap takes the role of the reward function. 
Indeed, \cref{cond:bellman_diff} associates the sub-optimality induced by $\vdiff^{\pi}_h$ with that embedded in $\gp_h$. 
The condition also suggests that when $\pi = \widehat{\pi}^*$, we have $\vdiff_h^{\widehat{\pi}^*} = 0$ and therefore $\gp_h(s, \widehat{\pi}^*_h(s)) = 0$.

As an example, we show that the sub-optimality gaps defined in risk-neutral RL meets \cref{cond:bellman_diff}. Recall  the risk-neutral value functions 
\ifDoubleColumn
    \begin{align*}
        \widetilde{Q}^{\pi}_h(s,a) & \defeq \expect\Bigg[\sum_{i=h}^H r_{i}(s_{i},\pi_{i}(s_{i})) \given s_h=s, a_h=a\Bigg], \\
        \widetilde{V}^{\pi}_h(s) & \defeq \widetilde{Q}^{\pi}_h(s,\pi_h(s)),
    \end{align*}
\else
    \begin{align*}
        \widetilde{Q}^{\pi}_h(s,a) \defeq \expect\Bigg[\sum_{i=h}^H r_{i}(s_{i},\pi_{i}(s_{i})) \given s_h=s, a_h=a\Bigg], \qquad \widetilde{V}^{\pi}_h(s) \defeq \widetilde{Q}^{\pi}_h(s,\pi_h(s)),
    \end{align*}
\fi
for  $(h,s,a)\in[H]\times\cS\times\cA$ and policy $\pi$, with $\widetilde{Q}^*_h$ and $\widetilde{V}^*_h$ being the corresponding optimal value functions. In existing literature, the sub-optimality gaps for risk-neutral \gls*{RL} are given by  
\begin{align}
\widetilde{\Delta}_h(s,a) \defeq \widetilde{V}^{*}_h(s) - \widetilde{Q}^{*}_h(s,a), \label{eq:gap_risk_neut}
\end{align}
for all $(h,s,a)\in [H]\times \cS \times \cA$ \citep{simchowitz2019non,yang2021q,he2021logarithmic}. Note that the gap  in  \cref{eq:gap_risk_neut} computes the  difference in values between the optimal action $\pi^*(s)$ and action $a$. 
As stated and proved below, it satisfies \cref{cond:bellman_diff} in the risk-neutral setting.  
\begin{fact}
\label{fact:risk-neut-gap-meet-cond}
    The sub-optimality gaps $\{\widetilde{\Delta}_h\}_{h \in [H]}$ for risk-neutral \gls*{RL} satisfy \cref{cond:bellman_diff} with $Z^{\pi}_h \defeq \widetilde{V}^{\pi}_h$. 
\end{fact}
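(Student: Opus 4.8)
The plan is to take $Z^{\pi}_h \defeq \widetilde{V}^{\pi}_h$ for every $h$ and $\pi$, so that $Z^{*}_h = \widetilde{V}^{*}_h$, the policy $\widehat{\pi}^*$ associated with $\{Z^{\pi}_h\}$ (optimal with respect to these functionals) is precisely the standard risk-neutral optimal policy $\pi^*$, and $\vdiff^{\pi}_h = \widetilde{V}^{*}_h - \widetilde{V}^{\pi}_h$. With this choice, \cref{cond:bellman_diff} reduces to the claim that, for every policy $\pi$ and every $(h,s)\in[H]\times\cS$ with $a = \pi_h(s)$,
\begin{align*}
\widetilde{V}^{*}_h(s) - \widetilde{V}^{\pi}_h(s) = \widetilde{\Delta}_h(s,a) + \E_{s'\sim\calP_h(\cdot|s,a)}\big[\widetilde{V}^{*}_{h+1}(s') - \widetilde{V}^{\pi}_{h+1}(s')\big],
\end{align*}
which I would establish by a direct computation, working backward from step $H$ with the convention $\widetilde{V}^{\pi}_{H+1}\equiv\widetilde{V}^{*}_{H+1}\equiv 0$.

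The two ingredients are the standard risk-neutral Bellman equations, obtained as the $\beta\to 0$ limit of \cref{eqn:bellman}. First, for \emph{any} action $a$ (not necessarily the greedy one), $\widetilde{Q}^{*}_h(s,a) = r_h(s,a) + \E_{s'\sim\calP_h(\cdot|s,a)}[\widetilde{V}^{*}_{h+1}(s')]$, since $\widetilde{Q}^{*}_h$ uses the optimal next-step value $\widetilde{V}^{*}_{h+1}$ regardless of the current action. Second, because $a = \pi_h(s)$, we have $\widetilde{V}^{\pi}_h(s) = \widetilde{Q}^{\pi}_h(s,a) = r_h(s,a) + \E_{s'\sim\calP_h(\cdot|s,a)}[\widetilde{V}^{\pi}_{h+1}(s')]$.

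From here I would substitute the definition $\widetilde{\Delta}_h(s,a) = \widetilde{V}^{*}_h(s) - \widetilde{Q}^{*}_h(s,a)$ into the right-hand side of the displayed identity, replace $\widetilde{Q}^{*}_h(s,a)$ by the first Bellman equation, observe that the two copies of $\E_{s'\sim\calP_h(\cdot|s,a)}[\widetilde{V}^{*}_{h+1}(s')]$ cancel, and then use the second Bellman equation to recognize the leftover expression $\widetilde{V}^{*}_h(s) - r_h(s,a) - \E_{s'\sim\calP_h(\cdot|s,a)}[\widetilde{V}^{\pi}_{h+1}(s')]$ as exactly $\widetilde{V}^{*}_h(s) - \widetilde{V}^{\pi}_h(s) = \vdiff^{\pi}_h(s)$, which is the left-hand side; the base case $h=H$ is the same cancellation with $\widetilde{\Delta}_H(s,a) = \widetilde{V}^{*}_H(s) - r_H(s,a)$. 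Since this holds for all $h$, $s$, and $\pi$, \cref{cond:bellman_diff} is verified. There is no substantive obstacle: the statement is just the one-step Bellman-telescoping (performance-difference) identity, and the only point worth stating carefully is that the Bellman equation for $\widetilde{Q}^{*}_h$ may be invoked at an \emph{arbitrary} action $a$, which is what makes the optimal next-step value appear on both sides and cancel.
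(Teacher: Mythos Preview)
Your proposal is correct and follows essentially the same approach as the paper: both set $Z^{\pi}_h = \widetilde{V}^{\pi}_h$ and verify the identity by expanding $\widetilde{Q}^{*}_h(s,a)$ and $\widetilde{V}^{\pi}_h(s)=\widetilde{Q}^{\pi}_h(s,a)$ via their respective Bellman equations so that the reward terms cancel. The only cosmetic difference is that the paper starts from $\widetilde{\Delta}_h(s,a)$ and rewrites it as $\vdiff^{\pi}_h(s)-\E_{s'}[\vdiff^{\pi}_{h+1}(s')]$, whereas you verify the rearranged identity directly; your inductive/base-case framing is unnecessary since the equality is purely algebraic at each fixed $h$, but it does no harm.
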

\begin{proof}
Recall that in the  risk-neutral setting, the Bellman equation for any policy $\pi$
is given by 
\begin{align}
    \widetilde{Q}_{h}^{\pi}(s,a')=r_{h}(s,a')+\E_{s'\sim \calP_h(\cdot|s,a')}[\widetilde{V}_{h+1}^{\pi}(s')]
    \label{eq:bellman_risk_neut}
\end{align}
for any $(h,s,a')\in [H]\times \cS \times \cA$.
We fix a tuple $(h, s,a)$ where $a = \pi_h(s)$, and let $Z^{\pi}_h \defeq \widetilde{V}^{\pi}_h$ so that  $\vdiff^{\pi}_{h} =  \widetilde{V}_{h}^{*}-\widetilde{V}_{h}^{\pi}$ in \cref{cond:bellman_diff}. 
From the definition \eqref{eq:gap_risk_neut} of $\widetilde{\Delta}_h$, we have
\ifDoubleColumn
    \begin{align*}
        \widetilde{\Delta}_h(s,a) &=  \widetilde{V}_{h}^{*}(s)-\widetilde{Q}_{h}^{*}(s,a)\\
         & =\widetilde{V}_{h}^{*}(s)-\widetilde{V}_{h}^{\pi}(s)+\widetilde{V}_{h}^{\pi}(s)-\widetilde{Q}_{h}^{*}(s,a)\\
         & =\widetilde{V}_{h}^{*}(s)-\widetilde{V}_{h}^{\pi}(s)+\widetilde{Q}_{h}^{\pi}(s,a)-\widetilde{Q}_{h}^{*}(s,a)\\
         & \labelrel={eqn:bellman-decomp}\widetilde{V}_{h}^{*}(s)-\widetilde{V}_{h}^{\pi}(s) \\
         & \quad +\left[r_{h}(s,a)+\E_{s'\sim \calP_h(\cdot|s,a)}[\widetilde{V}_{h+1}^{\pi}(s')]\right] \\
         & \quad -\left[r_{h}(s,a)+\E_{s'\sim \calP_h(\cdot|s,a)}[\widetilde{V}_{h+1}^{*}(s')]\right]\\
         & =\widetilde{V}_{h}^{*}(s)-\widetilde{V}_{h}^{\pi}(s)\\
         & \quad -\E_{s'\sim \calP_h(\cdot|s,a)}[\widetilde{V}_{h+1}^{*}(s')-\widetilde{V}_{h+1}^{\pi}(s')]\\
         & =\vdiff^{\pi}_{h}(s)-\E_{s'\sim \calP_h(\cdot|s,a)}[\vdiff^{\pi}_{h+1}(s')],
    \end{align*}
\else
    \begin{align*}
        \widetilde{\Delta}_h(s,a) &=  \widetilde{V}_{h}^{*}(s)-\widetilde{Q}_{h}^{*}(s,a)\\
         & =\widetilde{V}_{h}^{*}(s)-\widetilde{V}_{h}^{\pi}(s)+\widetilde{V}_{h}^{\pi}(s)-\widetilde{Q}_{h}^{*}(s,a)\\
         & =\widetilde{V}_{h}^{*}(s)-\widetilde{V}_{h}^{\pi}(s)+\widetilde{Q}_{h}^{\pi}(s,a)-\widetilde{Q}_{h}^{*}(s,a)\\
         & \labelrel={eqn:bellman-decomp}\widetilde{V}_{h}^{*}(s)-\widetilde{V}_{h}^{\pi}(s)+\left[r_{h}(s,a)+\E_{s'\sim \calP_h(\cdot|s,a)}[\widetilde{V}_{h+1}^{\pi}(s')]\right]-\left[r_{h}(s,a)+\E_{s'\sim \calP_h(\cdot|s,a)}[\widetilde{V}_{h+1}^{*}(s')]\right]\\
         & =\widetilde{V}_{h}^{*}(s)-\widetilde{V}_{h}^{\pi}(s)-\E_{s'\sim \calP_h(\cdot|s,a)}[\widetilde{V}_{h+1}^{*}(s')-\widetilde{V}_{h+1}^{\pi}(s')]\\
         & =\vdiff^{\pi}_{h}(s)-\E_{s'\sim \calP_h(\cdot|s,a)}[\vdiff^{\pi}_{h+1}(s')],
    \end{align*}
\fi
where step \eqref{eqn:bellman-decomp} is due to the Bellman equation
\eqref{eq:bellman_risk_neut}.
\end{proof}
Given \cref{fact:risk-neut-gap-meet-cond}, a connection between regret and sub-optimality gaps can be established: since regret in the risk-neutral setting is defined as $\widetilde{\calR}(K) \defeq 
\sumk 
(\widetilde{V}_1^* - \widetilde{V}_1^{\pi^k})(s_1^k)$, we have $\widetilde{\calR}(K) = 
\sumk 
D^{\pi^k}_1 (s^k_1)$ (with $D^{\pi^k}_h$ as implied in \cref{fact:risk-neut-gap-meet-cond}). In words, the regret can be written as the sum of $\{D^{\pi^k}_1\}$ defined for the Bellman difference condition, in which sub-optimality gaps take the role of rewards.

The  proof of \cref{fact:risk-neut-gap-meet-cond} exploits the Bellman equations under the risk-neutral setting and, in particular, the linearity of $\widetilde{Q}^{\pi}_h$ in terms of $r_h$ and $\widetilde{V}^{\pi}_{h+1}$. 
However, such linear properties are not available in risk-sensitive \gls*{RL}, as seen in \cref{eqn:bellman}, where the non-linearity is induced by the entropic risk measure 
$X\mapsto\frac{1}{\beta}\log(\expect[e^{\beta X}])$. This suggests that a simple definition of sub-optimality
gaps such as \cref{eq:gap_risk_neut} may not be appropriate, and an alternative definition is  necessary.

\subsection{Cascading Structure}
To introduce sub-optimality gaps for risk-sensitive RL, we need a few additional notations. 
We denote by $\tau$ a trajectory of length $H$, which is a series of state-action pairs $\{(s_j,a_j)\}_{j\in[H]}$, and we let $\calT$ be the set of all possible trajectories. For any trajectory $\tau \in \calT$ and $h \in [H]$, we let $\tau_h$ denote the trajectory that consists of the first $h$ elements in $\tau$, and we define the set $\calT_h \defeq \{\tau_h: \tau\in\calT\}$. Note that  $\tau_H = \tau$ and $\calT_H = \calT$. We also let $\tau_0$ be an empty trajectory and $\calT_0 \defeq \emptyset$. We further define the cumulative reward function $R$ on trajectories such that $R(\tau_0) \defeq 0$ and $R(\tau_h) \defeq \sum_{j\in[h]} r_j(s_j,a_j)$ for $h \in [H]$ and 
$\tau \in \calT$.

Motivated by the discussion in \cref{sec:bellman-diff-cond}, we propose the following definition of sub-optimality gaps for risk-sensitive \gls*{RL}:
for any step $h$ and trajectory
$\trj$,
we let 
\ifDoubleColumn
\begin{align}
    & \Delta_{h,\beta}(s,a;\trj_{h-1}) \nonumber\\ 
    & \quad \coloneqq
    \psi_{\beta}\cdot e^{\beta\cdot R(\tau_{h-1})}\cdot[e^{\beta\cdot V_{h}^{*}(s)}-e^{\beta\cdot Q_{h}^{*}(s,a)}],
\label{eq:gap}
\end{align}
\else
\begin{align}
    \Delta_{h,\beta}(s,a;\trj_{h-1})\coloneqq
    \psi_{\beta}\cdot e^{\beta\cdot R(\tau_{h-1})}\cdot[e^{\beta\cdot V_{h}^{*}(s)}-e^{\beta\cdot Q_{h}^{*}(s,a)}],
\label{eq:gap}
\end{align}
\fi
where $\psi_{\beta} \defeq 1/\beta$ for $\beta > 0$ and  $\psi_{\beta} \defeq e^{-\beta H} / 
\beta$ for $\beta < 0$. It is not hard to see that $\Delta_{h,\beta} \ge 0$ for any $\beta\neq 0$.

Let us remark on several noteworthy properties of this gap definition. First, in contrast with $\widetilde{\Delta}_h$ defined in \cref{eq:gap_risk_neut} for the risk-neutral setting, which only depends on $\pi^*$ and a single state-action pair $(s,a)$ at step $h$, the gap $\Delta_{h,\beta}$ defined in \cref{eq:gap} additionally depends on the trajectory prior to step $h$. Specifically, 
the gap consists of two components: the factor $e^{\beta\cdot R(\tau_{h-1})}$,
which is with respect to an uncontrolled 
trajectory $\tau_{h-1}$
up to step $h-1$, as well as a quantity $e^{\beta\cdot V_{h}^{*}(s)}-e^{\beta\cdot Q_{h}^{*}(s,a)}$,
which is with respect to the trajectory controlled by an optimal policy
$\pi^{*}$ starting from step $h$ and state-action pair $(s,a)$ 
(actions
in the expectation of $V^*_h$ and $Q^*_h$ follow $\pi^{*}$ after step $h$).
This means that $\Delta_{h,\beta}$ contains both uncontrolled and optimally controlled trajectories. 
Second, given a trajectory $\trj$ and
for $\beta > 0$, as $h$ increases, the multiplicative
factor $e^{\beta\cdot R(\tau_{h-1})}\in[1,e^{\beta(h-1)}]$ is non-decreasing in $h$
and the exponential value functions $e^{\beta\cdot V_{h}^{*}(s)}$, $e^{\beta\cdot Q_{h}^{*}(s,a)}\in[1,e^{\beta(H-h+1)}]$
are non-increasing in $h$; vice versa for $\beta<0$. See \cref{fig:cascaded_gap} for an illustration of this property. 
In view of their special
structure, we name these gaps as \emph{cascaded gaps}.

\ifDoubleColumn
    \begin{figure*}
    \centering
    \begin{tikzpicture}
    
    \node[text width=3cm] at (3,2) 
        {Cascaded Gaps};
    
    \filldraw[fill=blue!20] (-1,1) rectangle +(1,1/2);
    \filldraw[fill=blue!20] (0,1) rectangle +(1,1/2);
    \filldraw[fill=blue!20] (1,1) rectangle +(1,1/2);
    
    \node[text width=1cm] at (-2,1.25) 
        {$h=1$};
        
    \node[text width=6cm] at (5.5,1.25) 
        {$\psi_\beta\cdot e^{\beta\cdot R(\tau_{0})}[e^{\beta\cdot V_{1}^{*}(s)}-e^{\beta\cdot Q_{1}^*(s,a)}]$};
    \draw[-{Latex[open]}] (0.5,0.95) to [out=-15,in=195] (5.7,0.95);
    
    \filldraw[fill=red!20,rounded corners] (-1,0) rectangle +(1,1/2);
    \filldraw[fill=blue!20] (0,0) rectangle +(1,1/2);
    \filldraw[fill=blue!20] (1,0) rectangle +(1,1/2);
    
    \node[text width=1cm] at (-2,0.25) 
        {$h=2$};
    
    \node[text width=6cm] at (5.5,0.25) 
        {$\psi_\beta\cdot e^{\beta\cdot R(\tau_{1})}[e^{\beta\cdot V_{2}^{*}(s)}-e^{\beta\cdot Q_{2}^*(s,a)}]$};
    \draw[-{Latex[open]}] (-0.5,-0.05) to [out=-15,in=195] (3.7,-0.05);
    \draw[-{Latex[open]}] (1,-0.05) to [out=-15,in=195] (5.7,-0.05);
    
    \filldraw[fill=red!20,rounded corners] (-1,-1) rectangle +(1,1/2);
    \filldraw[fill=red!20,rounded corners] (0,-1) rectangle +(1,1/2);
    \filldraw[fill=blue!20] (1,-1) rectangle +(1,1/2);
    
    \node[text width=1cm] at (-2,-0.75) 
        {$h=3$};
    
    \node[text width=6cm] at (5.5,-0.75) 
        {$\psi_\beta\cdot e^{\beta\cdot R(\tau_{2})}[e^{\beta\cdot V_{3}^{*}(s)}-e^{\beta\cdot Q_{3}^*(s,a)}]$};
    \draw[-{Latex[open]}] (0,-1.05) to [out=-15,in=195] (3.7,-1.05);
    \draw[-{Latex[open]}] (1.5,-1.05) to [out=-15,in=195] (5.7,-1.05);
    
    \node[text width=3cm] at (11.5,2) 
        {Risk-Neutral Gaps};
        
    \filldraw[fill=blue!20] (8,1) rectangle +(1,1/2);
    \filldraw[fill=blue!20] (9,1) rectangle +(1,1/2);
    \filldraw[fill=blue!20] (10,1) rectangle +(1,1/2);
    
    \node[text width=2.5cm] at (13,1.25) 
        {$\widetilde V_{1}^{*}(s)-\widetilde Q_{1}^*(s,a)$};
    \draw[-{Latex[open]}] (9.5,0.95) to [out=-48,in=190] (11.7,1.25);
    
    \filldraw[fill=white!20,dashed] (8,0) rectangle +(1,1/2);
    \filldraw[fill=blue!20] (9,0) rectangle +(1,1/2);
    \filldraw[fill=blue!20] (10,0) rectangle +(1,1/2);
    
    \node[text width=2.5cm] at (13,0.25) 
        {$\widetilde V_{2}^{*}(s)-\widetilde Q_{2}^*(s,a)$};
    \draw[-{Latex[open]}] (10,-0.05) to [out=-45,in=185] (11.7,0.25);
    
    \filldraw[fill=white!20,dashed] (8,-1) rectangle +(1,1/2);
    \filldraw[fill=white!20,dashed] (9,-1) rectangle +(1,1/2);
    \filldraw[fill=blue!20] (10,-1) rectangle +(1,1/2);
    
    \node[text width=2.5cm] at (13,-0.75) 
        {$\widetilde V_{3}^{*}(s)-\widetilde Q_{3}^*(s,a)$};
    \draw[-{Latex[open]}] (10.5,-1.05) to [out=-50,in=185] (11.7,-0.75);
    
    \end{tikzpicture}
    \caption{A comparison of the cascaded gaps \eqref{eq:gap} in the risk-sensitive setting ($\beta > 0$) and risk-neutral gaps \eqref{eq:gap_risk_neut} for $H = 3$. 
    The blue blocks illustrate $\pi^*$-controlled trajectories, whereas the red blocks illustrate uncontrolled trajectories. Note that for the top cascaded gap,
    the uncontrolled trajectory part $e^{\beta \cdot R(\trj_0) } = 1$  since  $R(\tau_0) = 0$ by definition. 
    \label{fig:cascaded_gap}}
    \end{figure*}
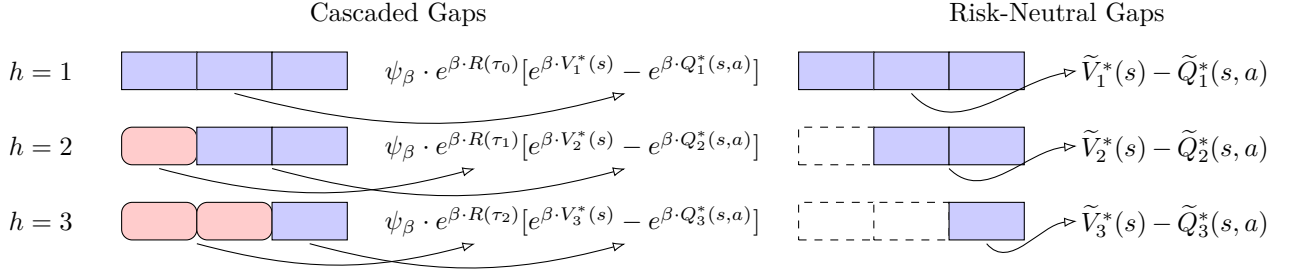
\else
    \begin{figure*}[h]
    \centering
    \begin{tikzpicture}
    
    \node[text width=3cm] at (3,2) 
        {Cascaded Gaps};
    
    \filldraw[fill=blue!20] (-1,1) rectangle +(1,1/2);
    \filldraw[fill=blue!20] (0,1) rectangle +(1,1/2);
    \filldraw[fill=blue!20] (1,1) rectangle +(1,1/2);
    
    \node[text width=1cm] at (-2,1.25) 
        {$h=1$};
        
    \node[text width=6cm] at (5.5,1.25) 
        {$\psi_\beta\cdot e^{\beta\cdot R(\tau_{0})}[e^{\beta\cdot V_{1}^{*}(s)}-e^{\beta\cdot Q_{1}^*(s,a)}]$};
    \draw[-{Latex[open]}] (0.5,0.95) to [out=-15,in=195] (5.7,0.95);
    
    \filldraw[fill=red!20,rounded corners] (-1,0) rectangle +(1,1/2);
    \filldraw[fill=blue!20] (0,0) rectangle +(1,1/2);
    \filldraw[fill=blue!20] (1,0) rectangle +(1,1/2);
    
    \node[text width=1cm] at (-2,0.25) 
        {$h=2$};
    
    \node[text width=6cm] at (5.5,0.25) 
        {$\psi_\beta\cdot e^{\beta\cdot R(\tau_{1})}[e^{\beta\cdot V_{2}^{*}(s)}-e^{\beta\cdot Q_{2}^*(s,a)}]$};
    \draw[-{Latex[open]}] (-0.5,-0.05) to [out=-15,in=195] (3.7,-0.05);
    \draw[-{Latex[open]}] (1,-0.05) to [out=-15,in=195] (5.7,-0.05);
    
    \filldraw[fill=red!20,rounded corners] (-1,-1) rectangle +(1,1/2);
    \filldraw[fill=red!20,rounded corners] (0,-1) rectangle +(1,1/2);
    \filldraw[fill=blue!20] (1,-1) rectangle +(1,1/2);
    
    \node[text width=1cm] at (-2,-0.75) 
        {$h=3$};
    
    \node[text width=6cm] at (5.5,-0.75) 
        {$\psi_\beta\cdot e^{\beta\cdot R(\tau_{2})}[e^{\beta\cdot V_{3}^{*}(s)}-e^{\beta\cdot Q_{3}^*(s,a)}]$};
    \draw[-{Latex[open]}] (0,-1.05) to [out=-15,in=195] (3.7,-1.05);
    \draw[-{Latex[open]}] (1.5,-1.05) to [out=-15,in=195] (5.7,-1.05);
    
    \node[text width=3cm] at (11.5,2) 
        {Risk-Neutral Gaps};
        
    \filldraw[fill=blue!20] (8,1) rectangle +(1,1/2);
    \filldraw[fill=blue!20] (9,1) rectangle +(1,1/2);
    \filldraw[fill=blue!20] (10,1) rectangle +(1,1/2);
    
    \node[text width=2.5cm] at (13,1.25) 
        {$\widetilde V_{1}^{*}(s)-\widetilde Q_{1}^*(s,a)$};
    \draw[-{Latex[open]}] (9.5,0.95) to [out=-48,in=190] (11.7,1.25);
    
    \filldraw[fill=white!20,dashed] (8,0) rectangle +(1,1/2);
    \filldraw[fill=blue!20] (9,0) rectangle +(1,1/2);
    \filldraw[fill=blue!20] (10,0) rectangle +(1,1/2);
    
    \node[text width=2.5cm] at (13,0.25) 
        {$\widetilde V_{2}^{*}(s)-\widetilde Q_{2}^*(s,a)$};
    \draw[-{Latex[open]}] (10,-0.05) to [out=-45,in=185] (11.7,0.25);
    
    \filldraw[fill=white!20,dashed] (8,-1) rectangle +(1,1/2);
    \filldraw[fill=white!20,dashed] (9,-1) rectangle +(1,1/2);
    \filldraw[fill=blue!20] (10,-1) rectangle +(1,1/2);
    
    \node[text width=2.5cm] at (13,-0.75) 
        {$\widetilde V_{3}^{*}(s)-\widetilde Q_{3}^*(s,a)$};
    \draw[-{Latex[open]}] (10.5,-1.05) to [out=-50,in=185] (11.7,-0.75);
    
    \end{tikzpicture}
    \caption{A comparison of the cascaded gaps \eqref{eq:gap} in the risk-sensitive setting ($\beta > 0$) and risk-neutral gaps \eqref{eq:gap_risk_neut} for $H = 3$. 
    The blue blocks illustrate $\pi^*$-controlled trajectories, whereas the red blocks illustrate uncontrolled trajectories. Note that for the top cascaded gap,
    the uncontrolled trajectory part $e^{\beta \cdot R(\trj_0) } = 1$  since  $R(\tau_0) = 0$ by definition. 
    \label{fig:cascaded_gap}}
    \end{figure*}
\fi

We will soon discuss the factor $\psi_\beta$,  another distinctive and important feature of cascaded gaps, but for now let us  show that 
the gaps
satisfy \cref{cond:bellman_diff}.

\begin{fact}
For any $\beta\ne0$, we have that $\{\Delta_{h,\beta}\}_{h\in[H]}$ satisfy \cref{cond:bellman_diff} with $Z^{\pi}_h \defeq e^{\beta (R(\trj_{h-1}) + V^{\pi}_h(s))}$.
\label{fact:risk-sen-gaps-meet-cond}
\end{fact}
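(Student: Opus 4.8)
The plan is to prove the identity directly, mirroring the computation in \cref{fact:risk-neut-gap-meet-cond} but carried out in the \emph{exponentiated} domain, where the risk-sensitive Bellman equation \eqref{eqn:bellman} becomes linear. Fix $\beta\neq 0$, a policy $\pi$, a step $h\in[H]$, a state $s$, and a trajectory prefix $\tau_{h-1}\in\calT_{h-1}$; set $a\defeq\pi_h(s)$ and let $\tau_h\in\calT_h$ be the extension of $\tau_{h-1}$ by the pair $(s,a)$, so that $R(\tau_h)=R(\tau_{h-1})+r_h(s,a)$. With $Z^\pi_h(s)\defeq \psi_\beta\, e^{\beta(R(\tau_{h-1})+V^\pi_h(s))}$ (the stated choice, up to the harmless constant $\psi_\beta$, which I expect one must carry along to land on $\Delta_{h,\beta}$ with the correct scaling and, when $\beta<0$, the correct sign so that $\pi^*$ is the maximizer), and using $V^{\pi^*}_h=V^*_h$, we get $\vdiff^\pi_h(s)=Z^*_h(s)-Z^\pi_h(s)=\psi_\beta\, e^{\beta R(\tau_{h-1})}\bigl(e^{\beta V^*_h(s)}-e^{\beta V^\pi_h(s)}\bigr)$, and $\vdiff^\pi_{H+1}\equiv 0$ because $V^\pi_{H+1}\equiv V^*_{H+1}\equiv 0$; throughout, $\vdiff^\pi_{h+1}$ is read as evaluated along the extended prefix $\tau_h$.

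Starting from the definition $\Delta_{h,\beta}(s,a;\tau_{h-1})=\psi_\beta\, e^{\beta R(\tau_{h-1})}\bigl(e^{\beta V^*_h(s)}-e^{\beta Q^*_h(s,a)}\bigr)$, I would add and subtract $e^{\beta V^\pi_h(s)}$ inside the bracket. The first piece is exactly $\vdiff^\pi_h(s)$. For the second piece, use $V^\pi_h(s)=Q^\pi_h(s,\pi_h(s))=Q^\pi_h(s,a)$ to rewrite it as $-\psi_\beta\, e^{\beta R(\tau_{h-1})}\bigl(e^{\beta Q^*_h(s,a)}-e^{\beta Q^\pi_h(s,a)}\bigr)$. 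The key step is then to exponentiate \eqref{eqn:bellman} for both $\pi$ and $\pi^*$, giving $e^{\beta Q^\pi_h(s,a)}=e^{\beta r_h(s,a)}\,\E_{s'\sim\calP_h(\cdot|s,a)}[e^{\beta V^\pi_{h+1}(s')}]$ and likewise with $\pi$ replaced by $\pi^*$ (so that $Q^{\pi^*}_h=Q^*_h$); subtracting and multiplying by $\psi_\beta\, e^{\beta R(\tau_{h-1})}$, the factor $e^{\beta r_h(s,a)}$ combines with $e^{\beta R(\tau_{h-1})}$ into $e^{\beta R(\tau_h)}$, yielding $\psi_\beta\,\E_{s'}\bigl[e^{\beta R(\tau_h)}\bigl(e^{\beta V^*_{h+1}(s')}-e^{\beta V^\pi_{h+1}(s')}\bigr)\bigr]=\E_{s'}[\vdiff^\pi_{h+1}(s')]$. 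Combining the two pieces gives $\Delta_{h,\beta}(s,a;\tau_{h-1})=\vdiff^\pi_h(s)-\E_{s'}[\vdiff^\pi_{h+1}(s')]$, which is precisely the Bellman difference condition (\cref{cond:bellman_diff}).

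I do not expect a genuine obstacle here: unlike an induction, this is a one-step algebraic identity and it parallels \cref{fact:risk-neut-gap-meet-cond}. The only point demanding care is the non-linearity of \eqref{eqn:bellman} --- the manipulation works precisely because exponentiating turns the operator $X\mapsto\frac1\beta\log\E[e^{\beta X}]$ into an ordinary expectation, and because the prefix factor $e^{\beta R(\tau_{h-1})}$ is designed so that the step-$h$ reward cancels between $R(\tau_h)$ and the Bellman equation; this is exactly the ``cascading'' mechanism, and it explains why the gap must be indexed by the uncontrolled prefix $\tau_{h-1}$ rather than by $(s,a)$ alone. A secondary bookkeeping item is the constant $\psi_\beta$: with $Z^\pi_h$ taken literally as $e^{\beta(R(\tau_{h-1})+V^\pi_h(s))}$ the same computation produces $\psi_\beta^{-1}\Delta_{h,\beta}$, so one either folds $\psi_\beta$ into $Z^\pi_h$ or notes that scaling $Z^\pi_h$ by the nonzero constant $\psi_\beta$ scales $\vdiff^\pi_h$ identically and leaves the Bellman-difference identity intact.
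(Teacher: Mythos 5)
Your proof is correct and follows essentially the same route as the paper's: add and subtract $e^{\beta V^\pi_h(s)}$, use $V^\pi_h(s)=Q^\pi_h(s,a)$ for $a=\pi_h(s)$, and apply the exponentiated Bellman equation so that $e^{\beta r_h(s,a)}$ merges with $e^{\beta R(\tau_{h-1})}$ into $e^{\beta R(\tau_h)}$, identifying the result with the Bellman difference condition. Your bookkeeping of $\psi_\beta$ is also apt---the paper's own computation silently drops this constant (equivalently folds it into $Z^\pi_h$), which is exactly the harmless rescaling you point out.
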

\begin{proof}
Let us consider an arbitrary policy $\pi$ and fix a tuple $(h,s,a)$ such that $a = \pi_h(s)$. We also fix a trajectory $\trj$ whose $h$-th element is $(s,a)$.
We have 
\ifDoubleColumn
    \begin{align*}
        & \Delta_{h,\beta}(s,a;\trj_{h-1}) \\
        &=e^{\beta\cdot R(\tau_{h-1})}[e^{\beta\cdot V_{h}^{*}(s)}-e^{\beta\cdot Q_{h}^{*}(s,a)}]\\
         & =e^{\beta\cdot R(\tau_{h-1})}[e^{\beta\cdot V_{h}^{*}(s)}-e^{\beta\cdot V_{h}^{\pi}(s)}] \\
         & \quad +e^{\beta\cdot R(\tau_{h-1})} [e^{\beta\cdot V_{h}^{\pi}(s)}-e^{\beta\cdot Q_{h}^{*}(s,a)}]\\
         & =e^{\beta\cdot R(\tau_{h-1})}[e^{\beta\cdot V_{h}^{*}(s)}-e^{\beta\cdot V_{h}^{\pi}(s)}] \\ 
         & \quad + e^{\beta\cdot R(\tau_{h-1})} [e^{\beta\cdot Q_{h}^{\pi}(s,a)}-e^{\beta\cdot Q_{h}^{*}(s,a)}]\\
         & \labelrel={eqn:bellman-decomp2}e^{\beta\cdot R(\tau_{h-1})}[e^{\beta\cdot V_{h}^{*}(s)}-e^{\beta\cdot V_{h}^{\pi}(s)}] \\
         & \quad +e^{\beta\cdot R(\tau_{h-1})}\left[e^{\beta\cdot r_{h}(s,a)}\E_{s'\sim \calP_h(\cdot|s,a)}[e^{\beta\cdot V_{h+1}^{\pi}(s')}]\right]\\
         & \quad-e^{\beta\cdot R(\tau_{h-1})}\left[e^{\beta\cdot r_{h}(s,a)}\E_{s'\sim \calP_h(\cdot|s,a)}[e^{\beta\cdot V_{h+1}^{*}(s')}]\right]\\
         & =e^{\beta\cdot R(\tau_{h-1})}[e^{\beta\cdot V_{h}^{*}(s)}-e^{\beta\cdot V_{h}^{\pi}(s)}] \\
         & \quad -\E_{s'\sim \calP_h(\cdot|s,a)}[e^{\beta\cdot R(\tau_{h})}(e^{\beta\cdot V_{h+1}^{*}(s')}-e^{\beta\cdot V_{h+1}^{\pi}(s')})]\\
         & =\vdiff^{\pi}_{h}(s)-\E_{s'\sim \calP_h(\cdot|s,a)}[\vdiff^{\pi}_{h+1}(s')],
    \end{align*}
\else
    \begin{align*}
        \Delta_{h,\beta}(s,a;\trj_{h-1}) & =e^{\beta\cdot R(\tau_{h-1})}[e^{\beta\cdot V_{h}^{*}(s)}-e^{\beta\cdot Q_{h}^{*}(s,a)}]\\
         & =e^{\beta\cdot R(\tau_{h-1})}[e^{\beta\cdot V_{h}^{*}(s)}-e^{\beta\cdot V_{h}^{\pi}(s)} +e^{\beta\cdot V_{h}^{\pi}(s)}-e^{\beta\cdot Q_{h}^{*}(s,a)}]\\
         & =e^{\beta\cdot R(\tau_{h-1})}[e^{\beta\cdot V_{h}^{*}(s)}-e^{\beta\cdot V_{h}^{\pi}(s)} +e^{\beta\cdot Q_{h}^{\pi}(s,a)}-e^{\beta\cdot Q_{h}^{*}(s,a)}]\\
         & \labelrel={eqn:bellman-decomp2}e^{\beta\cdot R(\tau_{h-1})}[e^{\beta\cdot V_{h}^{*}(s)}-e^{\beta\cdot V_{h}^{\pi}(s)}] +e^{\beta\cdot R(\tau_{h-1})}\left[e^{\beta\cdot r_{h}(s,a)}\E_{s'\sim \calP_h(\cdot|s,a)}[e^{\beta\cdot V_{h+1}^{\pi}(s')}]\right]\\
         & \quad-e^{\beta\cdot R(\tau_{h-1})}\left[e^{\beta\cdot r_{h}(s,a)}\E_{s'\sim \calP_h(\cdot|s,a)}[e^{\beta\cdot V_{h+1}^{*}(s')}]\right]\\
         & =e^{\beta\cdot R(\tau_{h-1})}[e^{\beta\cdot V_{h}^{*}(s)}-e^{\beta\cdot V_{h}^{\pi}(s)}] -\E_{s'\sim \calP_h(\cdot|s,a)}[e^{\beta\cdot R(\tau_{h})}(e^{\beta\cdot V_{h+1}^{*}(s')}-e^{\beta\cdot V_{h+1}^{\pi}(s')})]\\
         & =\vdiff^{\pi}_{h}(s)-\E_{s'\sim \calP_h(\cdot|s,a)}[\vdiff^{\pi}_{h+1}(s')],
    \end{align*}
\fi
where step \eqref{eqn:bellman-decomp2} holds by taking exponential on both sides of the Bellman equation \eqref{eqn:bellman}, and the last step holds by the definition of $\vdiff^{\pi}_{h}$ in \cref{cond:bellman_diff} and that of $Z^{\pi}_h$.
\end{proof}
The proof crucially exploits the multiplicative property of the Bellman equation \eqref{eqn:bellman} raised to exponential\footnote{The result of the transformation  is known as the exponential Bellman equation  \citep{fei2021exponential}.} as well as the cascading structure of 
$\Delta_{h,\beta}$. 
Based on $\Delta_{h,\beta}$, 
we define the \emph{minimal cascaded gap} $\Delta_{\mmin,\beta}$ as the minimum non-zero cascaded gap over tuples $(h,s,a)\in [H]\times\calS\times\calA$ and trajectories $\trj\in\calT$, \ie,
\ifDoubleColumn
\begin{align}
    & \Delta_{\mmin,\beta} \nonumber \\ 
    & \quad
    \defeq \min_{h,s,a,\trj} \{
    \Delta_{h,\beta}(s,a;\trj_{h-1}) \st \Delta_{h,\beta}(s,a;\trj_{h-1})\neq 0
\},
\label{eq:gap_min}
\end{align}
\else
\begin{align}
    \Delta_{\mmin,\beta} \defeq \min_{h,s,a,\trj} \{
    \Delta_{h,\beta}(s,a;\trj_{h-1}) \st \Delta_{h,\beta}(s,a;\trj_{h-1})\neq 0
\},
\label{eq:gap_min}
\end{align}
\fi
For any fixed $\beta$, the minimal gap serves as a measure for the difficulty of the corresponding \gls*{MDP} problem. We assume $\Delta_{\mmin,\beta} > 0$ throughout the paper to avoid triviality.

\subsection{Normalization for Risk Consistency}
One might notice that our notion of cascaded gaps is not the only gap definition that satisfies \cref{cond:bellman_diff}. Indeed, another candidate for the gap definition  would be $\Delta'_{h,\beta}(s,a) \defeq \sign(\beta) \cdot e^{\beta\cdot R(\tau_{h-1})} [e^{\beta\cdot V_{h}^{*}(s)}-e^{\beta\cdot Q_{h}^{*}(s,a)}]$, with the only difference, compared to $\Delta_{h,\beta}$, being that it replaces the normalization factor $\psi_{\beta}$ with the sign of $\beta$. It is not hard to show that this alternative definition also meets \cref{cond:bellman_diff}.

Yet, we demonstrate that the normalizer $\psi_\beta$ is crucial for the gap  $\Delta_{h,\beta}$ to showcase \textit{risk consistency}: the gap has the same order of magnitude when $|\beta|$ is fixed and recovers the risk neutral gap $\widetilde{\Delta}_h$ as $|\beta|\to 0$.
To illustrate this point
(as well as the deficiency of the alternative $\Delta'_{h,\beta}$), 
let us consider an \gls*{MDP} with arbitrary transition kernels and its reward function satisfying $r_h(s,a) = 1$ for $(h,s,a)\in [H-1]\times\cS\times\cA$, $r_H(s,a^*) = 1$ for some action $a^* \in \cA$, and $r_H(s,a) = 0$ for $\cA \setminus  \{a^*\}$. That is, this \gls*{MDP} has all its rewards equal to $1$ 
except for the last step when sub-optimal actions are taken (which yields zero rewards).

For   $\beta > 0$, the alternative gap $\Delta'_{h,\beta}$ of the above \gls*{MDP} is on the order of $e^{\beta H} - 1$ (which  grows exponentially in $\beta$), but for $\beta < 0$, its order is of $1 - e^{\beta H}$ (which is upper bounded by 1 for any $\beta < 0$). 
Therefore, 
the magnitude of $\Delta'_{h,\beta}$ is inconsistent under different signs of $\beta$. 
On the other hand, it can be verified that our definition $\Delta_{h,\beta}$ is on the same order of $(e^{|\beta| H} - 1)/|\beta|$ for \textit{all} $\beta \ne 0$, thanks to the risk-dependent normalization factor $\psi_{\beta}$. In addition, as $\beta \to 0$, we have $\Delta_{h,\beta}(s,a; \trj_{h-1}) \to V^*_h(s) - Q^*_h(s,a) = \widetilde{\Delta}_h(s,a)$ for any $(h, s, a, \trj)$ by L'Hospital's rule, thereby recovering the definition of sub-optimality gaps in the risk-neutral setting; nevertheless, $\Delta'_{h,\beta}$ tends to 0 and becomes degenerate as $\beta\to 0$.

\section{Algorithms}

    We consider two model-free algorithms for risk-sensitive \gls*{RL},  \textsc{RSVI2} (\cref{alg:rsv2}) and \textsc{RSQ2} (\cref{alg:rsq2}), both of which are proposed in \citet{fei2021exponential}. 
    
    \cref{alg:rsv2} is based on value iteration that features an optimistic estimate $Q_h$ of the action value with a bonus term.
    In episode $k$, we compute at each step $h$ the sample average 
    \ifDoubleColumn
        \begin{align}\label{eqn:rsv2-update-w}
            \begin{split}
                w_h(s,a) & \leftarrow \frac{1}{N_{h}(s,a)}\sum_{i\in[k-1]}\II\{(s_h^i,a_h^i) = (s,a)\} \\
                & \qquad \cdot e^{\beta[r_h(s,a)+V_{h+1}(s_{h+1}^i)]}
            \end{split}
        \end{align}
    \else
        \begin{align}\label{eqn:rsv2-update-w}
            \begin{split}
                w_h(s,a) & \leftarrow \frac{1}{N_{h}(s,a)}\sum_{i\in[k-1]}\II\{(s_h^i,a_h^i) = (s,a)\} \cdot e^{\beta[r_h(s,a)+V_{h+1}(s_{h+1}^i)]}
            \end{split}
        \end{align}
    \fi
    over prior episodes
    for all visited state-action pairs $(s,a)$.
    The bonus is given by 
    \begin{align}\label{eqn:rsv2-update-b}
        b_{h}(s,a)\leftarrow c\left|e^{\beta (H-h+1)}-1\right|\sqrt{\frac{S\log(2SAHK/\delta)}{N_{h}(s,a)}},
    \end{align}
    where $c > 0$ is a universal constant. It  decays in both step $h$ and the  number of visits $N_h$, thus also known as the doubly decaying bonus \citep{fei2021exponential}, and enforces the principle of \emph{Risk-Sensitive Optimism in the Face of Uncertainty} that encourages more exploration of  less frequently visited state-action pairs. 
    We then compute the optimistic estimate of the action-value function through
    \begin{align}\label{eqn:rsv2-update-q}
        Q_h(s,a)\gets\frac{1}{\beta}\log(G_h(s,a)),
    \end{align}
    where
    \ifDoubleColumn
        \begin{align*}
            &G_{h}(s,a) \nonumber \\
            &\leftarrow
            \begin{cases}
        		\min\{e^{\beta(H-h+1)},w_{h}(s,a)+b_{h}(s,a)\}, & \text{if }\beta>0;\\
        		\max\{e^{\beta(H-h+1)},w_{h}(s,a)-b_{h}(s,a)\}, & \text{if }\beta<0.
        	\end{cases}
        \end{align*}
    \else
        \begin{align*}
            G_{h}(s,a)\leftarrow
            \begin{cases}
        		\min\{e^{\beta(H-h+1)},w_{h}(s,a)+b_{h}(s,a)\}, & \text{if }\beta>0;\\
        		\max\{e^{\beta(H-h+1)},w_{h}(s,a)-b_{h}(s,a)\}, & \text{if }\beta<0.
        	\end{cases}
        \end{align*}
    \fi
    Note that for $\beta>0$, the addition of the bonus term $b_h$ represents optimism in risk-seeking decision making, whereas for $\beta<0$ the subtraction of the bonus term corresponds to  optimism in risk-averse decision making.
    Finally, in the policy execution stage, action $a_h$ is taken following the policy that maximizes $Q_h(s_h,\cdot)$ over $\calA$.
    
    \ifDoubleColumn
    \begin{algorithm}[h]
    	\begin{algorithmic}[1]
    		
    		\Require number of episodes $K$, confidence level
    		$\delta\in(0,1]$, and risk parameter $\beta\ne0$
    		
    		\State $Q_{h}(s,a),V_h(s)\leftarrow H-h+1$, $w_{h}(s,a)\leftarrow 0$, and
    		\Statex $N_h(s,a) \leftarrow 0$ for all $(h,s,a)\in[H+1]\times\calS\times\calA$
    		
    		\For{episode $k=1,\ldots,K$}
    		
    		\For{step $h=H,\ldots,1$}\label{line:rsv2_estim_value_begin}
    		
    		\For{$(s,a)\in\calS\times\calA$ such that $N_{h}(s,a)\ge1$}
    		
            \State Update $w_h(s,a)$ following \eqref{eqn:rsv2-update-w}
    		
            \State Update $b_h(s,a)$ following \eqref{eqn:rsv2-update-b}
    		
            \State Update $Q_h(s,a)$ following \eqref{eqn:rsv2-update-q}
            
    		
    		\State $V_{h}(s)\leftarrow\max_{a'\in\calA}Q_{h}(s,a')$ \label{line:rsv2_value}
    		
    		\EndFor 
    		
    		\EndFor \label{line:rsv2_estim_value_end}
    		
    		\For{step $h=1,\ldots,H$}
    		\label{line:rsv2_exec_policy_begin}
    		
    		\State Take action $a_{h}\leftarrow\argmax_{a\in\calA}Q_{h}(s_{h},a)$
    			and 
    		\Statex[3] observe $r_{h}(s_{h},a_{h})$ and $s_{h+1}$
    		
    		\State $N_{h}(s_{h},a_{h})\leftarrow N_{h}(s_{h},a_{h})+1$
    		
    		\EndFor \label{line:rsv2_exec_policy_end}
    		
    		\EndFor
    	\end{algorithmic}
    	\caption{\textsc{RSVI2} 
    	\label{alg:rsv2}}
    \end{algorithm}
    \else
    \begin{algorithm}[h]
    	\begin{algorithmic}[1]
    		
    		\Require number of episodes $K$, confidence level
    		$\delta\in(0,1]$, and risk parameter $\beta\ne0$
    		
    		\State $Q_{h}(s,a),V_h(s)\leftarrow H-h+1$, $w_{h}(s,a)\leftarrow 0$, and $N_h(s,a) \leftarrow 0$ for all $(h,s,a)\in[H+1]\times\calS\times\calA$
    		
    		\For{episode $k=1,\ldots,K$}
    		
    		\For{step $h=H,\ldots,1$}\label{line:rsv2_estim_value_begin}
    		
    		\For{ $(s,a)\in\calS\times\calA$ such that $N_{h}(s,a)\ge1$}
    		
            \State Update $w_h(s,a)$ following \eqref{eqn:rsv2-update-w}
    		
            \State Update $b_h(s,a)$ following \eqref{eqn:rsv2-update-b}
    		
            \State Update $Q_h(s,a)$ following \eqref{eqn:rsv2-update-q}
            
    		
    		\State $V_{h}(s)\leftarrow\max_{a'\in\calA}Q_{h}(s,a')$ \label{line:rsv2_value}
    		
    		\EndFor 
    		
    		\EndFor \label{line:rsv2_estim_value_end}
    		
    		\For{ step $h=1,\ldots,H$}
    		\label{line:rsv2_exec_policy_begin}
    		
    		\State Take action $a_{h}\leftarrow\argmax_{a\in\calA}Q_{h}(s_{h},a)$
    			and observe $r_{h}(s_{h},a_{h})$ and $s_{h+1}$
    		
    		\State $N_{h}(s_{h},a_{h})\leftarrow N_{h}(s_{h},a_{h})+1$
    		
    		\EndFor \label{line:rsv2_exec_policy_end}
    		
    		\EndFor
    	\end{algorithmic}
    	\caption{\textsc{RSVI2} 
    	\label{alg:rsv2}}
    \end{algorithm}
    \fi
    
    On the other hand, \cref{alg:rsq2} follows the paradigm of Q-learning. In step $h$ it  computes the (exponential) moving average estimate
    \ifDoubleColumn
        \begin{align}\label{eqn:rsq2-update-w}
            \begin{split}
                w_{h}(s_{h},a_{h}) & \leftarrow(1-\alpha_t) G_{h}(s_{h},a_{h}) \\
                & \quad +\alpha_{t}\cdot e^{\beta[r_{h}(s_{h},a_{h})+V_{h+1}(s_{h+1})]}
            \end{split}
        \end{align}
    \else
        \begin{align}\label{eqn:rsq2-update-w}
            w_{h}(s_{h},a_{h}) & \leftarrow(1-\alpha_t) G_{h}(s_{h},a_{h})+\alpha_{t}\cdot e^{\beta[r_{h}(s_{h},a_{h})+V_{h+1}(s_{h+1})]}
        \end{align}
    \fi
    through online updates instead of  batch updates as used in \cref{alg:rsv2}. However, it uses a similar doubly decaying bonus term 
    \begin{align}\label{eqn:qlearn_bonus_def}
        b_{h,t}\leftarrow c\left|e^{\beta (H-h+1)}-1\right|\sqrt{\frac{H\log(2SAHK/\delta)}{t}}
    \end{align}
    for some universal constant $c > 0$, in enforcing optimism for efficient exploration. Similarly, the optimistic estimation of the value function is set as   
    \begin{align}\label{eqn:rsq2-update-q}
        Q_h(s_h,a_h)\gets \frac{1}{\beta}\log(G_h(s_h,a_h)),
    \end{align}
    where the update on the exponential value function and truncation are given by
    \ifDoubleColumn
        \begin{align*}
            & G_{h}(s_{h},a_{h}) \nonumber \\
            &\leftarrow\begin{cases}
        		\min\{e^{\beta(H-h+1)},w_{h}(s_{h},a_{h})+\alpha_{t}b_{t}\}, & \text{if }\beta>0;\\
        		\max\{e^{\beta(H-h+1)},w_{h}(s_{h},a_{h})-\alpha_{t}b_{t}\}, & \text{if }\beta<0.
        		\end{cases}
        \end{align*}
    \else
        \begin{align*}
            & G_{h}(s_{h},a_{h}) \leftarrow\begin{cases}
        		\min\{e^{\beta(H-h+1)},w_{h}(s_{h},a_{h})+\alpha_{t}b_{t}\}, & \text{if }\beta>0;\\
        		\max\{e^{\beta(H-h+1)},w_{h}(s_{h},a_{h})-\alpha_{t}b_{t}\}, & \text{if }\beta<0.
        		\end{cases}
        \end{align*}
    \fi
    
    \ifDoubleColumn
    \begin{algorithm}[t]
    	\begin{algorithmic}[1]
    		
    		\Require number of episodes $K$, confidence level
    		$\delta\in(0,1]$, and risk parameter
    		$\beta\ne0$
    		
    		\State $Q_{h}(s,a), V_{h}(s) \leftarrow H-h+1$ if $\beta>0$; 
    		\Statex $Q_{h}(s), V_{h}(s,a) \leftarrow 0$ if $\beta<0$,
    		for all $(h,s,a)\in$ 
    		\Statex $[H+1]\times\calS\times\calA$
    		
    		\State $N_{h}(s,a)\leftarrow0$ for all $(h,s,a)\in[H]\times\calS\times\calA$, and 
    		\Statex $\alpha_t \leftarrow \frac{H+1}{H+t}$ for all $t\in\ZZ_{+}$
    		
    		\For{episode $k=1,\ldots,K$}
    		
    		\State Receive the initial state $s_{1}$
    		
    		\For{step $h=1,\ldots,H$} \label{line:qlearn_estim_value_begin}
    		
    		\State Take action $a_{h}\leftarrow\argmax_{a'\in\calA}Q_{h}(s_{h},a')$, 
    		\Statex[3] 
    		and observe $r_{h}(s_{h},a_{h})$ and $s_{h+1}$ 
    		\label{line:qlearn_exec_policy}
    		
    		\State $N_{h}(s_{h},a_{h})\leftarrow N_{h}(s_{h},a_{h})+1$
    		
    			\State $t \leftarrow N_{h}(s_{h},a_{h})$

            \State Update $w_h(s_h,a_h)$ following \eqref{eqn:rsq2-update-w}
    		
    		\State Update $b_{h,t}$ following \eqref{eqn:qlearn_bonus_def} 
            \State Update $Q_h(s_h,a_h)$ following \eqref{eqn:rsq2-update-q}
            
    		
    		\State $V_{h}(s_{h})\leftarrow\max_{a'\in\calA}Q_{h}(s_{h},a')$\label{line:qlearn_value}
    		
    		\EndFor \label{line:qlearn_estim_value_end}
    		
    		\EndFor
    		
    	\end{algorithmic}
    	\caption{\textsc{RSQ2} 
    	\label{alg:rsq2}}
    \end{algorithm}
    \else
    \begin{algorithm}[h]
    	\begin{algorithmic}[1]
    		
    		\Require number of episodes $K$, confidence level
    		$\delta\in(0,1]$, and risk parameter
    		$\beta\ne0$
    		
    		\State $Q_{h}(s,a), V_{h}(s) \leftarrow H-h+1$ if $\beta>0$; $Q_{h}(s), V_{h}(s,a) \leftarrow 0$ if $\beta<0$,
    		for all $(h,s,a)\in[H+1]\times\calS\times\calA$
    		
    		\State $N_{h}(s,a)\leftarrow0$ for all $(h,s,a)\in[H]\times\calS\times\calA$, and $\alpha_t \leftarrow \frac{H+1}{H+t}$ for all $t\in\ZZ$
    		
    		\For{episode $k=1,\ldots,K$}
    		
    		\State Receive the initial state $s_{1}$
    		
    		\For{step $h=1,\ldots,H$} \label{line:qlearn_estim_value_begin}
    		
    		\State Take action $a_{h}\leftarrow\argmax_{a'\in\calA}Q_{h}(s_{h},a')$,
    		and observe $r_{h}(s_{h},a_{h})$ and $s_{h+1}$ 
    		\label{line:qlearn_exec_policy}
    		
    		\State $t=N_{h}(s_{h},a_{h})\leftarrow N_{h}(s_{h},a_{h})+1$

            \State Update $w_h(s_h,a_h)$ following \eqref{eqn:rsq2-update-w}
            
               		\State Update $b_{h,t}$ following \eqref{eqn:qlearn_bonus_def} 
    		
            \State Update $Q_h(s_h,a_h)$ following \eqref{eqn:rsq2-update-q}
            
    		
    		\State $V_{h}(s_{h})\leftarrow\max_{a'\in\calA}Q_{h}(s_{h},a')$\label{line:qlearn_value}
    		
    		\EndFor \label{line:qlearn_estim_value_end}
    		
    		\EndFor
    		
    	\end{algorithmic}
    	\caption{\textsc{RSQ2} 
    	\label{alg:rsq2}}
    \end{algorithm}
    \fi

\section{Main Results}

In this section, we present gap-dependent regret bounds for risk-sensitive RL. We first provide regret upper bounds for Algorithms \ref{alg:rsv2} and \ref{alg:rsq2}, 
and then we present a regret lower bound that any algorithm has to incur. For notational simplicity, we write $\Delta_{\mmin} \coloneqq \Delta_{\mmin, \beta}$ and $\Delta_{h} \coloneqq \Delta_{h, \beta}$ in short by dropping their dependency on $\beta$.

\subsection{Regret Upper Bounds}

The following theorem provides the gap-dependent performance of \cref{alg:rsv2}.

\begin{theorem}\label{thm:rsv2-bound-combined}
    For any fixed $\delta \in (0,1]$, with probability at least $1-\delta$, the regret of \cref{alg:rsv2} is upper bounded by  
    \begin{align*}
        \calR(K) \lesssim \frac{(e^{|\beta| H}-1)^2 H^3S^2A}{|\beta|^2\Delta_\mmin}\log(HSAK/\delta)^2.
    \end{align*}
    Moreover, the expected regret is  upper bounded by 
    \begin{align*}
        \expect[\calR(K)] \lesssim \frac{(e^{|\beta| H}-1)^2 H^3S^2A}{|\beta|^2\Delta_\mmin}\log(HSAK)^2.
    \end{align*}
\end{theorem}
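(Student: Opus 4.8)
The plan is to follow the standard optimism-based template for gap-dependent regret, but carried out in the "exponential" domain where the risk-sensitive Bellman equation becomes multiplicative. First I would establish the usual high-probability event: with probability at least $1-\delta$, the concentration of the empirical estimates $w_h$ around $\E_{s'\sim\calP_h}[e^{\beta(r_h+V_{h+1})}]$ holds uniformly over all $(h,s,a)$ and all episodes, so that the bonus $b_h$ in \eqref{eqn:rsv2-update-b} dominates the estimation error. On this event, a backward induction on $h$ shows optimism in the exponential domain: $G_h(s,a) \ge e^{\beta Q^*_h(s,a)}$ when $\beta>0$ and $G_h(s,a)\le e^{\beta Q^*_h(s,a)}$ when $\beta<0$ (equivalently $Q_h \ge Q^*_h$ for $\beta>0$ and $Q_h\le Q^*_h$ for $\beta<0$ after dividing by $\beta$), hence $V_1(s_1^k)$ over-estimates $V_1^*(s_1^k)$ in the appropriate direction. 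This part is essentially inherited from \citet{fei2021exponential}; I would just restate the needed lemmas.

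Next I would bound the per-episode regret $(V_1^*-V_1^{\pi^k})(s_1^k)$ in terms of the cascaded gaps along the trajectory actually visited. The key identity is \cref{fact:risk-sen-gaps-meet-cond}: with $Z_h^\pi = e^{\beta(R(\trj_{h-1})+V_h^\pi(s))}$, the regret telescopes into a sum of cascaded gaps $\Delta_{h,\beta}(s_h^k,a_h^k;\trj_{h-1}^k)$ plus a martingale term, all modulated by the normalizer $\psi_\beta$ and the realized reward prefixes. Here I would combine this with the optimism bound to argue that each visited cascaded gap is controlled by (a constant times) the summed bonuses $b_h(s_h^k,a_h^k)$ along that trajectory — this is the risk-sensitive analogue of the "clipping"/surplus argument of \citet{simchowitz2019non,he2021logarithmic}, and the factor $(e^{|\beta|H}-1)/|\beta|$ enters through the size of the exponential-domain quantities and the bonus prefactor $|e^{\beta(H-h+1)}-1|$.

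Then comes the counting step, which I expect to be the main obstacle. The surplus decomposition gives $\calR(K) \lesssim \sum_k \sum_h b_h(s_h^k,a_h^k)$ (up to the martingale, handled by Azuma), and each nonzero term is at least on the order of $\Delta_\mmin$ by definition \eqref{eq:gap_min}; so whenever a state-action pair with a nonzero gap is visited, the bonus $\asymp |e^{\beta(H-h+1)}-1|\sqrt{S\log(\cdot)/N_h}$ must still exceed $\asymp\Delta_\mmin$, which can only happen for $N_h(s,a)\lesssim (e^{|\beta|H}-1)^2 S\log(HSAK/\delta)/\Delta_\mmin^2$ times. Summing this visit cap over all $(h,s,a)$ (there are $HSA$ of them), multiplying back by the bonus magnitude at the threshold, and being careful that each "charged" episode contributes $H$ steps, yields $\calR(K)\lesssim (e^{|\beta|H}-1)^2H^3S^2A\,\log(HSAK/\delta)^2/(|\beta|^2\Delta_\mmin)$. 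The delicate points are: (i) handling the cascading dependence on the reward prefix $R(\trj_{h-1})$ so the gap of a visited tuple is lower-bounded uniformly by $\Delta_\mmin$ regardless of history — this is exactly what $\psi_\beta$ and the structure in \cref{fact:risk-sen-gaps-meet-cond} buy us; (ii) making sure the clipping is done at the level of $\Delta_\mmin$ rather than the local gap, so that the final bound has $\Delta_\mmin$ (not $\sum 1/\Delta_h$) in the denominator; and (iii) converting the exponential-domain surplus back to the value-domain regret, which costs another $\psi_\beta$-type factor and accounts for the $H^3 S^2$ rather than $H^2 S$.

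Finally, the expected-regret bound follows by integrating the high-probability bound: set $\delta = 1/K$ (or $\delta\asymp 1/(HSAK)$), use the trivial bound $\calR(K)\lesssim H(e^{|\beta|H}-1)/|\beta|\cdot K$ on the low-probability event, and check that the contribution of that event is lower-order, leaving $\E[\calR(K)]\lesssim (e^{|\beta|H}-1)^2H^3S^2A\log(HSAK)^2/(|\beta|^2\Delta_\mmin)$. The same skeleton applies to \textsc{RSQ2} with the $t$-indexed bonus \eqref{eqn:qlearn_bonus_def} and the learning-rate weights $\alpha_t$, which is presumably why the authors advertise a unified framework.
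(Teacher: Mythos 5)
Your overall skeleton matches the paper's: optimism in the exponential domain imported from \citet{fei2021exponential}, a decomposition of the regret into a sum of visited cascaded gaps plus a martingale fluctuation via \cref{fact:risk-sen-gaps-meet-cond} (this is exactly \cref{lem:high-prob-bound}, proved there with Freedman rather than Azuma), and the expected-regret bound by taking $\delta\asymp 1/(HK)$ against the trivial bound on the bad event. The genuine gap is in your counting step. You assert that ``whenever a state-action pair with a nonzero gap is visited, the bonus $b_h(s_h^k,a_h^k)$ must still exceed $\asymp\Delta_\mmin$,'' and cap $N_h(s,a)$ accordingly. This implication is false as stated: the algorithm takes $a_h^k$ because $Q_h(s_h^k,a_h^k)\ge Q_h(s_h^k,\pi^*_h(s_h^k))\ge Q^*_h(s_h^k,\pi^*_h(s_h^k))$, so what is forced to exceed the gap is the \emph{overestimation} $Q_h-Q^*_h$ at the visited pair, and by the (exponential) Bellman recursion this overestimation is the local bonus \eqref{eqn:rsv2-update-b} \emph{plus} the expected downstream overestimation accumulated over steps $h+1,\dots,H$ and the stochastic transition errors. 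A suboptimal action can therefore be taken even when the local bonus is far below $\Delta_\mmin$, so the per-pair visit cap $N_h(s,a)\lesssim (e^{|\beta|H}-1)^2S\log(\cdot)/\Delta_\mmin^2$ does not follow from the argument you give; this is precisely the difficulty that the clipping/surplus machinery of \citet{simchowitz2019non,he2021logarithmic} exists to handle, and your proposal gestures at it but then resolves it with the naive local bound.

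The paper's resolution (\cref{lem:rsv2-gap-bound} via \cref{lem:clip}) is an aggregated self-bounding argument rather than a per-visit one: fix a dyadic level $\rho_n=2^n\Delta_\mmin$ and collect the $M_{h,n}$ episodes whose $\pi^k$-controlled gap at step $h$ is at least $\rho_n$; by optimism the total exponential-domain optimism gap over these episodes is at least $\rho_n|\beta|M_{h,n}$ (linear in $M_{h,n}$), while unrolling the exponential Bellman recursion over steps $h,\dots,H$ and bounding the bonus sums by Cauchy--Schwarz and pigeonhole, plus Azuma--Hoeffding for the transition-noise martingale, gives an upper bound of order $(e^{|\beta|H}-1)\sqrt{H^2S^2A\,M_{h,n}}\,\log(\cdot)$, i.e.\ only $\sqrt{M_{h,n}}$. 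Comparing the two forces $M_{h,n}\lesssim (e^{|\beta|H}-1)^2H^2S^2A\log(\cdot)^2/(4^n|\beta|^2\Delta_\mmin^2)$, and the peeling $\sum_n\rho_n M_{h,n}$ then telescopes geometrically to the $1/\Delta_\mmin$ (not $1/\Delta_\mmin^2$) dependence. If you replace your per-visit bonus-threshold claim with this aggregate lower-vs-upper bound on the summed optimism gap (carried out level-by-level over $\rho_n$, and with the reward prefix handled by the factor $e^{\beta R_{h-1}}\le e^{|\beta|(h-1)}$ absorbed into $\psi_\beta$ as in the paper), the rest of your outline goes through and reproduces the stated $H^3S^2A$ bound.
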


The proof is provided in  \cref{sec:rsv2-bound}.
The above bounds are general as they hold for any $\beta \ne 0$. They also imply results obtained under the risk-neutral setting when $|\beta| \to 0$. This is verified given that $(e^{|\beta| H} - 1)/|\beta| \to H$ and $\Delta_\mmin \to \widetilde{\Delta}_\mmin$ (where we let $\widetilde{\Delta}_\mmin$ denote the minimal sub-optimality gap for the risk-neutral setting).
It can thus be seen that when $|\beta| \to 0$, \cref{thm:rsv2-bound-combined} provides a result that matches the risk-neutral bound 
$O((H^5d^3/\widetilde\Delta_\mmin)\log(HSAK/\delta)^2)$ (where $d=SA$ under our setting) 
in \citet[Theorem 4.4]{he2021logarithmic} with respect to $K$ and $H$.

Next we provide regret guarantees for \cref{alg:rsq2}.

\begin{theorem}\label{thm:rsq2-bound-combined}
    For any fixed $\delta \in (0,1]$, with probability at least $1-\delta$, the regret of \cref{alg:rsq2} is upper bounded by 
    \begin{align*}
        \calR(K) \lesssim \frac{(e^{|\beta| H}-1)^2 H^4 SA}{|\beta|^2\Delta_\mmin}\log(HSAK/\delta).
    \end{align*}
    Moreover, the expected regret is upper bounded by 
    \begin{align*}
        \expect[\calR(K)] \lesssim \frac{(e^{|\beta| H}-1)^2 H^4 SA}{|\beta|^2\Delta_\mmin}\log(HSAK).
    \end{align*}
\end{theorem}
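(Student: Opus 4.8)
The plan is to follow the unified analysis announced for both algorithms, so the structure will parallel that of Theorem~\ref{thm:rsv2-bound-combined}, with the Q-learning-specific modifications. First I would establish, on a good event of probability at least $1-\delta$, the usual optimism property for \textsc{RSQ2}: via the exponential Bellman equation and the doubly decaying bonus \eqref{eqn:qlearn_bonus_def}, one shows by backward induction on $h$ that the estimates satisfy $e^{\beta Q_h(s,a)} \ge e^{\beta Q^*_h(s,a)}$ for $\beta>0$ (and the reversed inequality for $\beta<0$), hence $V_h(s)\ge V^*_h(s)$ in the appropriate exponential sense; this is the content already proved in \citet{fei2021exponential} and can be cited. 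Next, I would derive a per-episode regret decomposition in the \emph{exponential} domain. Writing $\delta_h^k \defeq e^{\beta R(\tau^k_{h-1})}\bigl(e^{\beta V_h(s_h^k)} - e^{\beta V_h^{\pi^k}(s_h^k)}\bigr)$ (the natural ``cascaded'' surrogate suggested by Fact~\ref{fact:risk-sen-gaps-meet-cond}), the recursive update \eqref{eqn:rsq2-update-w}, together with the learning-rate weights $\alpha_t$ and their standard telescoping identities $\sum_{i} \alpha_t^i = 1$, $\sum_t \alpha_t^i \le 1+1/H$, gives a recursion of the form $\delta_h^k \lesssim (\text{weighted past }\delta_{h+1})\ +\ \text{bonus}\ +\ \text{martingale noise}$; unrolling over $h$ yields $\sum_k \psi_\beta \delta_1^k \lesssim \sum_{k,h} \psi_\beta e^{\beta R(\tau^k_{h-1})}\bigl(\text{bonus}_h^k\bigr)\ +\ (\text{noise})$, where $\psi_\beta e^{\beta R(\tau^k_{h-1})}$ absorbs exactly into the normalization and the $\lvert e^{\beta(H-h+1)}-1\rvert$ prefactor of the bonus to produce the $(e^{\lvert\beta\rvert H}-1)/\lvert\beta\rvert$ scale seen in the theorem, while the left side lower-bounds (a constant times) the true regret $\mathcal R(K)=\sum_k (V_1^*-V_1^{\pi^k})(s_1^k)$ after a change of variables from the exponential domain back to values (using mean-value / $e^x-1 \asymp x$ type bounds on the relevant range, contributing another $(e^{\lvert\beta\rvert H}-1)/\lvert\beta\rvert$ factor and the overall square).

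The crucial gap-dependent step comes next: rather than bounding $\sum_{k,h}$ bonus crudely by $\widetilde O(\sqrt{\text{HSAT}})$, I would split each summand according to whether the cascaded gap $\Delta_{h}(s_h^k,a_h^k;\tau^k_{h-1})$ is zero or at least $\Delta_{\min}$. On ``zero-gap'' tuples the contribution to regret vanishes (since there $e^{\beta V_h^*}=e^{\beta Q_h^*}$, and optimism forces the played action to be effectively optimal in the surrogate), so only the nonzero-gap tuples matter. For those, the clipping/self-bounding trick of \citet{simchowitz2019non,yang2021q} applies: the per-step surrogate regret is at least $\Delta_{\min}$ whenever it is nonzero, so writing $x \le \frac{1}{2\Delta_{\min}}x^2 + \frac{\Delta_{\min}}{2}$-style inequalities (or the clipping operator $\mathrm{clip}[x\mid \epsilon]$) lets one trade the $\sqrt{H\log(\cdot)/t}$ bonus against $\Delta_{\min}$, turning $\sum_{(s,a,h)}\sum_{t=1}^{N_h(s,a)} \sqrt{H\log(\cdot)/t} \asymp \sum_{(s,a,h)} \sqrt{H N_h(s,a)\log(\cdot)}$ into $\sum_{(s,a,h)} \frac{H\log(\cdot)}{\Delta_{\min}} \asymp \frac{H^2 SA\log(\cdot)}{\Delta_{\min}}$, and after carrying through the two $(e^{\lvert\beta\rvert H}-1)/(\lvert\beta\rvert)$ factors plus the extra $H$ from the Q-learning learning-rate overhead (the source of $H^4$ versus $H^3S^2A$ in the \textsc{RSVI2} bound), one arrives at $\mathcal R(K) \lesssim \frac{(e^{\lvert\beta\rvert H}-1)^2 H^4 SA}{\lvert\beta\rvert^2 \Delta_{\min}}\log(HSAK/\delta)$. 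Finally, the expected-regret bound follows by the standard device of running the high-probability bound with $\delta = 1/K$ and noting the regret is trivially at most $O(HK)$ (in the relevant normalized sense, using $\psi_\beta$ and the range of value functions) on the failure event, which contributes only a lower-order additive term absorbed into the constant and replaces $\log(HSAK/\delta)$ by $\log(HSAK)$.

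The main obstacle I anticipate is handling the \emph{non-linearity} cleanly when unrolling the recursion: unlike risk-neutral Q-learning, the recursion lives in the exponential domain and the past-episode weights $\alpha_t^i$ multiply terms $e^{\beta R(\tau^i_{h-1})}$ evaluated along \emph{different} historical trajectories than the current one, so the telescoping of $\psi_\beta e^{\beta R}$ into the bonus prefactor must be done carefully — keeping track of which trajectory's cumulative reward appears where, and verifying that the ``cascading'' cancellation of Fact~\ref{fact:risk-sen-gaps-meet-cond} still goes through at the level of \emph{estimated} (optimistic) quantities rather than true value functions. A secondary technical point is confirming that the clipping trick is compatible with the cascaded-gap definition: one needs that whenever the surrogate per-step regret at $(h,s_h^k,a_h^k;\tau^k_{h-1})$ is nonzero it is bounded below by $\Delta_{\min}$ (not merely by the $h$-specific gap), which is exactly what the definition \eqref{eq:gap_min} of $\Delta_{\min}$ as the minimum over \emph{all} $(h,s,a,\tau)$ guarantees, but this must be invoked at precisely the right place in the self-bounding argument.
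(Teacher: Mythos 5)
Your plan follows the same broad template as the paper's unified framework (optimism for \textsc{RSQ2} imported from \citet{fei2021exponential}, a recursion in the exponential domain, a gap-based conversion of the $\sqrt{1/t}$ bonuses, and $\delta \asymp 1/K$ plus a trivial failure-event bound for the expectation), but as written it has two genuine gaps. First, the step ``regret $\lesssim \sum_{k,h}(\text{bonus}) + \text{noise}$, then discard zero-gap tuples and self-bound the rest against $\Delta_\mmin$'' conflates two different decompositions. In the bonus-sum decomposition the summand at a tuple where the cascaded gap vanishes is \emph{not} zero (it is an estimation-error/bonus term), so you cannot drop those tuples; and the self-bounding inequality $x \le x^2/(2\Delta_\mmin)+\Delta_\mmin/2$ applied to bonuses has no bite unless it is anchored by a matching lower bound on the same quantity. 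The paper orders the argument differently: \cref{lem:high-prob-bound} first shows $\calR(K)\lesssim \sum_{k,h}\Dh(\skh,\akh;\Tkh)$ plus a logarithmic deviation, and then \cref{lem:rsq2-gap-bound}/\cref{lem:count-bound} bound, for each dyadic level $\rho_n = 2^n\Delta_\mmin$, the \emph{number} of episodes whose realized gap lies in $[\rho_{n-1},\rho_n)$, by sandwiching the sum of optimism gaps over exactly those episodes between a bonus-accumulation upper bound (via \cref{lem:gap-bound} and the $(1+1/H)$-unrolling of the learning-rate weights) and the lower bound $\rho_{n-1}|\beta|\overline M_{h,n}$. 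Your clipping idea can in principle replace this peeling, but only if it is applied to a gap-sum (or surplus) decomposition with the lower bound invoked per counted episode, not to the raw bonus sum.

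Second, your ``martingale noise'' is left unquantified, and this is exactly where a naive treatment fails: an Azuma--Hoeffding bound over all $KH$ transition-sampling terms contributes $\widetilde\Theta(\sqrt{K})$ (times the exponential scale), which would swamp the claimed $\log K$ bound. The paper avoids this in two places: in \cref{lem:high-prob-bound} the deviation is controlled by Freedman's inequality with variance self-bounded by $B_\beta\cdot\calE(K)$ and a peeling over the magnitude of the exponential regret, so that solving the resulting quadratic leaves only an additive $\frac{e^{|\beta|H}-1}{|\beta|}H\log(\log K/\delta)$ term; and inside the per-layer count bounds the noise is summed only over the $\overline M_{h,n}$ selected episodes (or is already dominated by the bonus via \cref{lem:gap-bound}), so it scales like $\sqrt{\overline M_{h,n}}$ and is absorbed when solving for $\overline M_{h,n}$. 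Without one of these devices your high-probability statement does not go through; relatedly, your $H$-accounting ($H^2SA$ from the bonus conversion times one extra $H$) lands at $H^3SA$ rather than the claimed $H^4SA$, the missing factor coming precisely from the per-step clipping threshold $\Delta_\mmin/H$ (or, in the paper's route, from summing the per-$h$ counts over $h\in[H]$ after the unrolled recursion already costs $H\sqrt{H}$ per step).
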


The proof is provided in \cref{sec:rsq2-bound}. Note that the above regret bounds have the same factor $\frac{(e^{|\beta| H}-1)^2}{|\beta|^2\Delta_\mmin}$ as in \cref{thm:rsv2-bound-combined}; we will show in \cref{sec:reg_lower_bound} that such dependency is nearly optimal. Applying the same argument as for \cref{thm:rsv2-bound-combined}, when $|\beta| \to 0$, 
\cref{thm:rsq2-bound-combined} recovers the risk-neutral bound $O((H^6 SA/\widetilde\Delta_\mmin)\log(SAHK))$ proved in \citet[Theorem 3.1]{yang2021q} for a Q-learning algorithm. 

While the above discussion focuses on the case $|\beta| \to 0$, we also have the following result for $|\beta| \le 1/H$, which is more general.

\begin{corollary}\label{cor:risk_sens_bounds_coro}
    For any fixed $\delta \in (0,1]$, if $|\beta| \le \frac{1}{H}$, then with probability at least $1-\delta$ the regret of Algorithms \ref{alg:rsv2} and  \ref{alg:rsq2} is upper bounded by 
    \begin{align*}
       \calR(K) \lesssim  \begin{cases*}
                    \frac{H^5 S^{2}A}{\Delta_\mmin}\log(HSAK/\delta)^2, & for \cref{alg:rsv2};  \\
                     \frac{ H^6 SA}{\Delta_\mmin}\log(HSAK/\delta), & for \cref{alg:rsq2}.
                 \end{cases*}
    \end{align*}
    The expected regret of the two algorithms can be bounded similarly. 
\end{corollary}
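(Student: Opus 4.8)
The plan is to derive \cref{cor:risk_sens_bounds_coro} directly from Theorems~\ref{thm:rsv2-bound-combined} and~\ref{thm:rsq2-bound-combined} by simplifying the risk-dependent prefactor $(e^{|\beta| H}-1)^2/|\beta|^2$ under the extra hypothesis $|\beta|\le 1/H$. The only nontrivial ingredient is the elementary estimate $e^{x}-1\le (e-1)\,x$ for $x\in[0,1]$, which follows from convexity of $x\mapsto e^{x}-1$ (the function lies below the chord joining $(0,0)$ and $(1,e-1)$). Equivalently, one can check that $\beta\mapsto (e^{|\beta|H}-1)/|\beta|$ is increasing in $|\beta|$ and equals $(e-1)H$ at $|\beta|=1/H$, so it is at most $(e-1)H$ whenever $|\beta|\le 1/H$.

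Applying this with $x=|\beta|H\le 1$ gives $e^{|\beta|H}-1\le (e-1)|\beta|H$, hence
\begin{align*}
    \frac{(e^{|\beta|H}-1)^2}{|\beta|^2}\;\le\;(e-1)^2 H^2\;\lesssim\;H^2 .
\end{align*}
Substituting this into the high-probability bound of \cref{thm:rsv2-bound-combined} yields $\calR(K)\lesssim \frac{H^2\cdot H^3 S^2A}{\Delta_\mmin}\log(HSAK/\delta)^2=\frac{H^5 S^2A}{\Delta_\mmin}\log(HSAK/\delta)^2$, and substituting it into \cref{thm:rsq2-bound-combined} gives $\calR(K)\lesssim \frac{H^2\cdot H^4 SA}{\Delta_\mmin}\log(HSAK/\delta)=\frac{H^6 SA}{\Delta_\mmin}\log(HSAK/\delta)$; these are precisely the two cases in the statement. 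The expected-regret claims follow in exactly the same way from the expected-regret bounds in the two theorems, with $\log(HSAK/\delta)$ replaced by $\log(HSAK)$.

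Since every step is a direct substitution together with one convexity inequality, there is essentially no obstacle here; the only point to note is that the universal constant $(e-1)^2$ is absorbed into the $\lesssim$ notation by definition, so no further bookkeeping is needed.
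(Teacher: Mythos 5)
Your proposal is correct and follows essentially the same route as the paper: the paper also deduces the corollary from Theorems~\ref{thm:rsv2-bound-combined} and~\ref{thm:rsq2-bound-combined} via the monotonicity of $b\mapsto (e^{bH}-1)/b$ and its value $(e-1)H$ at $b=1/H$, which is exactly the equivalence you note (your convexity/chord inequality $e^{x}-1\le(e-1)x$ on $[0,1]$ gives the same bound $\lesssim H^2$ on the prefactor).
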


\begin{proof}
The result follows from Theorems \ref{thm:rsv2-bound-combined} and \ref{thm:rsq2-bound-combined} by using the fact that the function $f(b) = \frac{e^{bx} - 1}{b}$ is increasing on $(0,\infty)$ for any $x > 0$ and $f(\frac{1}{x}) = (e-1)x \lesssim x$.
\end{proof}

\cref{cor:risk_sens_bounds_coro} states that as long as $|\beta|$ is sufficiently small, the regret of both algorithms can be bounded by quantities that are polynomial in $H$ (ignoring the possible $H$-dependence of $\Delta_\mmin$).



\paragraph{Comparison with existing works on risk-sensitive RL.}
Let us place  \cref{thm:rsv2-bound-combined,thm:rsq2-bound-combined} into the context of known results for risk-sensitive RL.
For ease of notation, we define the shorthand    $\poly(H,S,A;K,1/\delta)\defeq \poly(H,S,A)\cdot\polylog(K,1/\delta)$. 
Combining our results with existing regret bounds in \citet[Theorems 1 and 2]{fei2021exponential}, we have 
\ifDoubleColumn
    \begin{align}
        \begin{split}
            \calR(K) & \lesssim \frac{e^{|\beta| H}-1}{|\beta|} \cdot 
            \poly(H,S,A; K,1/\delta) \\
            & \qquad \cdot \min \Bigg\{\frac{e^{|\beta| H}-1}{|\beta|\Delta_\mmin}
            , K^{1/2} \Bigg\}.
        \end{split}
        \label{eq:combined_regret}
    \end{align}
\else
    \begin{align}
        \begin{split}
            \calR(K) \lesssim \frac{e^{|\beta| H}-1}{|\beta|} 
            \cdot \min \Bigg\{\frac{e^{|\beta| H}-1}{|\beta|\Delta_\mmin}
            , K^{1/2}
             \Bigg\}
             \cdot 
            \poly(H,S,A; K,1/\delta).
        \end{split} \label{eq:combined_regret}
    \end{align}
\fi
We see that the gap-dependent regret bounds in \cref{thm:rsv2-bound-combined,thm:rsq2-bound-combined} trade off the polynomial dependency on $K$ in the $\widetilde{O}(K^{1/2})$-regret (proved by \citet{fei2021exponential}) with a factor of $\frac{e^{|\beta| H}-1}{|\beta|\Delta_\mmin}$.
Since $\Delta_{\mmin} \in (0, \frac{1}{|\beta|}(e^{|\beta| H} - 1)]$, we may write $\Delta_{\mmin} = \frac{\mu}{|\beta|} (e^{|\beta| H} - 1) $ for some $\mu \in (0, 1]$. Then for $\mu \asymp 1$, the above regret bound becomes 
\begin{align*}
    \calR(K) \lesssim \frac{e^{|\beta| H}-1}{|\beta|} 
    \poly(H,S,A;K,1/\delta).
\end{align*}
Under this setting, we attain an \textit{exponential} improvement in $K$ over the existing regret bounds in \cite{fei2021exponential}, reducing the polynomial dependency on $K$ (specifically the  $\widetilde{O}(K^{1/2})$ dependency) to a logarithmic one. In sharp contrast, the regret bounds of \citet{fei2021exponential} that are independent of sub-optimality gaps, \ie, 
\ifDoubleColumn
\begin{align*}
    & \calR(K) \\ 
    & \quad \le \min   \Bigg\{HK, \frac{e^{|\beta| H}-1}{|\beta|}  K^{1/2}
    \poly(H,S,A; K,1/\delta)
     \Bigg\}.
\end{align*}
\else
\begin{align*}
    \calR(K) \le \min   \Bigg\{HK, \frac{e^{|\beta| H}-1}{|\beta|}  K^{1/2}
    \poly(H,S,A; K,1/\delta)
     \Bigg\}.
\end{align*}
\fi
must incur the exponential factor $\frac{e^{|\beta| H}-1}{|\beta|}$ for gaining only a polynomial improvement in $K$.
When $\mu \lesssim \frac{\log(K)}{\sqrt{K}}$, the regret bound \eqref{eq:combined_regret} is dominated by the existing $\widetilde{O}(K^{1/2})$  bound.


Our gap-dependent regret bounds also imply an exponential improvement in terms of sample complexity. Based on an argument in \citet{jin2018q}, our \cref{thm:rsv2-bound-combined,thm:rsq2-bound-combined} imply that \cref{alg:rsv2,alg:rsq2} find $\eps$-optimal policies in the PAC setting with $\widetilde \Omega\big(\frac{(e^{|\beta| H}-1)^2}{|\beta|^2 \Delta_\mmin \eps} \poly(H,S,A)\big)$ samples for any $\eps > 0$. On the other hand, the regret bounds in \citet{fei2021exponential} suggest sample complexity bounds on the order of  $\widetilde\Omega\big(\frac{(e^{|\beta| H}-1)^2}{|\beta|^2  \eps^2} \poly(H,S,A)\big)$.
Hence, when $\eps = \widetilde O(\frac{|\beta|\Delta_\mmin}{e^{|\beta| H}-1})$, our results translate to an exponential improvement in $|\beta|$ and $H$ in sample complexity bounds compared to those of \citet{fei2021exponential}.

\subsection{Regret Lower Bounds \label{sec:reg_lower_bound}}

Below we present regret lower bounds that complement the upper bounds in \cref{thm:rsv2-bound-combined,thm:rsq2-bound-combined}.

\begin{theorem}\label{thm:lower-combined}
    If $|\beta|(H-1) \geq \log 4$, $H \ge 2$, 
    $\Delta_\mmin \leq \frac{1}{8|\beta|}$, and $K \asymp \frac{1}{|\beta|^2\Delta_\mmin^2} (e^{|\beta|(H-1)}-1)$, then for any algorithm it holds that 
    \begin{align*}
        \expect[\calR(K)] \gtrsim \frac{e^{|\beta|(H-1)}-1}{|\beta|^2\Delta_\mmin};
    \end{align*}
    if $|\beta|(H-1)\leq \log H$, $H\geq 8$, 
    $\Delta_\mmin \leq \frac{1}{4|\beta|H}(e^{|\beta|(H-1)}-1)$, and $K \asymp \frac{1}{H|\beta|^2\Delta_\mmin^2} (e^{|\beta|(H-1)}-1)^2$, then for any algorithm it holds that 
    \begin{align*}
        \expect[\calR(K)] \gtrsim \frac{H}{\Delta_\mmin}.
    \end{align*}
\end{theorem}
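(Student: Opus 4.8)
The plan is to prove both lower bounds by the standard argument for gap-dependent regret lower bounds: exhibit a small family of hard MDPs differing only in the identity of one ``good'' action, reduce regret minimization to identifying that action, and show that within the prescribed horizon $K$ no algorithm can do so reliably, hence must play a sub-optimal action in a constant fraction of the episodes. Both regimes will come from a single construction, analyzed under two tunings of an auxiliary parameter.

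\textbf{The hard instances.} From the fixed start state $s_1$ the agent picks an action at step $1$ (reward $0$), which selects the parameter $p_a\in[0,1]$ of a Bernoulli transition into either a ``rewarding'' state (reward $1$ at each remaining step, absorbing) or a ``null'' state (reward $0$, absorbing); one distinguished action $a^\dagger$ has $p_{a^\dagger}=q+\epsilon$ and every other action has $p_a=q$, for a baseline $q$ and perturbation $\epsilon>0$ to be tuned. The transitions out of $s_1$ at steps $h\ge2$ --- never exercised by any policy --- are made action-independent, so that, using the exponential Bellman equation \eqref{eqn:bellman}, the \emph{only} tuple with a nonzero cascaded gap is $(h,s,a)=(1,s_1,a)$ with $a\ne a^\dagger$; there \eqref{eq:gap} evaluates to $\psi_\beta\,\epsilon\,(e^{\beta(H-1)}-1)$ since $R(\tau_0)=0$, whence the minimal cascaded gap \eqref{eq:gap_min} is $\Delta_{\mmin}=\psi_\beta\,\epsilon\,(e^{\beta(H-1)}-1)\asymp\epsilon\,(e^{|\beta|(H-1)}-1)/|\beta|$, the sign of $\beta$ being absorbed by $\psi_\beta$ (this is precisely the risk-consistency property of \eqref{eq:gap} discussed in \cref{sec:cascaded-gaps}). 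Inverting gives $\epsilon\asymp|\beta|\Delta_{\mmin}/(e^{|\beta|(H-1)}-1)$; I will take $q\asymp1/(e^{|\beta|(H-1)}-1)$ in the first regime and $q\asymp1/H$ in the second, and the quantitative hypotheses (e.g., $|\beta|(H-1)\ge\log4$ and $\Delta_{\mmin}\le\tfrac1{8|\beta|}$ in the first case, the extra $H$-factor in $\Delta_{\mmin}\le\tfrac1{4|\beta|H}(e^{|\beta|(H-1)}-1)$ in the second) are exactly what make $q$ and $\epsilon$ admissible with $\epsilon\lesssim q\le\tfrac13$.

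\textbf{The information argument.} Fix any algorithm, and compare the instance with $a^\dagger=1$ against the one with $a^\dagger=2$; these differ only in the Bernoulli parameters of actions $1$ and $2$ at $s_1$, and the per-episode observation at $s_1$ is, up to a deterministic transformation, one draw from $\mathrm{Ber}(p_a)$. A standard divergence-decomposition/change-of-measure argument then bounds the KL divergence between the induced $K$-episode trajectory laws by a quantity of order $K\cdot\mathrm{KL}(\mathrm{Ber}(q)\,\|\,\mathrm{Ber}(q+\epsilon))\asymp K\epsilon^2/q$. Our choices make $q/\epsilon^2\asymp(e^{|\beta|(H-1)}-1)/(|\beta|^2\Delta_{\mmin}^2)$ in the first regime and $\asymp(e^{|\beta|(H-1)}-1)^2/(H|\beta|^2\Delta_{\mmin}^2)$ in the second --- exactly the assumed order of $K$ --- so this divergence is $O(1)$. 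By Pinsker's inequality the event ``action $1$ is played at $s_1$ in at least $K/2$ episodes'' has probabilities that differ by a small constant between the two instances, so on one of them the algorithm plays a sub-optimal action at $s_1$ in $\Omega(K)$ episodes in expectation; since each such episode costs exactly $\Delta_{\mmin}$, we get $\E[\calR(K)]\gtrsim\Delta_{\mmin}K$. Substituting the two values of $K$ yields $\E[\calR(K)]\gtrsim(e^{|\beta|(H-1)}-1)/(|\beta|^2\Delta_{\mmin})$ in the first regime and $\E[\calR(K)]\gtrsim(e^{|\beta|(H-1)}-1)^2/(H|\beta|^2\Delta_{\mmin})\gtrsim H/\Delta_{\mmin}$ in the second (using $e^{|\beta|(H-1)}-1\ge|\beta|(H-1)\gtrsim|\beta|H$), as claimed.

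\textbf{The main obstacle.} The crux --- and where the entropic objective really bites --- is balancing gap against information. In the risk-neutral case regret is linear in the state--action occupancy, so a transition-probability perturbation $\epsilon$ changes value and per-episode KL by comparable powers of $\epsilon$, making the tradeoff routine. Here the value $\tfrac1\beta\log\E[e^{\beta R}]$ is non-linear: the perturbation enters it multiplied by the potentially enormous factor $e^{|\beta|(H-1)}$ yet enters the KL only quadratically, so the gap scales like $\epsilon\,e^{|\beta|(H-1)}$ while the information scales like $\epsilon^2/q$. Matching these to the two claimed rates forces the regime-dependent choice of the baseline probability $q$, and one must verify in tandem that (i) $q$ and $\epsilon$ stay legitimate probabilities with $\epsilon\lesssim q$ --- this is where the hypotheses on $\Delta_{\mmin}$, $H$, $\beta$ are consumed --- and (ii) the \emph{cascaded} gap \eqref{eq:gap} of the constructed MDP, in which both the uncontrolled factor $e^{\beta R(\tau_{h-1})}$ and the controlled factor $e^{\beta V_h^*}-e^{\beta Q_h^*}$ shift with the perturbation (and spurious small gaps at unreachable $(h,s_1,a)$ must be suppressed by the action-independent extension), indeed collapses to the single target value $\Delta_{\mmin}$, with a magnitude that is the same up to a $|\beta|$-only factor for $\beta<0$ as for $\beta>0$ --- the very role the normalizer $\psi_\beta$ was introduced to play.
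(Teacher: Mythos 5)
Your proposal is essentially the paper's own proof: the paper likewise builds two two-armed bandit instances embedded in an $H$-step MDP (reward $H-1$ versus $0$ at an absorbing state), sets the baseline transition probability to $e^{-|\beta|(H-1)}$ in the first regime and $1/H$ in the second, calibrates the perturbation so that $\Delta_\mmin \asymp \xi\,(e^{|\beta|(H-1)}-1)/|\beta|$, takes $K\asymp p_2(1-p_2)/\xi^2$, and concludes via a divergence-decomposition two-point argument. The only minor differences are that the paper uses the Bretagnolle--Huber inequality rather than Pinsker (so the $O(1)$ KL need not be a small constant; with Pinsker you would shrink $K$ by a constant factor, which the $\asymp$ hypothesis permits), and that for $\beta<0$ the per-episode loss is the semi-normalized gap $\xoverline\Delta_\mmin$ rather than ``exactly $\Delta_\mmin$,'' a conversion the paper handles explicitly via the normalizer $\bar\psi_\beta$ — the same subtlety you flag at the end.
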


We provide the proof in  \cref{sec:proof-lower}.
When $\beta$ is sufficiently large, \cref{thm:lower-combined} provides a lower bound with exponential dependence on $|\beta|$ and $H$, thus nearly matching
the upper bound in \cref{thm:rsv2-bound-combined} in terms of the exponential dependency 
and up to a logarithmic factor in $K$. Compared with the upper bound, the lower bound falls short of a term of  $e^{|\beta|(H-1)}-1$ as well as polynomial factors in other parameters; it is not yet clear whether there exists a fundamental gap between the two bounds, and we leave the investigation for future work. 

On the other hand, 
when $|\beta|$ is sufficiently small, 
we achieve a lower bound that depends only polynomially on $H$ and is independent of $\beta$ (beyond  potential dependence in  $\Delta_\mmin$). Consequently, this result nearly matches that of \cref{cor:risk_sens_bounds_coro}. 
Compared with existing risk-neutral lower bound of \citet[Theorem 5.4]{he2021logarithmic}, our result
specializes in the tabular setting and holds in the regime of non-vanishing $\beta$, while theirs adapts to linear function approximation but only in the risk-neutral regime ($|\beta| \to 0$).

To the best of our knowledge, this work presents the first non-asymptotic and gap-dependent regret bounds for risk-sensitive  \gls*{RL} based on the entropic risk measure.

\subsection{A Unified Framework \label{sec:unified}}
In existing literature, algorithms based on value iteration and $Q$-learning are often analyzed in independent ways due to their distinctive characteristics and update mechanism. We instead employ a unified  framework for analyzing the regret of \cref{alg:rsv2,alg:rsq2}. To that end, we focus on the high-probability regret bounds, and the expectation bound can be obtained as a by-product of the analysis. 
For each $k \in [K]$, let us define $\trj^k \defeq \{(s^k_h, a^k_h)\}_{h \in [H]}$ to be the  sample trajectory in episode $k$. 
Thanks to \cref{fact:risk-sen-gaps-meet-cond} that the cascaded gaps $\{\Delta_{h}\}$ satisfy \cref{cond:bellman_diff}, we may derive the following lemma on regret using a standard concentration result.

\begin{lemma}\label{lem:high-prob-bound}
    For Algorithms \ref{alg:rsv2} and \ref{alg:rsq2} and any fixed $\delta \in (0,1]$, 
    it holds with probability at least $1-\delta/2$ that 
    \ifDoubleColumn
        \begin{align*}
            \calR(K) & \lesssim \sumk\sumh \Dh(\skh,\akh;\Tkh) \\
            & \quad + \frac{e^{|\beta| H}-1}{|\beta|} H\log(\log K/\delta).
        \end{align*}
    \else
        \begin{align*}
            \calR(K) & \lesssim \sumk\sumh \Dh(\skh,\akh;\Tkh) + \frac{e^{|\beta| H}-1}{|\beta|} H\log(\log K/\delta).
        \end{align*}
    \fi
\end{lemma}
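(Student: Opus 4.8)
\textbf{Proof plan for \cref{lem:high-prob-bound}.}

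The plan is to connect the regret $\calR(K) = \sum_{k}(V_1^* - V_1^{\pi^k})(s_1^k)$ to the cascaded gaps via \cref{fact:risk-sen-gaps-meet-cond} and the Bellman difference condition, and then control the resulting martingale difference sum with a concentration inequality. First, I would apply \cref{fact:risk-sen-gaps-meet-cond} with $\pi = \pi^k$: the functional $\vdiff_h^{\pi^k} = Z_h^* - Z_h^{\pi^k}$ with $Z_h^{\pi}(s) \defeq e^{\beta(R(\trj_{h-1}) + V_h^\pi(s))}$ satisfies $\vdiff_h^{\pi^k}(s) = \Delta_{h}(s, \pi_h^k(s); \trj_{h-1}) + \E_{s' \sim \calP_h(\cdot|s,\pi_h^k(s))}[\vdiff_{h+1}^{\pi^k}(s')]$. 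Unrolling this Bellman-type recursion along the random trajectory $\trj^k$ from $h=1$ to $H$ (with $\vdiff_{H+1}^{\pi^k} \equiv 0$) gives
\begin{align*}
    \vdiff_1^{\pi^k}(s_1^k) = \sumh \Dh(\skh, \akh; \Tkh) + \sumh \Big( \E_{s' \sim \calP_h(\cdot|\skh,\akh)}[\vdiff_{h+1}^{\pi^k}(s')] - \vdiff_{h+1}^{\pi^k}(s_{h+1}^k) \Big),
\end{align*}
where the second sum is a sum of martingale differences with respect to the filtration generated by the history up to step $h$ of episode $k$. Note that $\vdiff_1^{\pi^k}(s_1^k) = e^{\beta \cdot V_1^*(s_1^k)} - e^{\beta \cdot V_1^{\pi^k}(s_1^k)}$ since $R(\trj_0) = 0$, and by the mean value theorem together with $\psi_\beta$ this quantity is comparable to $V_1^*(s_1^k) - V_1^{\pi^k}(s_1^k)$ up to the normalization already baked into $\Delta_h$; more precisely one shows $(V_1^* - V_1^{\pi^k})(s_1^k) \lesssim \psi_\beta \cdot \vdiff_1^{\pi^k}(s_1^k)$ using that $|\beta V_h^\pi| \le |\beta| H$ and an elementary bound on $e^x - e^y$ versus $x - y$ on a bounded interval.

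Next I would sum over $k \in [K]$. The leading term becomes $\sum_k \sum_h \Delta_h(\skh, \akh; \Tkh)$ as desired (after absorbing the $\psi_\beta$ normalizer, which is consistent with the definition of $\Delta_h$ in \eqref{eq:gap}). The remaining term is $\sum_k \sum_h \xi_h^k$ where $\xi_h^k \defeq \psi_\beta\big(\E[\vdiff_{h+1}^{\pi^k}(s_{h+1}^k) \mid \mathcal{F}_h^k] - \vdiff_{h+1}^{\pi^k}(s_{h+1}^k)\big)$ forms a martingale difference sequence over the $KH$ steps. Each $\vdiff_{h+1}^{\pi^k}$ lies in an interval of width $\asymp e^{|\beta| H} - 1$ (since $Z_{h+1}^*$ and $Z_{h+1}^{\pi^k}$ are both exponentials of bounded quantities), so after normalization by $\psi_\beta$ each $\xi_h^k$ is bounded by $O\big(\frac{e^{|\beta| H}-1}{|\beta|}\big)$. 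I would then invoke a Freedman-type or Azuma-type concentration inequality (the "standard concentration result" alluded to in the statement) to get, with probability at least $1 - \delta/2$,
\begin{align*}
    \sumk \sumh \xi_h^k \lesssim \frac{e^{|\beta| H}-1}{|\beta|} H \log(\log K / \delta),
\end{align*}
where the $\log \log K$ factor rather than $\log K$ arises from a stopping-time / peeling argument over the possible values of the sum (a union bound over dyadic scales of $K$), which is the technique used in the risk-neutral gap-dependent literature (e.g. \citet{simchowitz2019non}) to avoid an extra $\sqrt{\log K}$. Combining the two pieces yields exactly the claimed bound.

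The main obstacle I anticipate is the passage from $\vdiff_1^{\pi^k}(s_1^k)$ (a difference of exponentials) back to the value gap $(V_1^* - V_1^{\pi^k})(s_1^k)$ and, symmetrically, making sure the normalization factor $\psi_\beta$ in the definition \eqref{eq:gap} of $\Delta_h$ is handled consistently so that the leading term comes out as $\sum_k \sum_h \Delta_h$ with no stray $\beta$-dependent prefactor. For $\beta > 0$ one uses $e^{\beta x} - e^{\beta y} \ge \beta(x - y)$ when $x \ge y$ (convexity), and for $\beta < 0$ one needs the compensating $e^{-\beta H}$ inside $\psi_\beta$ to keep the constants on the right order — this is precisely the risk-consistency role of $\psi_\beta$ discussed after \eqref{eq:gap}. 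The martingale concentration itself is routine given the uniform bound on the increments; the only subtlety there is choosing a concentration bound whose failure probability aggregates to $\delta/2$ while producing $\log \log K$ rather than $\log K$, which I would take as a black-box lemma. The reduction to the martingale decomposition via \cref{cond:bellman_diff} is the conceptually clean part and should go through directly once \cref{fact:risk-sen-gaps-meet-cond} is in hand.
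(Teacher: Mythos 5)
Your decomposition and your conversion from regret to exponential regret are in line with the paper: the unrolled Bellman-difference recursion is the content of the paper's \cref{lem:exp-regret-decomposition} (stated at the episode level by conditioning on $\calF_k$, rather than your step-level version), and your mean-value-theorem step is the paper's \cref{lem:regret-bound}, giving $\calR(K)\le\bar\psi_\beta\,\calE(K)$. The genuine gap is in the concentration step. You claim that, because each increment $\xi_h^k$ is bounded by $B_\beta'\defeq (e^{|\beta|H}-1)/|\beta|$, a Freedman/Azuma bound plus peeling yields $\sumk\sumh\xi_h^k\lesssim B_\beta' H\log(\log K/\delta)$, with the peeling serving only to replace $\log K$ by $\log\log K$, and you describe the concentration as ``routine given the uniform bound on the increments.'' That is false for a generic bounded-increment martingale: with $KH$ increments of size up to $B_\beta'$, Azuma only gives deviations of order $B_\beta'\sqrt{KH\log(1/\delta)}$, which would reintroduce a $\sqrt{K}$ term and destroy the logarithmic-in-$K$ bound. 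The deviation term in the lemma is free of $K$ (up to $\log\log K$) only because of a self-bounding variance argument that your plan omits.

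Concretely, the paper works with episode-level differences $Z_k\defeq\sumh\bDh(\skh,\akh;\Tkh)-\frac{1}{\beta}[e^{\beta\cdot V_1^*}-e^{\beta\cdot V_1^{\pi^k}}](s_1^k)$, bounds $|Z_k|\le B_\beta\defeq H(e^{|\beta|H}-1)/|\beta|$, and---using non-negativity of the gaps together with \cref{lem:exp-regret-decomposition}---bounds the total conditional variance by $\chi\le B_\beta\cdot\calE(K)$, i.e.\ by $B_\beta$ times the very quantity being controlled. Freedman's inequality (\cref{lem:freedman}) is then applied on each of $\lceil\log K\rceil$ dyadic layers for the unknown value of $\calE(K)$; this union bound is where the $\log(\lceil\log K\rceil/\delta)$ factor comes from, and the peeling is over the value of $\calE(K)$, not over $K$---it is what makes the variance bound usable, not merely a $\log\to\log\log$ refinement. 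The result is the self-bounding inequality $\calE(K)\le\sumk\sumh\bDh(\skh,\akh;\Tkh)+2\sqrt{\calE(K)\,B_\beta\varsigma}+O(B_\beta\varsigma)$ with $\varsigma=\log(2\lceil\log K\rceil/\delta)$, which is solved for $\calE(K)$ (absorbing the square-root term) to give $\calE(K)\lesssim\sumk\sumh\bDh(\skh,\akh;\Tkh)+B_\beta\varsigma$; multiplying by $\bar\psi_\beta$ and invoking \cref{lem:regret-bound} yields the lemma. Your step-level decomposition could be made to work with the same three ingredients (conditional variance bounded by $B_\beta'$ times the conditional expectation of the remaining gaps, peeling over the unknown regret, and solving the resulting quadratic inequality), but without them the bound you assert for the martingale term does not hold.
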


The proof is given in \cref{sec:proof_high_prob_bound}. In the above lemma, the regret $\calR(K)$ plays a role similar to the expectation of the random variable $\sum_{k,h} \Dh(\skh,\akh;\Tkh)$, while the second term on the RHS 
can be interpreted as the deviation of the random variable from its expectation. With \cref{lem:high-prob-bound} in place, it remains to bound the first term of RHS for both 
algorithms.
We do so in the next two lemmas.

\begin{lemma}\label{lem:rsv2-gap-bound}
    For \cref{alg:rsv2} and any $\delta \in (0,1]$, it holds with probability at least $1-\delta/2$ that 
    \ifDoubleColumn
        \begin{align*}
            & \sumk\sumh \Dh(\skh,\akh;\Tkh) \\
            & \qquad \lesssim \frac{(e^{|\beta| H}-1)^2 H^3S^2A}{|\beta|^2\Delta_\mmin}\log(2HSAK/\delta)^2.
        \end{align*}
    \else
        \begin{align*}
            \sumk\sumh \Dh(\skh,\akh;\Tkh) \lesssim \frac{(e^{|\beta| H}-1)^2 H^3S^2A}{|\beta|^2\Delta_\mmin}\log(2HSAK/\delta)^2.
        \end{align*}
    \fi
\end{lemma}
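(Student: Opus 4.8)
The plan is to combine optimism with a clipping argument in the style of \citet{simchowitz2019non,he2021logarithmic}, but carried out entirely in the ``exponential'' (multiplicative) domain so that it is compatible with the non-linear Bellman equation \eqref{eqn:bellman} and the cascading structure of $\Delta_{h,\beta}$ in \eqref{eq:gap}. Throughout I would condition on the standard high-probability ``good event'' for \textsc{RSVI2} (as established in the analysis of \citet{fei2021exponential}): for all $k$, $h$, and visited pairs $(s,a)$, the estimates are optimistic, i.e.\ $V_h \ge V^*_h$ (equivalently $\sign(\beta)[e^{\beta Q_h(s,a)} - e^{\beta Q^*_h(s,a)}] \ge 0$), and the sample average concentrates, $\bigl|w_h(s,a) - e^{\beta r_h(s,a)}\E_{\calP_h(\cdot|s,a)}[e^{\beta V_{h+1}(s')}]\bigr| \le b_h(s,a)$, with $b_h$ as in \eqref{eqn:rsv2-update-b}; this event is then combined with a Freedman-type event below, splitting the probability budget as $\delta/4$ each.

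First I would reduce each cascaded gap along the sample trajectory to a one-step ``exponential surplus.'' Since $a^k_h$ maximizes $Q_h(s^k_h,\cdot)$ we have $e^{\beta V_h(s^k_h)} = G_h(s^k_h,a^k_h)$, so optimism together with the definition \eqref{eq:gap} gives
\[
  \Delta_h(s^k_h,a^k_h;\tau^k_{h-1}) \;\le\; \psi_{\beta}\, e^{\beta R(\tau^k_{h-1})}\bigl[G_h(s^k_h,a^k_h) - e^{\beta Q^*_h(s^k_h,a^k_h)}\bigr] \;=:\; U^k_h \;\ge\; 0 .
\]
Plugging in the truncated update \eqref{eqn:rsv2-update-q} and \eqref{eqn:rsv2-update-w}, the concentration bound, the exponential Bellman identity $e^{\beta Q^*_h(s,a)} = e^{\beta r_h(s,a)}\E_{\calP_h(\cdot|s,a)}[e^{\beta V^*_{h+1}(s')}]$, and $e^{\beta R(\tau^k_{h-1})}e^{\beta r_h(s^k_h,a^k_h)} = e^{\beta R(\tau^k_h)}$, yields the recursion --- valid for both signs of $\beta$, precisely because the normalizer $\psi_{\beta}$ makes the relevant differences non-negative ---
\[
  U^k_h \;\le\; \E_{s'\sim\calP_h(\cdot|s^k_h,a^k_h)}\!\bigl[\chi^k_{h+1}(s')\bigr] + \widetilde b^k_h , \qquad \chi^k_h(s^k_h) \le U^k_h ,
\]
where $\chi^k_h(s) \defeq \psi_{\beta}\, e^{\beta R(\tau^k_{h-1})}[e^{\beta V_h(s)} - e^{\beta V^*_h(s)}] \ge 0$ is the weighted optimism surplus and $\widetilde b^k_h \defeq 2\psi_{\beta}\, e^{\beta R(\tau^k_{h-1})}b_h(s^k_h,a^k_h) \ge 0$; the same inequality holds at every state along the $\pi^k$-controlled continuation by greediness, so iterating it (with $\chi^k_{H+1}\equiv 0$) bounds $\chi^k_h(s^k_h)$ by the expected sum of the weighted bonuses $\widetilde b^k_{\cdot}$ over the continuation, plus a bounded martingale remainder.

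Next I would run the clipping argument. The key quantization property from \eqref{eq:gap_min} is that every $\Delta_h(s^k_h,a^k_h;\tau^k_{h-1})$ is either $0$ or at least $\Delta_\mmin$. Combining this with the fact that $\Delta_h(s^k_h,a^k_h;\tau^k_{h-1}) \le U^k_h$ is bounded (along the sample path, up to a martingale difference) by a sum of at most $H$ weighted bonuses $\widetilde b^k_{\cdot}$, a nonzero gap forces one of those $\lesssim H$ bonuses to exceed $\Delta_\mmin/(cH)$; splitting each such bonus sum into its ``large'' part ($\ge \Delta_\mmin/(cH)$) and its ``small'' part (total $\le \Delta_\mmin/2$) gives $\Delta_h(s^k_h,a^k_h;\tau^k_{h-1}) \lesssim \sum_{h'\ge h}\mathrm{clip}\!\bigl[\widetilde b^k_{h'}\mid \Delta_\mmin/(cH)\bigr]$, where $\mathrm{clip}[x\mid \epsilon]\defeq x\,\II\{x\ge \epsilon\}$. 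Summing over $h$ and interchanging the order of summation ($\sum_{h}\sum_{h'\ge h}\le H\sum_{h'}$ --- this is where one factor of $H$ enters) gives the per-episode bound $\sum_h \Delta_h(s^k_h,a^k_h;\tau^k_{h-1}) \lesssim H\sum_{h}\mathrm{clip}[\widetilde b^k_h\mid \Delta_\mmin/(cH)]$, plus a martingale remainder; summing over $k$ and controlling the remainder by a self-bounding Freedman argument (its conditional variance is at most $\tfrac{e^{|\beta|H}-1}{|\beta|}$ times the clipped-bonus sum itself, hence it is absorbed up to a lower-order additive term with no $\sqrt{K}$ surviving) leaves $H\sum_{k,h}\mathrm{clip}[\widetilde b^k_h\mid \Delta_\mmin/(cH)]$ to bound.

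Finally, using $\mathrm{clip}[x\mid \epsilon]\le x^2/\epsilon$, this is at most $\tfrac{cH^2}{\Delta_\mmin}\sum_{k,h}(\widetilde b^k_h)^2$. A short computation with $\psi_{\beta}=1/\beta$ for $\beta>0$ and $\psi_{\beta}=e^{-\beta H}/\beta$ for $\beta<0$ shows that, uniformly in $\sign(\beta)$, one has $\widetilde b^k_h \le \tfrac{2c(e^{|\beta|H}-1)}{|\beta|}\sqrt{S\log(2SAHK/\delta)/N_h(s^k_h,a^k_h)}$ --- this sign-uniform bound is exactly where the normalizer $\psi_{\beta}$ earns its keep --- so $(\widetilde b^k_h)^2 \lesssim \tfrac{(e^{|\beta|H}-1)^2}{|\beta|^2}\cdot\tfrac{S\log(2SAHK/\delta)}{N_h(s^k_h,a^k_h)}$. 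The standard pigeonhole/harmonic bound $\sum_k 1/N_h(s^k_h,a^k_h)\le \sum_{(s,a)}\sum_{n=1}^{N^K_h(s,a)}1/n \le SA(1+\log K)$, summed over $h\in[H]$, then gives $\sum_{k,h}(\widetilde b^k_h)^2 \lesssim \tfrac{(e^{|\beta|H}-1)^2 H S^2 A}{|\beta|^2}\log(2SAHK/\delta)^2$, and multiplying by $\tfrac{cH^2}{\Delta_\mmin}$ and folding in the lower-order terms produces the claimed bound. The main obstacle will be the clipping step in this non-linear setting: verifying that the trajectory-weighted exponential recursion telescopes correctly for both signs of $\beta$, that the quantization of the cascaded gaps indeed lets one invoke a Simchowitz--Jamieson-type clip lemma, and that the martingale remainder genuinely stays lower-order so that the bound remains logarithmic in $K$.
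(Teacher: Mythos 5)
Your proposal is correct in outline and reaches the right bound, but it takes a genuinely different route from the paper. You bound each cascaded gap by the optimism surplus against $Q^*_h$ (via $e^{\beta V_h(s^k_h)}=G_h(s^k_h,a^k_h)$ and the good event of \citet{fei2021exponential}), unroll the exponential Bellman recursion into a sum of trajectory-weighted bonuses, and then run a Simchowitz--Jamieson/\citet{he2021logarithmic}-style clipping at threshold $\Delta_\mmin/(cH)$ followed by $\mathrm{clip}[x\mid\epsilon]\le x^2/\epsilon$ and a pigeonhole on $\sum_{k}1/N_h$; the three factors of $H$ and the $S^2A\log^2$ come out exactly as in the statement, and your sign-uniform bound $\tilde b^k_h\lesssim\frac{e^{|\beta|H}-1}{|\beta|}\sqrt{S\log(\cdot)/N_h}$ is the same structural use of $\psi_\beta$ and the doubly decaying bonus that the paper relies on. The paper instead first relaxes $\Delta_h\le\Delta_h^{\pi^k}$ (using $Q^*_h\ge Q^{\pi^k}_h$), peels the gap range into dyadic levels $\rho_n=2^n\Delta_\mmin$, and for each level bounds the count $M_{h,n}$ of large-gap episodes by sandwiching the accumulated optimism gap $\sum_i\bar\psi_\beta e^{\beta R^{k_i}_{h-1}}(e^{\beta Q^{k_i}_h}-e^{\beta Q^{\pi^{k_i}}_h})$ between a lower bound $\rho_n|\beta|M_{h,n}$ and an upper bound of order $\sqrt{M_{h,n}}$ (Cauchy--Schwarz plus pigeonhole for the bonuses, Azuma for the martingale), then solves the quadratic inequality and sums the geometric series $\sum_n\rho_n/4^n$. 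The practical difference is where the ``self-bounding'' happens: in the paper it is absorbed when solving for $M_{h,n}$, so a single Azuma application per level suffices and no clip--expectation interchange is ever needed; in your plan the analogous work sits in the step you yourself flag --- converting the conditional-expectation bound on each gap into realized clipped bonuses and showing via a Freedman argument with variance self-bounded by the clipped sum that the remainder stays logarithmic in $K$. That step is standard but genuinely fiddly (the clip is nonlinear, and within an episode the per-$h$ remainders overlap, so one needs a Doob-type decomposition or a per-step surplus-clipping lemma), so if you execute this route you should expect most of the technical effort to land there, whereas the paper's counting argument sidesteps it entirely; conversely, your clipping route is closer to the risk-neutral gap-dependent analyses and would generalize more directly to per-$(h,s,a)$ gaps rather than the single $\Delta_\mmin$.
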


\begin{lemma}\label{lem:rsq2-gap-bound}
    For \cref{alg:rsq2} and any $\delta \in (0,1]$, it holds with probability at least $1-\delta/2$ that
    \ifDoubleColumn
        \begin{align*}
            & \sumk\sumh \Dh(\skh,\akh;\Tkh) \\
            & \qquad \lesssim \frac{(e^{|\beta| H}-1)^2 H^4 SA}{|\beta|^2\Delta_\mmin}\log(2HSAK/\delta).
        \end{align*}
    \else
        \begin{align*}
            \sumk\sumh \Dh(\skh,\akh;\Tkh) \lesssim \frac{(e^{|\beta| H}-1)^2 H^4 SA}{|\beta|^2\Delta_\mmin}\log(2HSAK/\delta).
        \end{align*}
    \fi
\end{lemma}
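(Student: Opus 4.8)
The plan is to bound $\sum_{k,h}\Delta_h(s_h^k,a_h^k;\tau_{h-1}^k)$ for \textsc{RSQ2} by first passing to the \emph{exponential} scale, where the $Q$-learning update~\eqref{eqn:rsq2-update-w}--\eqref{eqn:rsq2-update-q} becomes a genuine moving average of the quantities $e^{\beta[r_h+V_{h+1}]}$, and the bonus $b_{h,t}$ is chosen exactly so that the exponential action-value estimate $G_h$ is optimistic with high probability (this optimism is established in \citet{fei2021exponential} and we invoke it). Writing $\Delta_h = \psi_\beta\, e^{\beta R(\tau_{h-1})}[e^{\beta V_h^*}-e^{\beta Q_h^*}]$ and inserting the estimated $G_h$, I would show that on the good event the per-step cascaded gap is controlled by the estimation error of $G_h$ at the visited pair, which by the standard $Q$-learning error-propagation recursion (with learning rate $\alpha_t=\frac{H+1}{H+t}$) telescopes into a weighted sum of bonuses $\alpha_t b_{h,t}$ along the trajectory. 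The multiplicative factor $e^{\beta R(\tau_{h-1})}$ and the amplitude $|e^{\beta(H-h+1)}-1|$ appearing in $b_{h,t}$ both lie in ranges bounded by $e^{|\beta|H}$ (after multiplying by $\psi_\beta$ to normalize, exactly as in the risk-consistency discussion), so each per-step contribution is at most $O\!\big(\tfrac{e^{|\beta|H}-1}{|\beta|}\big)\sqrt{H\log(2SAHK/\delta)/N_h(s_h^k,a_h^k)}$ up to the usual $Q$-learning cross-step leakage factor $(1+1/H)$ per level, which contributes only an $O(1)$ constant over $H$ levels.

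Next I would split the episodes into those where the visited tuple $(h,s_h^k,a_h^k)$ has a \emph{zero} cascaded gap and those with a \emph{positive} one. For the zero-gap episodes the contribution to $\sum_{k,h}\Delta_h$ vanishes identically (by the remark following Condition~\ref{cond:bellman_diff}, $\Delta_h(s,\widehat\pi^*_h(s);\cdot)=0$), while they still draw down the pool of bonus ``budget.'' For the positive-gap episodes, every nonzero $\Delta_h$ is at least $\Delta_{\min}$ by definition~\eqref{eq:gap_min}; I would use this to turn the bonus sum into a clipped sum in the style of \citet{simchowitz2019non}: whenever the accumulated bonus at a tuple drops below, say, $\Delta_{\min}/(2H)$, the optimism gap at that step is already smaller than the true gap and the episode is effectively ``solved'' there. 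Counting visits, $\sum_{k,h}\mathbf 1/\sqrt{N_h(s_h^k,a_h^k)}\lesssim \sqrt{N_h(s,a)}$ summed over $(h,s,a)$, but the clipping caps the number of ``expensive'' visits per tuple at $O\!\big(\tfrac{(e^{|\beta|H}-1)^2}{|\beta|^2\Delta_{\min}^2}\,H^3\log(2SAHK/\delta)\big)$, and multiplying by the $SAH$ tuples and by the per-visit cost $\tfrac{e^{|\beta|H}-1}{|\beta|}\sqrt{H\log(2SAHK/\delta)}\cdot\tfrac{2H}{\Delta_{\min}}\cdot\Delta_{\min}$ collapses to the claimed $\tfrac{(e^{|\beta|H}-1)^2H^4SA}{|\beta|^2\Delta_{\min}}\log(2SAHK/\delta)$.

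The main obstacle I expect is handling the exponential reweighting cleanly: because the cascaded gap carries the factor $e^{\beta R(\tau_{h-1}^k)}$ along the \emph{realized} (uncontrolled) trajectory, the telescoping of the $Q$-learning error recursion does not commute trivially with these weights as it does in the risk-neutral analysis of \citet{yang2021q}; I would need to verify that the exponential Bellman recursion of \citet{fei2021exponential} lets the weight $e^{\beta R(\tau_{h-1}^k)}$ be absorbed consistently at every level (it does, because $e^{\beta R(\tau_h^k)}=e^{\beta r_h}e^{\beta R(\tau_{h-1}^k)}$ matches exactly the $e^{\beta r_h}$ that already appears inside the exponential update), and that the normalizer $\psi_\beta$ supplies the uniform $(e^{|\beta|H}-1)/|\beta|$ scaling for both signs of $\beta$ — this is precisely the role of the risk-consistency normalization emphasized in Section~\ref{sec:cascaded-gaps}. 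A secondary technical point is the extra factor of $H$ relative to Lemma~\ref{lem:rsv2-gap-bound} (which has $H^3S^2A$ versus $H^4SA$ here): this is the familiar $Q$-learning-versus-value-iteration tradeoff — $Q$-learning loses an $H$ from the moving-average bonus accumulation but saves a factor $S$ because it never forms an explicit empirical transition estimate — and I would track the bonus magnitude $\sqrt{H/t}$ (rather than $\sqrt{S/N_h}$) carefully through the clipping argument to land the exponents exactly as stated.
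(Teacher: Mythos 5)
Your first half — working on the exponential scale, invoking the optimism of $G_h$ and the Q-learning error recursion of \citet{fei2021exponential} (the paper's \cref{lem:gap-bound}), bounding the per-step cascaded gap by the estimation error at the visited pair, and controlling bonus sums by a pigeonhole over visit counts — is exactly the skeleton of the paper's argument, and your handling of the trajectory weight $e^{\beta R(\tau_{h-1}^k)}$ (absorbed into $e^{\beta(h-1)}$ and paired with $|e^{\beta(H-h+1)}-1|$ to give $e^{|\beta|H}-1$) is fine. Where you diverge is the conversion to a $1/\Delta_{\mmin}$ bound: you propose a clipped-bonus argument in the Simchowitz--Jamieson style, whereas the paper peels, stratifying the realized gaps into dyadic layers $I_n=[2^{n-1}\Delta_{\mmin},2^n\Delta_{\mmin})$ and, for each layer and step $h$, bounding the \emph{count} $\overline M_{h,n}$ by playing an aggregate upper bound on $\sum_i \beta\Delta_h^{k_i}$ (obtained by unrolling the recursion over the remaining $H-h$ levels with the $\phi^\ell\le 1+1/H$ weight redistribution, re-incurring a full $\sqrt{SA\overline M_{h,n}H\vartheta}$ bonus sum at every level) against the trivial lower bound $\rho_{n-1}\beta\overline M_{h,n}$, then summing $\rho_n\overline M_{h,n}$ over $n$ and $h$ (\cref{lem:count-bound}).

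The clipping step as you state it has a genuine gap. Your key claim is that once the accumulated bonus at the visited tuple drops below $\Delta_{\mmin}/(2H)$, ``the optimism gap at that step is already smaller than the true gap.'' For \textsc{RSQ2} this is not true pointwise: the error at $(k,h)$ is $\alpha_t^0(\cdot)+2\gamma_{h,t}+\sum_i\alpha_t^i\big(e^{\beta V_{h+1}^{k_i}}-e^{\beta V_{h+1}^*}\big)$, and the propagated term refers to step $h+1$ of \emph{earlier} episodes $k_i<k$; it is not controlled by the visit count of the current tuple and can dominate no matter how often $(s,a)$ has been visited. A clipped-surplus decomposition works cleanly when the recursion unrolls within a single episode (value-iteration-type algorithms); for the Q-learning cross-episode recursion one must aggregate over episodes before counting — which is precisely what the paper's layer-count lemma does — and this is also where the extra factor of $H$ you could not place actually comes from: each of the $H$ recursion levels re-incurs a full bonus sum (not just a $(1+1/H)$ leakage), and this inflation gets squared when solving for the count. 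Relatedly, your final bookkeeping does not parse: a per-tuple cap of order $(e^{|\beta|H}-1)^2H^3\vartheta/(|\beta|^2\Delta_{\mmin}^2)$ times $SAH$ tuples times the quoted ``per-visit cost'' does not collapse to the stated bound, and the naive clipped count (ignoring propagation) would land at $H^3SA$, i.e.\ it silently drops the very term your argument has not controlled. To complete the proof you either need the paper's peeling-plus-count route, or a genuinely worked-out clipped decomposition that survives the $\alpha_t^i$-weighted unrolling across episodes; the sketch as written supplies neither.
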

We provide the proofs in Appendices \ref{sec:proof-rsv2-gap-bound} and \ref{sec:proof-rsq2-gap-bound}. By combining \cref{lem:high-prob-bound} with \cref{lem:rsv2-gap-bound,lem:rsq2-gap-bound}, we arrive at \cref{thm:rsv2-bound-combined,thm:rsq2-bound-combined}, respectively.
In addition, we remark that the bounds on expected regret can also be obtained from  \cref{lem:rsv2-gap-bound,lem:rsq2-gap-bound} for corresponding algorithms by simple calculations.

\section{Conclusion}

We study gap-dependent regret for risk-sensitive \gls*{RL}  with the entropic risk measure under episodic and finite-horizon \glspl*{MDP}. We propose a novel definition of sub-optimality gaps, named as cascaded gaps, tailored to the unique characteristics of risk-sensitive \gls*{RL}. We prove gap-dependent lower bounds on the regret to be incurred by any algorithm, and provide nearly matching upper bounds for two existing model-free algorithms. Under proper settings, we demonstrate that  our upper bounds imply exponential improvement in bounds of both regret and sample complexity over existing results.

\clearpage
\bibliography{risk-sensitive-ref}
\bibliographystyle{apalike2}

\clearpage
\ifDoubleColumn
    \onecolumn
\fi
\appendix

\section{Additional Definitions}

Before diving into the proofs, we would like to provide additional definitions on several notion of gaps.
We start with the definition of policy-controlled trajectories and sample trajectories as series of state-action pairs; we then define several notions of semi-normalized gaps that we use only in the proofs.

For a policy $\pi$, we define the  $\pi$-controlled trajectory $\trj^\pi \defeq \{(s_j,\pi_j(s_j))\}_{j\in[H]}$ as a series of state-action pairs where the action follows $\pi$ at every state. We define $\trj^k$ be the sample trajectory of episode $k$, \ie, $\trj^k \defeq \{(s^k_j, a^k_j)\}_{j\in[H]}$.
Let us introduce some additional notion of gaps, based upon cascaded gaps, to assist our proofs. Without loss of generality, we fix $\beta \ne 0$, a trajectory $\trj$, and $(k,h,s,a)\in [K]\times[H]\times\calS\times\calA$.
We define the semi-normalized sub-optimality gap as
\begin{align*}
    \bDh(s,a;\tau_{h-1}) \defeq \frac{1}{\beta} e^{\beta\cdot\sum_{j=1}^{h-1} r_j(s_j,a_j)} \left[ e^{\beta\cdot\Vsh(s)} - e^{\beta\cdot\Qsh(s,a)} \right],
\end{align*}
and we also pair the semi-normalized gap with a semi-normalizer 
\begin{align*}
    \bar\psi_\beta \defeq
    \begin{cases}
        1, & \beta>0; \\
        e^{-\beta H}, & \beta<0.
    \end{cases}
\end{align*}
Note that the cascaded gap $\Dh$ satisfies that $\Dh(s,a;\tau_{h-1}) = \bar\psi_\beta \cdot \bDh(s,a;\tau_{h-1})$, which can be regarded as a further level of normalization.
For any policy $\pi$, we also define the $\pi$-controlled sub-optimality gap as
\begin{align*}
    \bDh^{\pi}(s,a;\trj_{h-1}) \defeq \frac{1}{\beta} e^{\beta\cdot\sum_{j=1}^{h-1} r_j(s_j,a_j)} \left[ e^{\beta\cdot\Vsh(s)} - e^{\beta\cdot\Qpih(s,a)} \right],
\end{align*} 
which characterizes the sub-optimality of policy $\pi$ with respect to the optimal policy $\pi^*$.
Similar to the semi-normalized sub-optimality gap, we define the \emph{normalized} $\pi$-controlled sub-optimality gap  to be $\Dh^{\pi}(s,a;\tau_{h-1}) \defeq \bar\psi_\beta \cdot \bDh^{\pi}(s,a;\tau_{h-1})$, where the semi-normalizer is applied.
Notice that $\Vsh(s) \geq \Qsh(s,a) \geq \Qpih(s,a)$ for any $(s,a)\in\calS\times\calA$ by definition, and the gaps are always non-negative quantities due to the monotonicity of exponential function and the normalization factor $\frac{1}{\beta}$. The semi-normalizer $\bar\psi_\beta$ is designed to keep the gaps on the same magnitude for both $\beta>0$ and $\beta<0$.  

We introduce a notion of optimism gap that represents the difference between the optimistic estimation $\Qkh$ by the algorithm and the optimal value function $\Vsh$.
Similar to the cascaded gap, we define the semi-normalized optimism gap as
\begin{align*}
    \bDkh(s,a;\tau_{h-1}) \defeq \frac{1}{\beta} e^{\beta\cdot\sum_{j=1}^{h-1} r_j(s_j,a_j)} \left[ e^{\beta\cdot \Qkh(s,a)} - e^{\beta\cdot \Vsh(s)} \right],
\end{align*}
and the \emph{normalized} optimism gap as $\Dkh(s,a;\tau_{h-1}) \defeq \bar\psi_\beta \cdot \bDkh(s,a;\tau_{h-1})$, with the same semi-normalizer applied.

Moreover, we define the (semi-normalized) minimal sub-optimality gap to be the minimal non-zero semi-normalized sub-optimality gap over the tuple $(h,s,a,\trj)$: 
\begin{align*}
    \xoverline\Delta_\mmin \defeq \min_{h,s,a,\trj} \{\bDh(s,a;\tau_{h-1}) \st \bDh(s,a;\tau_{h-1})\neq 0\}.
\end{align*}
Note that the dependency on $\beta$ is implicit here.
With the above definition, we recall the minimal sub-optimality gap from \cref{eq:gap_min}, and have that 
$\Delta_\mmin = \bar\psi_\beta\xoverline\Delta_\mmin$. 

In the subsequent proofs we will  leverage a peeling argument, for which 
we define a series of end points $\{\rho_n\}_{n=1}^N$, where $\rho_n \defeq 2^n\Delta_\mmin$, and they generate a series of intervals $\{I_n\}_{n=1}^N$ with $I_n \defeq [\rho_{n-1},\rho_n)$ for all $n\in[N]$.

Recall that $s^k_1$ is defined as the state in the first step of episode $k$; since we assume fixed initial state $s_1$ for all episodes, we have $s_1^k = s_1$. We introduce the notion of exponential regret that sums over all the episodes the difference between exponential value functions of the optimal policy $\pi^*$ and that of any policy $\pi^k$.
Specifically, for any episodic \gls*{MDP} with $K$ episodes, the exponential regret of policy $\{\pi^k\}_{k=1}^K$ is defined as $\calE(K) \defeq \frac{1}{\beta} \sumk [e^{\beta\cdot V_1^*} - e^{\beta\cdot V_1^{\pi^k}}](s_1^k)$.

\section{Proofs of Upper Bounds}

\subsection{Proof of \cref{lem:high-prob-bound} \label{sec:proof_high_prob_bound}}

In this proof, we assume $\beta>0$ without loss of generality, the proof where $\beta<0$ can be similarly carried out. Let us denote $Z_k \defeq \sumh\bDh(\skh,\akh;\Tkh) - \frac{1}{\beta} [e^{\beta\cdot V_1^*} - e^{\beta\cdot V_1^{\pi^k}}](s_1^k)$. Following from \cref{lem:exp-regret-decomposition}, we have $\{Z_k\}_{k\in[K]}$ being a martingale difference sequence with respect to the filtration $\calF_k$ that represents all the randomness up to episode $k$. Further recall that the semi-normalized sub-optimality gap for any trajectory
\begin{align*}
    \bDh(s_h,a_h;\tau_{h-1}) = \frac{1}{\beta}e^{\beta\cdot\sum_{j=1}^{h-1} r_j(s_j,a_j)} \left[ e^{\beta\cdot\Vsh(s_h)} - e^{\beta\cdot\Qsh(s_h,a_h)} \right] \geq 0,
\end{align*}
and we can thus control the magnitude of $Z_k$ by
\begin{align*}
    |Z_k| \leq \sumh |\bDh(\skh,\akh;\Tkh)| \leq \frac{H}{|\beta|}|e^{\beta H}-1| \eqdef B_\beta.
\end{align*}
For any trajectory $\{\Tkh\}_{h,k}$, if the exponential regret $\calE(K) = \frac{1}{\beta} \sumk [e^{\beta\cdot V_1^*} - e^{\beta\cdot V_1^{\pi^k}}](s_1^k) \leq B_\beta$, then the sum of $Z_k$ can be lower bounded through applying the definition of $Z_k$:
\begin{align*}
    \sumk Z_k \geq -\frac{1}{\beta} \sumk (e^{\beta\cdot V_1^*} - e^{\beta\cdot V_1^{\pi^k}})(s_1^k) \geq - B_\beta.
\end{align*}
    
Otherwise, if $\calE(K) > B_\beta$, we lower bound the sum $\sumk Z_k$ following Freedman inequality from \cref{lem:freedman}. 
More specifically, notice that given the filtration $\calF_k$, the variance $\chi = \sumk \expect[Z_k^2\given \calF_k]$ over all $Z_k$'s is upper bounded by
\begin{align*}
    \chi & \labelrel\leq{ineqn:variance-bound} \sumk \expect[(Z_k + \frac{1}{\beta}(e^{\beta\cdot V_1^*} - e^{\beta\cdot V_1^{\pi^k}})(s_1^k))^2 \given \calF_k] \\
    & = \sumk \expect\Big[ \Big( \sumh\bDh(\skh,\akh;\Tkh) \Big)^2\given \calF_k \Big] \\
    & \labelrel\leq{ineqn:bounded-gap} \sumk B_\beta \cdot \expect\Big[ \sumh\bDh(\skh,\akh;\Tkh)\given \calF_k \Big] \\
    & = B_\beta \sumk \frac{1}{\beta} (e^{\beta\cdot V_1^*} - e^{\beta\cdot V_1^{\pi^k}})(s_1^k)  \\
    & = B_\beta \cdot \calE(K),
\end{align*}
where step \eqref{ineqn:variance-bound} follows from $\expect[(X-\expect X)^2] \leq \expect X^2$ for any random variable $X$, and step \eqref{ineqn:bounded-gap} follows from the fact that $\sumh\bDh(\skh,\akh;\Tkh)\leq B_\beta$. 
For any $\varsigma>0$ and $\varrho\in\ZZ_+$, we let
\begin{align*}
    v_i \defeq \frac{2^i}{K}B_\beta \cdot \calE(K) = 2^i \frac{H}{|\beta|^2}(e^{\beta H}-1)^2
\end{align*}
and
\begin{align*}
    u_i \defeq \sqrt{2^{i+1}\frac{H}{|\beta|^2}(e^{\beta H}-1)^2\varsigma} + \frac{2H|e^{\beta H}-1|\varsigma}{3|\beta|}
\end{align*}
for each $i\in[\varrho]$, and the corresponding concentration inequality $\prob [ \sumk Z_k \leq -u_i, \ \chi \leq v_i] \leq e^{-\varsigma}$ follows from \cref{lem:freedman}.  
Let us denote the shorthand $U \defeq 2\sqrt{\calE(K)\frac{H}{|\beta|} |e^{\beta H}-1|\varsigma} + \frac{2H|e^{\beta H}-1|\varsigma}{3|\beta|}$ and $\underline B_\beta \defeq \frac{1}{|\beta|} |e^{\beta H}-1| \leq B_\beta$, then it holds that
\begin{align*}
    \prob \Big[ \sumk Z_k \leq -U,\ \ \calE(K) > B_\beta \Big] \leq & \prob \Big[ \sumk Z_k \leq -U,\ \  \calE(K) > \underline B_\beta \Big],
\end{align*}
and we can bound the RHS following a peeling argument. Notice that event $\calG \subseteq \bigcup_{i=1}^\varrho \calG_i$, where we denote the events $\calG \defeq \{\frac{1}{|\beta|} |e^{\beta H}-1| < \calE(K) \leq \frac{K}{|\beta|} |e^{\beta H}-1|\}$ and $\calG_i \defeq \{\frac{2^{i-1}}{|\beta|}|e^{\beta H}-1| < \calE(K) \leq \frac{2^i}{|\beta|}|e^{\beta H}-1|\}$ for all $i\in[\varrho]$. It follows the definition that
\begin{align*}
    \prob \Big[ \sumk Z_k \leq -U,\ \ \calE(K) > \underline B_\beta \Big]
    & \labelrel={eqn:exponential-regret} \prob \Big[ \sumk Z_k \leq -U,\ \ \chi \leq B_\beta \cdot \calE(K),\ \ \calG \Big] \\
    & \labelrel\leq{ineqn:stratify} \sum_{i=1}^\varrho \prob \Big[ \sumk Z_k \leq -U,\ \ \chi \leq B_\beta \cdot \calE(K),\ \  \calG_i \Big] \\
    & \labelrel\leq{ineqn:relax-expo-regret} \sum_{i=1}^\varrho \prob \Big[ \sumk Z_k \leq -u_i,\ \ \chi \leq v_i \Big] \\
    & \leq \varrho  e^{-\varsigma},
\end{align*}
where step \eqref{eqn:exponential-regret} follows from the fact that $\calE(K) \leq K|e^{\beta H}-1|/|\beta|$, step \eqref{ineqn:stratify} follows from stratifying the feasible range into $\varrho=\lceil\log K\rceil$ layers and applying union bound over all $i\in[\varrho]$, and step \eqref{ineqn:relax-expo-regret} follows from relaxing the quantity of $\calE(K)$ within the stratified range $2^{i-1}|e^{\beta H}-1|/|\beta| < \calE(K) \leq 2^i|e^{\beta H}-1|/|\beta|$ for each $i\in[\varrho]$. 
Combining both cases, with probability at least $1-\varrho \cdot e^{-\varsigma}$, we have a lower bound
\begin{align*}
    \sumk Z_k & \geq \min\{-U, -B_\beta\} \geq -U - B_\beta.
\end{align*}
Recall that $\sumk Z_k = \sumk\sumh \bDh(\skh,\akh;\Tkh) - \calE(K)$. The lower bound on $\sumk Z_k$ implies an upper bound on exponential regret:
\begin{align*}
    \calE(K) \leq 2\sqrt{\calE(K) \cdot \frac{H}{|\beta|} |e^{\beta H}-1|\varsigma} + \sumk\sumh\bDh(\skh,\akh;\Tkh) + \frac{2H|e^{\beta H}-1|\varsigma}{3|\beta|} + \frac{H}{|\beta|} |e^{\beta H}-1|,
    \end{align*}
and a sufficient condition gives that with probability at least $1-\delta/2$
\begin{align*}
    \calE(K) \leq 2\sumk\sumh \bDh(\skh,\akh;\Tkh) + \frac{16H|e^{\beta H}-1|\varsigma}{3|\beta|} + \frac{2H}{|\beta|} |e^{\beta H}-1|,
\end{align*}
where $\varsigma = \log(2\lceil\log K\rceil/\delta)$.
Following \cref{lem:regret-bound}, with probability at least $1-\delta/2$, the total regret $\calR(K)$ is bounded by
\begin{align*}
    \calR(K) 
    &\leq \bar\psi_\beta \cdot\calE(K) \\
    & \leq 2\bar\psi_\beta \sumk\sumh \bDh(\skh,\akh;\Tkh) + \frac{(e^{|\beta| H}-1)(16H\log(2\lceil\log K\rceil/\delta)+6H)}{3|\beta|} \\
    & = 2\sumk\sumh \Dh(\skh,\akh;\Tkh) + \frac{(e^{|\beta| H}-1)(16H\log(2\lceil\log K\rceil/\delta)+6H)}{3|\beta|}.
\end{align*}

\begin{lemma}\label{lem:exp-regret-decomposition}
    The quantity $\frac{1}{\beta}[e^{\beta\cdot V_1^*} - e^{\beta\cdot V_1^{\pi^k}}](s_1^k)$ for episode $k \in [K]$ admits the following decomposition:
    \begin{align*}
        \frac{1}{\beta}[e^{\beta\cdot V_1^*} - e^{\beta\cdot V_1^{\pi^k}}](s_1^k) = \expect\Big[\sumh \bDh(s_h,\pi_h^k(s_h);\tau^{\pi^k}_{h-1}) \given \calF_k\Big].
    \end{align*}
\end{lemma}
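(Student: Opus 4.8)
The plan is to recognize Lemma~\ref{lem:exp-regret-decomposition} as an unrolling of the Bellman-difference recursion of \cref{fact:risk-sen-gaps-meet-cond}, specialized to $\pi=\pi^k$ and evaluated along the $\pi^k$-controlled trajectory, followed by a telescoping argument in conditional expectation. Throughout I would condition on $\calF_k$, which makes $\pi^k$ deterministic, so that the only remaining randomness is in the transitions, with $s_{h+1}\sim\calP_h(\cdot\,|\,s_h,\pi^k_h(s_h))$ at each step.

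The first step is a one-step identity. Along a prefix $\tau_{h-1}$ controlled by $\pi^k$, set $g_h(s_h;\tau_{h-1})\defeq\tfrac{1}{\beta}e^{\beta R(\tau_{h-1})}[e^{\beta V^*_h(s_h)}-e^{\beta V^{\pi^k}_h(s_h)}]$, write $a_h\defeq\pi^k_h(s_h)$, and let $\tau_h\defeq\tau_{h-1}\cup\{(s_h,a_h)\}$. Using $R(\tau_h)=R(\tau_{h-1})+r_h(s_h,a_h)$ together with the exponentiated Bellman equation \eqref{eqn:bellman}, i.e.\ $e^{\beta Q^\pi_h(s,a)}=e^{\beta r_h(s,a)}\E_{s'\sim\calP_h(\cdot|s,a)}[e^{\beta V^\pi_{h+1}(s')}]$ applied once with $\pi=\pi^*$ and once with $\pi=\pi^k$, one finds
\[
g_h(s_h;\tau_{h-1})-\E_{s_{h+1}}[g_{h+1}(s_{h+1};\tau_h)]=\frac{1}{\beta}e^{\beta R(\tau_{h-1})}\Big\{\big[e^{\beta V^*_h(s_h)}-e^{\beta Q^*_h(s_h,a_h)}\big]-\big[e^{\beta V^{\pi^k}_h(s_h)}-e^{\beta Q^{\pi^k}_h(s_h,a_h)}\big]\Big\}.
\]
Since $a_h=\pi^k_h(s_h)$, the identity $V^{\pi^k}_h(s_h)=Q^{\pi^k}_h(s_h,\pi^k_h(s_h))$ annihilates the second bracket, and the first bracket is exactly $\bDh(s_h,a_h;\tau_{h-1})$, giving $g_h(s_h;\tau_{h-1})=\bDh(s_h,a_h;\tau_{h-1})+\E_{s_{h+1}}[g_{h+1}(s_{h+1};\tau_h)]$. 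This is precisely the computation in the proof of \cref{fact:risk-sen-gaps-meet-cond}, now read along the random $\pi^k$-trajectory and with the $\tfrac1\beta$ normalization of $\bDh$ in place of $\psi_\beta$.

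The second step is to sum this recursion over $h=1,\dots,H$ along the $\pi^k$-controlled trajectory and take the expectation conditioned on $\calF_k$. By the tower property the terms $\E_{s_{h+1}}[g_{h+1}]$ telescope against the $g_{h+1}$ that reappears at the following step, leaving $\E[\sumh\bDh(s_h,\pi^k_h(s_h);\tau^{\pi^k}_{h-1})\given\calF_k]=g_1(s_1;\tau_0)-\E[g_{H+1}(s_{H+1};\tau_H)\given\calF_k]$. The terminal term vanishes because $V^*_{H+1}\equiv V^{\pi^k}_{H+1}\equiv 0$, and $g_1(s_1;\tau_0)=\tfrac{1}{\beta}[e^{\beta V^*_1(s_1)}-e^{\beta V^{\pi^k}_1(s_1)}]$ since $R(\tau_0)=0$; recalling $s_1=s^k_1$ then yields the stated identity.

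Since the computation is short, there is no genuinely hard step; the points that need care are (i) keeping the reward-prefix factor $e^{\beta R(\tau_{h-1})}$ synchronized through the exponentiated Bellman equation, which is exactly why the next-step prefix must be $\tau_h$ rather than $\tau_{h-1}$, and (ii) exploiting that $\bDh$ is defined with $Q^*_h$ rather than $Q^{\pi^k}_h$ — it is the cancellation $V^{\pi^k}_h(s_h)=Q^{\pi^k}_h(s_h,\pi^k_h(s_h))$ that converts a $V^{\pi^k}$-telescoping into the asserted right-hand side. One should also note that conditioning on $\calF_k$ is what makes $\pi^k$, and hence the law of $\tau^{\pi^k}$, well-defined, so that the tower property applies step by step.
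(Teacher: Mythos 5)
Your proposal is correct and follows essentially the same route as the paper: the one-step identity you derive is exactly the recursion from \cref{fact:risk-sen-gaps-meet-cond} (specialized to $\pi=\pi^k$, with the $1/\beta$ normalization of $\bDh$), and the paper likewise concludes by expanding this recursion along the $\pi^k$-controlled trajectory under the conditional expectation given $\calF_k$. Your explicit handling of the telescoping, the terminal condition $V^*_{H+1}=V^{\pi^k}_{H+1}=0$, and the cancellation $V^{\pi^k}_h(s_h)=Q^{\pi^k}_h(s_h,\pi^k_h(s_h))$ simply spells out details the paper leaves implicit.
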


\begin{proof}
    For any episode $k\in[K]$, we have
    \begin{align*}
        \frac{1}{\beta}[e^{\beta\cdot V_1^*} - e^{\beta\cdot V_1^{\pi^k}}](s_1^k) 
        = \xoverline\Delta_1(\tau_1^{\pi^k}) + \frac{1}{\beta} e^{\beta\cdot r_1(s_1,\pi_1^k(s_1))}\expect_{s_2}[(e^{\beta\cdot V_2^*} - e^{\beta\cdot V_2^{\pi^k}})(s_2)\given \calF_k], 
    \end{align*}
    where 
    the equality is due to \cref{fact:risk-sen-gaps-meet-cond} and 
    the expectation is taken over the transition probability $\calP_1(\cdot\given s_1, \pi_1^k(s_1))$ given the policy $\pi^k$. 
    We expand the RHS of the equation recursively to get
    \begin{align*}
        \frac{1}{\beta}[e^{\beta\cdot V_1^*} - e^{\beta\cdot V_1^{\pi^k}}](s_1^k) = \sumh\expect\Big[\bDh(s_h,\pi_h^k(s_h);\tau^{\pi^k}_{h-1}) \given \calF_k\Big],
    \end{align*}
    where the sub-optimality gap is defined over the entire trajectory $\tau_{h-1}^{\pi^k}$ and the expectation is over all trajectories reachable under the policy $\pi^k$ and transition probability $\calP_h(\cdot\given s_h, \pi_h^k(s_h))$ for all $h\in[H]$.
\end{proof}

In the sections below, we provide for each algorithm a near-optimal upper bound of the sum of cascaded gaps $\sumk\sumh \Dh(\skh,\akh;\Tkh)$ following a peeling argument, a widely used technique for empirical processes \citep{yang2021q,he2021logarithmic}. 
The final results, \ie, \cref{thm:rsv2-bound-combined,thm:rsq2-bound-combined}, follow from plugging in the upper bounds of the sum of cascaded gaps into \cref{lem:high-prob-bound}.

\subsection{Upper Bounds for \cref{alg:rsv2}}\label{sec:rsv2-bound}

\subsubsection{Proof of \cref{thm:rsv2-bound-combined}}




\paragraph{High-probability regret bound.}

    Following \cref{lem:high-prob-bound,lem:rsv2-gap-bound}, with probability at least $1-\delta$ we have 
    \begin{align*}
        \calR(K) & \lesssim \sumk\sumh \Dh(\skh,\akh;\Tkh) + \frac{(e^{|\beta| H}-1)H\log(\log K/\delta)}{|\beta|} \\
        & \lesssim \frac{(e^{|\beta| H}-1)^2 H^3S^2A\log(2HSAK/\delta)^2}{|\beta|^2\Delta_\mmin},
    \end{align*}
    where the last inequality is due to $\Delta_\mmin \leq \frac{1}{|\beta|} (e^{|\beta| H}-1)$.


\paragraph{Expected regret bound.}
    Recall from \cref{lem:regret-bound} that $\calR(K) \leq \bar\psi_\beta \calE(K)$. Since \cref{lem:rsv2-gap-bound} holds with probability at least $1-\delta/2$, we have
    \begin{align*}
        \expect[\calR(K)] & \leq \bar\psi_\beta\expect[\calE(K)] \\
        & = \bar\psi_\beta\expect\Big[ \sumk\sumh \bDh(\skh,\akh;\tau_{h-1}^{k}) \Big] \\
         & = \sum_{\tau_{h-1}^{k}} \prob[\tau_{h-1}^{k}]\sumk\sumh \Dh(\skh,\akh;\tau_{h-1}^{k}) \\
        & \labelrel\leq{ineqn:stratification} \sum_{n\in [N]} \rho_n \sumk\sumh \II\{\Dh(\skh,\akh;\Tkh)\in I_n\} + \frac{\delta}{2|\beta|}\cdot HK(e^{|\beta| H}-1) \\
        & \leq \sum_{n\in [N]} \rho_n \sumk\sumh \II\{\Dh(\skh,\akh;\Tkh)\geq \rho_{n-1}\} + \frac{\delta}{2|\beta|}\cdot HK(e^{|\beta| H}-1) \\
        & \labelrel\lesssim{ineqn:gap-bound-lem} \sum_{n\in [N]} \frac{(e^{|\beta| H}-1)^2 H^3S^2A\log(4HSAK)^2}{2^n|\beta|^2\Delta_\mmin} + \frac{\delta}{|\beta|}\cdot HK(e^{|\beta| H}-1) \\
        & \lesssim \frac{(e^{|\beta| H}-1)^2 H^3S^2A}{|\beta|^2\Delta_\mmin} \log(2HSAK)^2,
    \end{align*}
    where step \eqref{ineqn:stratification} follows from stratifying the range of $\Dkh$ into $N \defeq \lceil \log_2(\frac{1}{|\beta|}(e^{|\beta|H}-1)/\Delta_\mmin)\rceil$ slices with end points $\{\rho_n\}_{n=1}^N$, where we define $\rho_n \defeq 2^n\Delta_\mmin$ and interval $I_n \defeq [\rho_{n-1},\rho_n)$ for all $n\in[N]$; step \eqref{ineqn:gap-bound-lem} follows from \cref{lem:count-bound}; the last inequality is due to $\Delta_\mmin \leq \frac{1}{|\beta|} (e^{|\beta| H}-1)$ and taking $\delta = \frac{1}{HK}$.

\subsubsection{Proof of \cref{lem:rsv2-gap-bound} \label{sec:proof-rsv2-gap-bound}}

For any $h\in[H]$ and $k\in[K]$, we have $\Vsh(\skh) = \Qsh(\skh,\pi_h^*(\skh))$ and $\Dh(\skh,\akh;\Tkh) \leq \frac{1}{|\beta|}(e^{|\beta|H}-1)$. Let us define $N \defeq \lceil \log_2(\frac{1}{|\beta|}(e^{|\beta|H}-1)/\Delta_\mmin)\rceil$. Following \cref{lem:clip}, with probability at least $1-\delta/2$, we have
\begin{align*}
    \sumk\sumh \Dh(\skh,\akh;\Tkh) & \labelrel\leq{ineqn:peeling} \sumk\sumh\sum_{n\in[N]} \rho_n\cdot \II\{\Dh(\skh,\akh;\Tkh)\in I_n\} \\
    & \labelrel\leq{ineqn:q-func-def} \sum_{n\in[N]} \rho_n \sumk\sumh \II\{\Dpikh(\skh,\akh;\Tkh) \geq \rho_{n-1}\} \\
    & \labelrel\lesssim{ineqn:clip} \sum_{n\in[N]} \rho_n \frac{(e^{|\beta| H}-1)^2 H^3S^2A\log(2HSAK/\delta)^2}{4^{n-1}|\beta|^2\Delta_\mmin^2} \\
    & \lesssim \frac{(e^{|\beta| H}-1)^2 H^3S^2A\log(2HSAK/\delta)^2}{|\beta|^2\Delta_\mmin},
\end{align*}
where step \eqref{ineqn:peeling} is due to the peeling argument that stratifies the range of $\Delta_\mmin$ into $N$ slices with end points $\{\rho_n\}_{n=1}^N$, step \eqref{ineqn:q-func-def} follows from $\Qsh(\skh,\akh) \geq \Qpih(\skh,\akh)$, and step \eqref{ineqn:clip} follows from \cref{lem:clip}.

\begin{lemma}\label{lem:clip}
    Under \cref{alg:rsv2}, with probability at least $1-\delta/2$, we have for any $n\in\ZZ_+$
    \begin{align*}
        \sumk\sumh \II\{\Dpikh(\skh,\akh;\Tkh) \geq \rho_n\} \lesssim \frac{(e^{|\beta| H}-1)^2 H^3S^2A}{4^n|\beta|^2\Delta_\mmin^2} \log(2HSAK/\delta)^2.
    \end{align*}
\end{lemma}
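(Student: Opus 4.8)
The plan is to adapt the peeling--clipping argument developed for risk-neutral optimistic value iteration (following the analysis of \citet{yang2021q,he2021logarithmic}) to the exponential Bellman structure of \cref{alg:rsv2}; we take $\beta>0$, the case $\beta<0$ being symmetric. First I would fix a high-probability event $\calG$ with $\prob[\calG]\ge1-\delta/2$ on which two things hold: (i) optimism, $Q_h^k(s,a)\ge Q_h^*(s,a)$ and hence $V_h^k(s)\ge V_h^*(s)$ for all $(k,h,s,a)$; and (ii) the one-step exponential backup used in \eqref{eqn:rsv2-update-w} concentrates \emph{uniformly} over value functions, i.e.\ for every $(k,h,s,a)$ with $N_h^k(s,a)\ge1$ and every $f\colon\cS\to[0,H]$, the deviation between the empirical next-state average of $e^{\beta f}$ and its true expectation is $O\big(|e^{\beta(H-h+1)}-1|\sqrt{S\log(2SAHK/\delta)/N_h^k(s,a)}\big)\asymp b_h(s,a)$. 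The $\sqrt{S\log(2SAHK/\delta)}$ factor in \eqref{eqn:rsv2-update-b} is exactly what makes this uniform-in-$f$ bound affordable, through an $\ell_1$/covering argument over the $S$-dimensional probability simplex. Both (i) and (ii) are already established in the analysis of \textsc{RSVI2} in \citet{fei2021exponential}, so I would simply invoke them; everything afterwards is deterministic on $\calG$.

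On $\calG$, the realized $\pi^k$-controlled gap is controlled by an ``optimism surplus''. Since $a_h^k$ maximizes $Q_h^k(s_h^k,\cdot)$ and $Q_h^k\ge Q_h^*$ pointwise, we get $e^{\beta V_h^*(s_h^k)}\le e^{\beta Q_h^k(s_h^k,a_h^k)}$, so (using $\bar\psi_\beta=1$ for $\beta>0$)
\[
\Dpikh(\skh,\akh;\Tkh)\ \le\ \frac{1}{\beta}\,e^{\beta R(\Tkh)}\big[e^{\beta Q_h^k(\skh,\akh)}-e^{\beta Q_h^{\pi^k}(\skh,\akh)}\big]\ =:\ \Xi_h^k .
\]
I would then unroll $\Xi_h^k$ along $\Tkh$, using \eqref{eqn:rsv2-update-q}--\eqref{eqn:rsv2-update-w} for $Q^k$, the exponential Bellman equation \eqref{eqn:bellman} for $Q^{\pi^k}$, part (ii) of $\calG$, and the identity $e^{\beta V_{h+1}^k(s_{h+1}^k)}-e^{\beta V_{h+1}^{\pi^k}(s_{h+1}^k)}=e^{\beta Q_{h+1}^k(s_{h+1}^k,a_{h+1}^k)}-e^{\beta Q_{h+1}^{\pi^k}(s_{h+1}^k,a_{h+1}^k)}$ (valid because $\pi^k$ is greedy for $Q^k$). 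One step of the recursion gives $\Xi_h^k\lesssim\Xi_{h+1}^k+\frac{1}{\beta} e^{\beta R(\Tkh)}b_h(\skh,\akh)+\zeta_h^k$, where $\zeta_h^k$ is a martingale difference arising from replacing the true one-step expectation by the sampled successor $s_{h+1}^k$. The decisive point --- and the reason behind the \emph{doubly decaying} bonus --- is that the prefix $e^{\beta R(\tau_{h'-1}^k)}\le e^{\beta(h'-1)}$ recombines with the step-$h'$ bonus scale via $e^{\beta R(\tau_{h'-1}^k)}|e^{\beta(H-h'+1)}-1|\le e^{\beta H}-1$, so that iterating collapses \emph{all} exponential dependence into a single factor $(e^{|\beta|H}-1)/|\beta|$, yielding on $\calG$
\[
\Dpikh(\skh,\akh;\Tkh)\ \lesssim\ \frac{e^{|\beta|H}-1}{|\beta|}\sum_{h'=h}^{H}\sqrt{\frac{S\log(2SAHK/\delta)}{N_{h'}^k(s_{h'}^k,a_{h'}^k)}}\ +\ (\text{martingale terms}).
\]

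To finish, I would peel at scale $\rho_n$. If the left-hand side exceeds $\rho_n$, then (having dealt with the martingale part; see below) some step $h'\in\{h,\dots,H\}$ along $\Tkh$ must satisfy $N_{h'}^k(s_{h'}^k,a_{h'}^k)\lesssim M_n:=(e^{|\beta|H}-1)^2 S H^2\log(2SAHK/\delta)^2/(|\beta|^2\rho_n^2)$, so that $\II\{\Dpikh(\skh,\akh;\Tkh)\ge\rho_n\}\le\sum_{h'=h}^H\II\{N_{h'}^k(s_{h'}^k,a_{h'}^k)\le M_n\}$. Summing over $k$ and $h$, re-indexing by $h'$, and using that a fixed triple $(h',s,a)$ accumulates at most $\lceil M_n\rceil$ visits with count $\le M_n$ while each such visit is charged by at most $H$ starting steps $h\le h'$, I would obtain $\sumk\sumh\II\{\Dpikh(\skh,\akh;\Tkh)\ge\rho_n\}\lesssim H\cdot HSA\cdot M_n$, which after substituting $\rho_n^2=4^n\Delta_\mmin^2$ is exactly the claimed bound. (The precise power of $H$ depends on the pigeonhole/clipping inequality used to pass from ``$\Xi_h^k\ge\rho_n$'' to ``some local bonus is large''; I would use the weighted clipping lemma of \citet{he2021logarithmic} to land on $H^3$.)

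The main obstacle is the unrolling step in the nonlinear regime. Unlike risk-neutral value iteration, the Bellman backup for $e^{\beta Q}$ carries the multiplicative prefix $e^{\beta r}$, so one must verify term by term that the accumulated prefix reward cancels against the $h$-dependent bonus scale $|e^{\beta(H-h+1)}-1|$ so that only $(e^{|\beta|H}-1)^2$ --- not a larger power --- survives; this cancellation is precisely what the doubly decaying bonus is engineered for. A second, related difficulty is that the corrections $\zeta_h^k$ are each of size $\Theta((e^{|\beta|H}-1)/|\beta|)$ and hence \emph{not} pointwise smaller than $\rho_n$, so they cannot be discarded naively; they must instead be folded into $\calG$ via a uniform Freedman inequality over a net of candidate value functions, which the $\sqrt{S\log(\cdot)}$ factor in the bonus again makes affordable. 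The remaining bookkeeping --- the peeling over $\{\rho_n\}$ and the accounting of $(h,s,a)$-charges --- is routine given the established technique.
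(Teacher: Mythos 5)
Your overall scaffolding (optimism, exponential Bellman unrolling, the prefix--bonus cancellation $e^{\beta R(\tau_{h'-1})}\,\lvert e^{\beta(H-h'+1)}-1\rvert\le e^{\lvert\beta\rvert H}-1$, peeling over $\rho_n$) matches the paper, but the pivotal step is different and, as written, does not go through. You argue \emph{per episode}: if $\Delta_h^{\pi^k}(\skh,\akh;\Tkh)\ge\rho_n$ then, after unrolling, some visit count $N_{h'}^k(s_{h'}^k,a_{h'}^k)$ along the realized trajectory must be small, and you then count low-count visits. This pointwise deduction is blocked by the very corrections $\zeta_{h'}^k$ you flag: each is the deviation of a \emph{single sampled next state} from its one-step expectation, of magnitude up to $(e^{\lvert\beta\rvert H}-1)/\lvert\beta\rvert$, and it has constant-order variance. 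No high-probability event can make an individual such term small, and a ``uniform Freedman inequality over a net of candidate value functions'' does not help --- uniformity over $f$ is not the issue (the relevant $f$ is adapted), and Freedman-type bounds concentrate \emph{sums} of these terms across episodes, not a single draw. The alternative of unrolling while keeping expectations only bounds the surplus by the \emph{expected} future bonus under $\pi^k$, which does not imply that the realized trajectory visited a low-count pair. So the implication ``large realized gap $\Rightarrow$ small realized count somewhere'' is false in general, and with it the subsequent $H\cdot HSA\cdot M_n$ counting (which, incidentally, as you set it up gives $H^4S^2A$ and defers the recovery of $H^3$ to an unspecified clipping lemma).

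The paper avoids this by never trying to control a single episode: for each fixed $h$ and level $n$ it sums the surplus $\bar\psi_\beta e^{\beta R_{h-1}^{k_i}}(e^{\beta Q_h^{k_i}}-e^{\beta Q_h^{\pi^{k_i}}})(s_h^{k_i},a_h^{k_i})$ over exactly the $M_{h,n}$ offending episodes, lower-bounds this sum by $\rho_n\lvert\beta\rvert M_{h,n}$ via optimism and greediness, and upper-bounds it by the accumulated bonuses (Cauchy--Schwarz plus pigeonhole, giving $\sqrt{M_{h,n}}$ scaling) plus the $\zeta$-terms, which now form a martingale sum over the selected episodes and concentrate at scale $(e^{\lvert\beta\rvert H}-1)\sqrt{HM_{h,n}\vartheta}$ by Azuma--Hoeffding. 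Solving $\rho_n\lvert\beta\rvert M_{h,n}\lesssim\sqrt{M_{h,n}}\cdot(\cdots)$ yields the $4^{-n}\Delta_{\min}^{-2}$ rate, and summing over $h$ gives the stated $H^3S^2A$. Any repair of your argument must reintroduce this sum-over-bad-episodes step so that the single-sample errors cancel in aggregate; once you do that, you are essentially reproducing the paper's proof rather than giving a genuinely pointwise alternative.
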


\begin{proof}
    Let us denote $M_{h,n}$ to be the number of episodes such that the sub-optimality of the episode at step $h$ is no less than $\rho_n$, \ie, $M_{h,n} \defeq \sumk \II\{\Dpikh(\skh,\akh;\Tkh) \geq \rho_n\}$. Especially, $k_1<\ldots<k_{M_{h,n}}<k$ denote the selected indices of previous episodes such that $\Delta_h^{\pi^{k_i}}(s_h^{k_i},a_h^{k_i};\tau_{h-1}^{k_i}) \geq \rho_n$ at step $h$, and we further define $R_h^{k_i} \defeq \sum_{j=1}^h r_j(s_j^{k_i},a_j^{k_i})$ to be the sum of rewards for the first $h$ steps within the selected episodes. For the convenience of notation, we use $\vartheta$ to denote the logarithmic factor $\log(2HSAK/\delta)$.
    Let us also define a shorthand $[\calP_h V](s,a) \defeq \expect_{s'}[V(s')]$ with respect to $\calP_h$ for any value function $V:\calS\to\RR$ and state-action pair $(s,a)\in\calS\times\calA$. 
    
    Notice that we can make a recursive upper bound on the gap between the optimistic value function $Q_h^{k_i}$ and policy controlled value function $Q_h^{\pi^{k_i}}$ as follows:
    \begin{align*}
        \sum_{i\in[M_{h,n}]} \bar\psi_\beta e^{\beta\cdot R_{h-1}^{k_i}} (e^{\beta\cdot Q_h^{k_i}} - e^{\beta\cdot Q_h^{\pi^{k_i}}})(s_h^{k_i},a_h^{k_i}) & \labelrel\leq{ineqn:add-substract} \bar\psi_\beta\sum_{i\in[M_{h,n}]}e^{\beta\cdot R_{h-1}^{k_i}} e^{\beta\cdot r_h(s_h^{k_i},a_h^{k_i})}(e^{\beta\cdot Q_{h+1}^{k_i}} - e^{\beta\cdot Q_{h+1}^{\pi^{k_i}}})(s_{h+1}^{k_i},a_{h+1}^{k_i}) \\
        & \qquad + \bar\psi_\beta\sum_{i\in[M_{h,n}]} e^{\beta\cdot R_{h-1}^{k_i}} 2b_h^{k_i} + \bar\psi_\beta\sum_{i\in[M_{h,n}]} e^{\beta\cdot R_{h-1}^{k_i}} \zeta_{h+1}^{k_i} \\
        & \leq \sum_{i\in[M_{h,n}]}\bar\psi_\beta e^{\beta\cdot R_h^{k_i}} (e^{\beta\cdot Q_{h+1}^{k_i}} - e^{\beta\cdot Q_{h+1}^{\pi^{k_i}}})(s_{h+1}^{k_i},a_{h+1}^{k_i}) \\
        & \qquad + e^{\beta\cdot (h-1)} \sum_{i\in[M_{h,n}]} 2\bar\psi_\beta b_h^{k_i} + e^{\beta\cdot (h-1)} \sum_{i\in[M_{h,n}]} \bar\psi_\beta\zeta_{h+1}^{k_i},
    \end{align*}
    where step \eqref{ineqn:add-substract} follows from the definition of \cref{alg:rsv2} with adding and subtracting $\expect_{s'} e^{\beta[r_h(\skh,\akh)+\Vkhh(s')]}$ at the same time.
    Recall that for each episode $k$ and step $h$ we defined the bonus term $b_h^k \defeq c|e^{\beta(H-h+1)}-1|\sqrt{\frac{S\vartheta}{\max\{1,N_h^k(\skh,\akh)\}}}$,
    where $c$ is a universal constant, and here we also define
    \begin{align*}
        \zeta_{h+1}^k & \defeq [\calP_h(e^{\beta[r_h(\skh,\akh)+\Vkhh(s')]} - e^{\beta[r_h(\skh,\akh)+V_{h+1}^{\pi^k}(s')]})](\skh,\akh) \\
        & \qquad - e^{\beta\cdot r_h(\skh,\akh)}(e^{\beta\cdot V_{h+1}^k} - e^{\beta\cdot V_{h+1}^{\pi^k}})(s_{h+1}^k).
    \end{align*}
    To simplify the notation, we denote $n_h^k \defeq \max\{1,N_h^k(\skh,\akh)\}$.
    Expanding the recursive inequality, with probability at least $1-\delta/2$ we have
    \begin{align}
        \sum_{i\in[M_{h,n}]} e^{\beta\cdot R_{h-1}^{k_i}} (e^{\beta\cdot Q_h^{k_i}} - e^{\beta\cdot Q_h^{\pi^{k_i}}})(s_h^{k_i},a_h^{k_i}) & \leq 2\sumh \sum_{i\in[M_{h,n}]} e^{\beta\cdot (h-1)} \bar\psi_\beta b_h^{k_i} + \sumh \sum_{i\in[M_{h,n}]} e^{\beta\cdot (h-1)} \bar\psi_\beta\zeta_{h+1}^{k_i} \nonumber \\
        & \labelrel\leq{ineqn:combine-upper-bounds} 2c(e^{|\beta| H}-1)\sqrt{2H^2S^2AM_{h,n}\vartheta^2} + (e^{|\beta| H}-1)\sqrt{2HM_{h,n}\vartheta}, \label{ineqn:opt-gap-upper}
    \end{align}
    where step \eqref{ineqn:combine-upper-bounds} follows from two upper bounds on $\sumh \sum_{i\in[M_{h,n}]}e^{\beta\cdot (h-1)} b_h^{k_i}$ and $\sumh \sum_{i\in[M_{h,n}]}e^{\beta\cdot (h-1)} \zeta_{h+1}^{k_i}$. 
    More specifically, for the summation on $b_h^{k_i}$ we have
    \begin{align*}
        \sumh \sum_{i\in[M_{h,n}]} e^{\beta\cdot (h-1)} b_h^{k_i} & \leq \sumh \sum_{i\in[M_{h,n}]} c|e^{\beta H}-1|\sqrt{\frac{S\vartheta}{n_h^{k_i}}} \\
        & \labelrel\leq{ineqn:titu} c|e^{\beta H}-1|\sqrt{S\vartheta} \sumh \sqrt{M_{h,n}}\sqrt{\sum_{i\in[M_{h,n}]}\frac{1}{n_h^{k_i}}} \\
        & \leq c|e^{\beta H}-1|\sqrt{S\vartheta} \sumh \sqrt{M_{h,n}}\sqrt{\sum_{s,a}\sum_{j=1}^{N_h^{M_{h,n}}(s,a)}\frac{1}{\max\{1,j\}}} \\
        & \labelrel\leq{ineqn:pigeon} c|e^{\beta H}-1|\sqrt{S\vartheta}\sqrt{2H^2SAM_{h,n}},
    \end{align*}
    where step \eqref{ineqn:titu} follows from the Cauchy–Schwarz inequality, and step \eqref{ineqn:pigeon} follows from the pigeonhole principle. Since each term of $e^{\beta (h-1)}\zeta_h^k$ can be controlled by $|e^{\beta (h-1)}\zeta_h^k|\leq |e^{\beta H}-1|$ for all $k\in[K]$ and $h\in[H]$, the Azuma-Hoeffding inequality gives
    \begin{align*}
        \prob\Big[ \sumh \sum_{i\in[M_{h,n}]}e^{\beta\cdot (h-1)} \zeta_{h+1}^{k_i} \geq \eps \Big] \leq \exp\Big( -\frac{\eps^2}{2HM_{h,n}(e^{\beta H}-1)^2} \Big)
    \end{align*}
    for any $\eps>0$, which means with probability at least $1-\delta/2$, 
    \begin{align*}
        \sumh \sum_{i\in[M_{h,n}]}e^{\beta\cdot (h-1)} \zeta_{h+1}^{k_i} \leq |e^{\beta H}-1|\sqrt{2HM_{h,n}\vartheta}.
    \end{align*}
    
    At the same time, we provide for the optimism gap a lower bound as follows:
    \begin{align}
        \sum_{i\in[M_{h,n}]} \bar\psi_\beta e^{\beta\cdot R_{h-1}^{k_i}} (e^{\beta\cdot Q_h^{k_i}} - e^{\beta\cdot Q_h^{\pi^{k_i}}})(s_h^{k_i},a_h^{k_i}) & \labelrel\geq{ineqn:rsv2-def} \bar\psi_\beta\sum_{i\in[M_{h,n}]} e^{\beta\cdot R_{h-1}^{k_i}} (e^{\beta\cdot Q_h^{k_i}(s_h^{k_i},\pi_h^*(s_h^{k_i}))} - e^{\beta\cdot Q_h^{\pi^{k_i}}(s_h^{k_i},a_h^{k_i})}) \nonumber \\
        & \labelrel\geq{ineqn:ucb-lem} \bar\psi_\beta\sum_{i\in[M_{h,n}]} e^{\beta\cdot R_{h-1}^{k_i}} (e^{\beta\cdot \Qsh(s_h^{k_i},\pi_h^*(s_h^{k_i}))} - e^{\beta\cdot Q_h^{\pi^{k_i}}(s_h^{k_i},a_h^{k_i})}) \nonumber \\
        & = \sum_{i\in[M_{h,n}]}|\beta|\Delta_h^{\pi^{k_i}}(s_h^{k_i},a_h^{k_i};\tau_h^{k_i}) \nonumber \\ 
        & \geq \rho_n|\beta| M_{h,n}, \label{ineqn:opt-gap-lower}
    \end{align}
    where step \eqref{ineqn:rsv2-def} is due to the construction of \cref{alg:rsv2} and step \eqref{ineqn:ucb-lem} follows from \cref{lem:ubc}.
    
    Finally, we combine the upper bound \eqref{ineqn:opt-gap-upper} and lower bound \eqref{ineqn:opt-gap-lower} of $\sum_{i\in[M_{h,n}]}\bar\psi_\beta e^{\beta\cdot R_{h-1}} (e^{\beta\cdot Q_h^{k_i}} - e^{\beta\cdot Q_h^{\pi^{k_i}}})(s_h^{k_i},a_h^{k_i})$ to get
    \begin{align*}
        \rho_n|\beta| M_{h,n} \leq 2c(e^{|\beta| H}-1)\sqrt{2H^2S^2AM_{h,n}\vartheta^2} + (e^{|\beta| H}-1)\sqrt{2HM_{h,n}\vartheta}.
    \end{align*}
    Solving for $M_{h,n}$ we get
    \begin{align*}
        M_{h,n} = \sumk \II\{\Dpikh(\skh,\akh;\Tkh) \geq \rho_n\} \lesssim \frac{(e^{|\beta| H}-1)^2 H^2S^2A\log(2HSAK/\delta)^2}{4^n|\beta|^2\Delta_\mmin^2}.
    \end{align*}
\end{proof}

\subsection{Upper Bounds for \cref{alg:rsq2}}\label{sec:rsq2-bound}

\subsubsection{Proof of \cref{thm:rsq2-bound-combined}}



\paragraph{High-probability regret bound.}
    By \cref{lem:high-prob-bound,lem:rsq2-gap-bound}, it holds with probability at least $1-\delta$ that
    \begin{align*}
        \calR(K) & \lesssim \sumk\sumh \Dh(\skh,\akh;\Tkh) + \frac{(e^{|\beta| H}-1)H\log(\log K/\delta)}{|\beta|} \\
        & \lesssim \frac{(e^{|\beta| H}-1)^2 H^4 SA\log(2HSAK/\delta)}{|\beta|^2\Delta_\mmin},
    \end{align*}
    where the last inequality is due to $\Delta_\mmin \leq \frac{1}{|\beta|}(e^{|\beta| H}-1)$.

\paragraph{Expected regret bound.}
    From \cref{lem:regret-bound} we have that $\calR(K) \leq \bar\psi_\beta \cdot \calE(K)$ and \cref{lem:rsq2-gap-bound} holds with probability at least $1-\delta/2$. The expected regret can be bounded through
    \begin{align*}
        \expect[\calR(K)] & \leq \bar\psi_\beta\expect[\calE(K)] \\
        & = \bar\psi_\beta\expect\Bigg[ \sumk\sumh \bDh(\skh,\akh;\tau_{h-1}^{\pi^k}) \Bigg] \\
        & = \sum_{\tau_{h-1}^{\pi^k}} \prob[\tau_{h-1}^{\pi^k}]\sumk\sumh \Dh(\skh,\akh;\tau_{h-1}^{\pi^k}) \\
        & \labelrel\leq{ineqn:strat} \sum_{n\in [N]} \rho_n \sumk\sumh\II\{\Dkh(\skh,\akh;\tau_{h-1}^{\pi^k}) \in I_n\} + \frac{\delta}{2|\beta|} HK(e^{|\beta| H}-1) \\
        & \labelrel\lesssim{ineqn:count-bound-lem} \sum_{n\in [N]} \frac{(e^{|\beta| H}-1)^2 H^4 SA\log(4HSAK)}{2^n|\beta|^2\Delta_\mmin} + \frac{\delta}{|\beta|} HK(e^{|\beta| H}-1) \\
        & \lesssim \frac{(e^{|\beta| H}-1)^2 H^4 SA}{|\beta|^2\Delta_\mmin}\log(HSAK), 
    \end{align*}
    where step \eqref{ineqn:strat} follows from stratifying the range of $\Dkh$ into $N \defeq \lceil \log_2(\frac{1}{|\beta|}(e^{|\beta|H}-1)/\Delta_\mmin)\rceil$ slices with end points $\{\rho_n\}_{n=1}^N$; step \eqref{ineqn:count-bound-lem} follows from \cref{lem:count-bound}; the last inequality is due to $\Delta_\mmin \leq \frac{1}{|\beta|} (e^{|\beta| H}-1)$ and taking $\delta=\frac{1}{HK}$.

\subsubsection{Proof of \cref{lem:rsq2-gap-bound} \label{sec:proof-rsq2-gap-bound}}

With the help of \cref{lem:count-bound}, it holds with probability at least $1-\delta/2$ that
\begin{align*}
    \sumk\sumh \II\{\Dkh(\skh,\akh;\tau_{h-1}^{\pi^k}) \in I_n\} \lesssim \frac{(e^{|\beta| H}-1)^2 H^4 SA\log(2HSAK/\delta)}{4^{n-1}|\beta|^2\Delta_\mmin^2}.
\end{align*}
Hence, with probability at least $1-\delta/2$, the sum of the cascaded gaps is bounded by
\begin{align*}
    \sumk\sumh \Dh(\skh,\akh;\Tkh) & \leq \sum_{n\in [N]} \rho_n  \sumk\sumh \II\{\Dkh(\skh,\akh;\tau_{h-1}^{\pi^k}) \in I_n\} \\
    & \lesssim \sum_{n\in [N]} \frac{(e^{|\beta| H}-1)^2   H^4 SA\log(2HSAK/\delta)}{2^{n-1}|\beta|^2\Delta_\mmin} \\
    & \lesssim
    \frac{(e^{|\beta| H}-1)^2 H^4 SA\log(2HSAK/\delta)}{|\beta|^2\Delta_\mmin},
\end{align*}
where $N \defeq \lceil \log_2(\frac{1}{|\beta|}(e^{|\beta|H}-1)/\Delta_\mmin)\rceil$ and the last step 
follows from an infinite sum of geometric series. Recall the definition $\rho_n \defeq 2^n\Delta_\mmin$ and interval $I_n \defeq [\rho_{n-1},\rho_n)$ for all $n\in[N]$.

\begin{lemma}\label{lem:count-bound}
    Under \cref{alg:rsq2}, with probability at least $1-\delta/2$, it holds that for any $n\in\ZZ_+$
    \begin{align*}
        \sumk\sumh \II\{\Dkh(\skh,\akh;\tau_{h-1}^{\pi^k}) \in I_n\} \lesssim \frac{(e^{|\beta| H}-1)^2 H^4SA}{4^n|\beta|^2\Delta_\mmin^2} \log(2HSAK/\delta).
    \end{align*}
\end{lemma}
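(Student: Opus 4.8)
Take $\beta>0$ without loss of generality (the semi-normalizer is then $1$, so $\Dkh(s,a;\tau_{h-1})=\tfrac1\beta e^{\beta R(\tau_{h-1})}[e^{\beta\Qkh(s,a)}-e^{\beta\Vsh(s)}]$, and $\beta<0$ is symmetric). It suffices to show, for an \emph{arbitrary} threshold $\rho>0$, that $\sumk\sumh\II\{\Dkh(\skh,\akh;\tau^{\pi^k}_{h-1})\ge\rho\}\lesssim\frac{(e^{\beta H}-1)^2H^4SA}{\beta^2\rho^2}\,\vartheta$ with $\vartheta\defeq\log(2HSAK/\delta)$: applying this with $\rho=\rho_{n-1}$ then yields the claim simultaneously for every $n$, since $\Dkh\in I_n$ forces $\Dkh\ge\rho_{n-1}$, $\rho_{n-1}^{-2}\asymp\rho_n^{-2}=(4^n\Delta_\mmin^2)^{-1}$, and the event below does not depend on $n$. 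I would first pass to the high-probability event (measure $\ge1-\delta/2$, from the optimism analysis of \textsc{RSQ2} in \citet{fei2021exponential} together with the martingale concentrations invoked later) on which $\Qkh\ge\Qsh$ everywhere. On it, $V^k_h\ge\Vsh$, and since $\akh=\argmax_a\Qkh(\skh,a)$ we get $\Qkh(\skh,\akh)\ge\Qkh(\skh,\pi^*_h(\skh))\ge\Qsh(\skh,\pi^*_h(\skh))=\Vsh(\skh)\ge\Qsh(\skh,\akh)$; hence the exponential optimism surplus $\Xi^k_h\defeq e^{\beta\Qkh(\skh,\akh)}-e^{\beta\Vsh(\skh)}$ is nonnegative, is $\le e^{\beta\Qkh(\skh,\akh)}-e^{\beta\Qsh(\skh,\akh)}$, and satisfies $e^{\beta R(\tau^{\pi^k}_{h-1})}\Xi^k_h=\beta\,\Dkh(\skh,\akh;\tau^{\pi^k}_{h-1})$. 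Fix $\rho$ and a step $h$, put $M_h\defeq\sumk\II\{\Dkh(\skh,\akh;\tau^{\pi^k}_{h-1})\ge\rho\}$, let $k_1<\cdots<k_{M_h}$ index the counted episodes, and set $\Psi_h\defeq\sum_{i=1}^{M_h}e^{\beta R(\tau^{\pi^{k_i}}_{h-1})}\Xi^{k_i}_h=\beta\sum_i\Dkh(\cdot)$. The target is the two-sided estimate $\beta\rho M_h\le\Psi_h\lesssim(e^{\beta H}-1)\,\vartheta\,\sqrt{H^3SA\,M_h}$, exactly as in the proof of \cref{lem:clip}; the left inequality is immediate and uses no exponential slack.

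For the right inequality I would unroll the exponential moving-average update \eqref{eqn:rsq2-update-w}--\eqref{eqn:rsq2-update-q}. With $\alpha^0_t\defeq\prod_{j\le t}(1-\alpha_j)$ and $\alpha^m_t\defeq\alpha_m\prod_{m<j\le t}(1-\alpha_j)$, expanding $e^{\beta\Qkh(s,a)}=G_h(s,a)$ (dropping the one-sided truncation, which only lowers $G_h$), subtracting $e^{\beta\Vsh(s)}\ge e^{\beta\Qsh(s,a)}=e^{\beta r_h(s,a)}\,\E_{s'\sim\calP_h(\cdot|s,a)}[e^{\beta V^*_{h+1}(s')}]$ via the exponential Bellman equation \eqref{eqn:bellman}, and using $\sum_{m=0}^t\alpha^m_t=1$, one obtains, for a counted episode $k_i$ with step-$h$ pair $(s,a)$, prior visit count $t$, and visiting episodes $l_1<\cdots<l_t$,
\[
\Xi^{k_i}_h\ \le\ \alpha^0_t\,e^{\beta(H-h+1)}+\sum_{m=1}^t\alpha^m_t\Big[e^{\beta r_h(s,a)}\,\Xi^{l_m}_{h+1}+e^{\beta r_h(s,a)}\,\eta^{l_m}_{h+1}+b_{h,m}\Big],
\]
where $\eta^{l_m}_{h+1}\defeq e^{\beta V^*_{h+1}(s^{l_m}_{h+1})}-\E_{s'}[e^{\beta V^*_{h+1}(s')}]$ is a martingale difference and I used $V^{l_m}_{h+1}(s^{l_m}_{h+1})=Q^{l_m}_{h+1}(s^{l_m}_{h+1},a^{l_m}_{h+1})$. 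Multiplying by $e^{\beta R(\tau^{\pi^{k_i}}_{h-1})}$ and iterating down through $h+1,\dots,H$ (with $\Xi^\cdot_{H+1}=0$ and $\alpha^0_t=0$ once $t\ge1$): in the propagation terms $e^{\beta R(\tau^{\pi^{k_i}}_{h-1})}e^{\beta r_h(s,a)}=e^{\beta R(\tau^{\pi^{k_i}}_{h})}$, so the recursion continues; in the bonus and martingale terms the accumulated reward factor is $\le e^{\beta(h'-1)}$ at level $h'$ and merges with $|e^{\beta(H-h'+1)}-1|$ (from $b_{h',\cdot}$) and with $e^{\beta(H-h')}-1$ (a bound on $|\eta^\cdot_{h'+1}|$) to keep every such term $\le e^{\beta H}-1$.

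Summing over $i\le M_h$ and over $h'=h,\dots,H$, I would invoke the learning-rate identities of \citet{jin2018q} — $\sum_{t\ge m}\alpha^m_t=1+\tfrac1H$, $\sum_{m\le t}\alpha^m_t/\sqrt m\le2/\sqrt t$, $\sum_{m\le t}(\alpha^m_t)^2\le2H/t$ — so that the total weight any episode--step $(l,h')$ receives across the $M_h$ recursions stays $\le(1+\tfrac1H)^{h'-h}\le e$. The accumulated bonus that replaces $b_{h',\cdot}$ is $\lesssim(e^{\beta H}-1)\sqrt{H\vartheta/t}$ per visit (the $\sqrt H$ and $\alpha_t$-scaling of $b_{h,t}$ are what inflate the power of $H$ relative to \cref{lem:clip}), so a pigeonhole over the $SA$ pairs plus the triangle inequality over the $\le H$ levels bounds the bonus contribution by $\lesssim(e^{\beta H}-1)\,\vartheta\,\sqrt{H^3SA\,M_h}$, while Azuma--Hoeffding controls the martingale contribution by $\lesssim(e^{\beta H}-1)\sqrt{H\,\vartheta\,M_h}$, which is of lower order. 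Plugging into $\beta\rho M_h\le\Psi_h$ and solving the resulting quadratic for $M_h$ gives $M_h\lesssim\frac{(e^{\beta H}-1)^2H^3SA}{\beta^2\rho^2}\,\vartheta$ (up to logarithmic factors), and summing over $h\in[H]$ — which supplies the last factor of $H$, as in \cref{lem:clip} — yields $\sumk\sumh\II\{\Dkh\ge\rho\}\lesssim\frac{(e^{\beta H}-1)^2H^4SA}{\beta^2\rho^2}\,\vartheta$, hence the claim after setting $\rho=\rho_{n-1}$.

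The main obstacle is the bookkeeping in the second and third steps. Unlike \textsc{RSVI2}, whose batch update lets the surplus recursion propagate along each bad episode's \emph{own} trajectory (cf.\ the proof of \cref{lem:clip}), the \textsc{RSQ2} running average couples each counted episode to its predecessors through the weights $\alpha^m_t$, so the episode--steps appearing in the unrolled recursion need not be ``bad''; closing the recursion without a spurious $\sqrt K$ blow-up requires the exact $\sum_{t\ge m}\alpha^m_t=1+\tfrac1H$ identity to keep the total propagated weight bounded, and extracting precisely $H^4SA$ (rather than more) hinges on correctly tracking the $\alpha_t$-scaled, $\sqrt H$-sized bonus $b_{h,t}$ through the $H$ levels together with the additional sum over $h$. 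A secondary point is probabilistic: the optimism event, the per-visit concentrations used to replace $\E_{s'}[e^{\beta V^*_{h+1}}]$ by realized transitions, and the Azuma--Hoeffding step must all be placed under a single $1-\delta/2$ budget via a union bound over the $O(SAHK)$ relevant quantities.
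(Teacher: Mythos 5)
Your argument follows essentially the same route as the paper's proof: the same peeling/counting setup with $\overline M_{h,n}$, the same one-step exponential recursion for the optimism surplus (which the paper simply imports as \cref{lem:gap-bound} from \citet{fei2021exponential} instead of re-deriving it from the \textsc{RSQ2} update as you do), the same learning-rate identities, pigeonhole over the $SA$ pairs, the $(1+1/H)^{w}\le e$ propagation across levels, the lower bound $\beta\rho\,\overline M_{h,n}$, the quadratic solve, and the final sum over $h$ for the extra factor of $H$. The only quibble is cosmetic: the aggregated bonus should carry $\sqrt{\vartheta}$ (i.e.\ $(e^{\beta H}-1)\sqrt{H^3SA\,\overline M_{h,n}\,\vartheta}$) rather than a factor $\vartheta$ outside the square root, so that solving the quadratic yields exactly the single $\log$ factor in the statement --- a slip your ``up to logarithmic factors'' already acknowledges.
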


\begin{proof}
    We focus on the case of $\beta>0$; the case for $\beta<0$ follows a similar argument. We denote the shorthand $\vartheta\defeq \log(2HSAK/\delta)$.
    For every $h\in[H]$ and $n\in[N]$, we define
    \begin{align*}
        \overline M_{h,n} \defeq \sumk \II\{\Dkh(\skh,\akh;\tau_{h-1}^{\pi^k}) \in I_n\}
    \end{align*}
    to be the number of episodes where the corresponding gap falls into the interval $I_n$.
    For any $i\in[\overline M_{h,n}]$, we denote $k_i$ to be the \ith episode with gap $\Dkh(\skh,\akh;\tau_{h-1}^{\pi^k})$ lying in the interval $I_n$.
    
    Recall that $ N_h^k(\skh,\akh)$ 
    is the number of visits on state-action pair $(s_h^k,a_h^k)$ at step $h$ prior to episode $k$, and $\gamma_{h,t} \defeq 2\sum_{i\in[t]} \alpha_t^i b_{h,i}$ is the corresponding bonus term for any given $t$. For the time being, we only consider step $h$, and we will ignore some of the subscripts on $h$ for simplicity of notation. In particular, we define $t_i \defeq N_h^{k_i}(s_h^{k_i},a_h^{k_i})$ and $\varkappa(s,a,j)$ to be the episode where $(s,a)$ is visited for the \jth time.
    We first apply \cref{lem:gap-bound} to get an upper bound with three components:
    \begin{align*}
        \sum_{i\in[\overline M_{h,n}]} \beta\Dh^{k_i}(s_h^{k_i},a_h^{k_i};\tau_{h-1}^{\pi^{k_i}}) & \leq e^{\beta(h-1)} \sum_{i\in[\overline M_{h,n}]} \left( e^{\beta\cdot Q_h^{k_i}(s_h^{k_i})} - e^{\beta\cdot\Qsh(s_h^{k_i},a_h^{k_i})} \right) \\
        & \leq e^{\beta(h-1)} \sum_{i\in[\overline M_{h,n}]} \alpha_{t_i}^0 (e^{\beta (H-h+1)} - 1) + 2e^{\beta(h-1)} \sum_{i\in[\overline M_{h,n}]} \gamma_{h,t_i} \\
        & \qquad + e^{\beta(h-1)} \sum_{i\in[\overline M_{h,n}]} \sum_{\ell\in[t_i]} \alpha_{t_i}^\ell  \cdot \left( e^{\beta\cdot V_{h+1}^{k_\ell}(s_{h+1}^{k_\ell})} - e^{\beta\cdot V_{h+1}^*(s_{h+1}^{k_\ell})} \right).
    \end{align*}
    Especially, the first term on the RHS can be bounded by the number of state-action pairs, \ie,
    \begin{align}\label{ineqn:empirical-gap-upper-bound-term1}
        \sum_{i\in[\overline M_{h,n}]} \alpha_{t_i}^0 (e^{\beta (H-h+1)} - 1) & \leq \sum_{i\in[\overline M_{h,n}]} (e^{\beta (H-h+1)} - 1)\cdot \II\{t_i = 0\} \leq (e^{\beta (H-h+1)} - 1)SA,
    \end{align}
    where $\alpha_{t_i}^i = 1$ only if $N_h^{k_i}(s_h^{k_i},a_h^{k_i}) = 0$, and $(s_h^{k_i},a_h^{k_i})\in\calS\times\calA$ only has $SA$ choices. The second term can be similarly controlled by 
    \begin{align*}
        2e^{\beta(h-1)}\sum_{i\in[\overline M_{h,n}]} \gamma_{h,t_i} & \leq 2e^{\beta(h-1)} \sum_{i\in[\overline M_{h,n}]} 4c (e^{\beta(H-h+1)}-1) \sqrt{\frac{H\vartheta}{t_i}} \\
        & \leq 8 e^{\beta(h-1)}(e^{\beta (H-h+1)}-1)c \sqrt{H\vartheta} \sum_{i\in[\overline M_{h,n}]} \frac{1}{\sqrt{N_h^{k_i}(s_h^{k_i},a_h^{k_i})}} \\
        & \leq 8 e^{\beta(h-1)}(e^{\beta (H-h+1)}-1)c \sqrt{H\vartheta} \sum_{(s,a)\in\calS\times\calA}\sum_{j=2}^{N_h^K(s,a)} \frac{\II\{\exists i\in[\overline M_{h,n}]\st\varkappa(s,a,j)=k_i\}}{\sqrt{j-1}},
    \end{align*}
    where for each state-action pair $(s,a)$, the weighted sum $\sum_{i=2}^{N_h^K(s,a)} \II\{\exists i\in[\overline M_{h,n}]\st\varkappa(s,a,j)=k_i\}/\sqrt{i-1}$ can be further bounded through
    \begin{align*}
        \sum_{i=2}^{N_h^K(s,a)} \frac{\II\{\exists i\in[\overline M_{h,n}]\st\varkappa(s,a,j)=k_i\}}{\sqrt{j-1}} & \leq \sum_{i=1}^{L_{s,a}} \frac{1}{\sqrt{i}} \leq 2\sqrt{L_{s,a}},
    \end{align*}
    where $L_{s,a} \defeq \sum_{j=1}^{N_h^K(s,a)} \II\{\exists i\in[\overline M_{h,n}]\st\varkappa(s,a,j)=k_i\}$. Then we can also get an upper bound on the second term as
    \begin{align}
        2e^{\beta(h-1)}\sum_{i\in[\overline M_{h,n}]} \gamma_{h,t_i} & \leq 16 e^{\beta(h-1)}(e^{\beta (H-h+1)}-1)c \sqrt{H\vartheta} \sum_{(s,a)\in\calS\times\calA}\sqrt{L_{s,a}} \nonumber \\
        & \labelrel\leq{ineqn:sum-l} 16 e^{\beta(h-1)}(e^{\beta (H-h+1)}-1)c \sqrt{SA\overline M_{h,n} H\vartheta}, \label{ineqn:empirical-gap-upper-bound-term2}
    \end{align}
    where step \eqref{ineqn:sum-l} follows from $\sum_{(s,a)\in\calS\times\calA} L_{s,a} = \overline M_{h,n}$.
    For the third term, by rearranging the order of summations and taking advantage of the fact that $\Vkh(\skh) = \Qkh(\skh,\akh)$ and $\Vsh(\skh) \geq \Qshh(\skhh,\akhh)$, we get
    \begin{align*}
        & \sum_{i\in[\overline M_{h,n}]} \sum_{\ell\in[t_i]} \alpha_{t_i}^\ell   \left( e^{\beta\cdot V_{h+1}^{k_\ell}(s_{h+1}^{k_\ell})} - e^{\beta\cdot V_{h+1}^*(s_{h+1}^{k_\ell})} \right) \\
        =\ & \sum_{\ell\in[K]} \left( e^{\beta\cdot V_{h+1}^\ell} - e^{\beta\cdot V_{h+1}^*} \right) (s_{h+1}^\ell) \sum_{j=N_h^\ell(s_h^\ell,a_h^\ell)+1}^{N_h^K(s_h^\ell,a_h^\ell)} \II\{\exists i\in[\overline M_{h,n}]\st \varkappa_h(s_h^\ell,a_h^\ell,j)=k_i\}\cdot \alpha_j^{N_h^\ell(s_h^\ell,a_h^\ell)+1} \\
        \leq\ & \sum_{\ell\in[K]} \left( e^{\beta\cdot Q_{h+1}^\ell} - e^{\beta\cdot Q_{h+1}^*} \right) (s_{h+1}^\ell) \sum_{j=N_h^\ell(s_h^\ell,a_h^\ell)+1}^{N_h^K(s_h^\ell,a_h^\ell)} \II\{\exists i\in[\overline M_{h,n}]\st \varkappa_h(s_h^\ell,a_h^\ell,j)=k_i\}\cdot \alpha_j^{N_h^\ell(s_h^\ell,a_h^\ell)+1}.
    \end{align*}
    Denote $\phi^\ell \defeq \sum_{j=N_h^\ell(s_h^\ell,a_h^\ell)+1}^{N_h^K(s_h^\ell,a_h^\ell)} \II\{\exists i\in[\overline M_{h,n}]\st \varkappa_h(s_h^\ell,a_h^\ell,j)=k_i\}\cdot \alpha_j^{N_h^\ell(s_h^\ell,a_h^\ell)+1}$, and the above inequality turns into 
    \begin{align*}
        \sum_{i\in[\overline M_{h,n}]} \sum_{\ell\in[t_i]} \alpha_{t_i}^\ell   \left( e^{\beta\cdot V_{h+1}^{k_\ell}(s_{h+1}^{k_\ell})} - e^{\beta\cdot V_{h+1}^*(s_{h+1}^{k_\ell})} \right) \leq \sum_{\ell\in[K]} \phi^\ell \left( e^{\beta\cdot Q_{h+1}^\ell} - e^{\beta\cdot Q_{h+1}^*} \right) (s_{h+1}^\ell).
    \end{align*}
    Recall that each element in $\{\phi^\ell\}_{\ell\in[K]}$ is bounded by $1+\frac{1}{H}$, and expand the recursive inequality
    \begin{align*}
        e^{\beta(h-1)} \sum_{i\in[\overline M_{h,n}]} \left( e^{\beta\cdot Q_h^{k_i}(s_h^{k_i})} - e^{\beta\cdot\Qsh(s_h^{k_i},a_h^{k_i})} \right) & \leq (e^{\beta H}-1)SA + 16 (e^{\beta H}-1)c \sqrt{SA\overline M_{h,n} H\vartheta} \\
        & \qquad + e^{\beta(h-1)}\sum_{\ell\in[K]} \phi^\ell \left( e^{\beta\cdot Q_{h+1}^\ell} - e^{\beta\cdot Q_{h+1}^*} \right) (s_{h+1}^\ell)
    \end{align*}
    to get 
    \begin{align}
        e^{\beta(h-1)} \sum_{i\in[\overline M_{h,n}]} \left( e^{\beta\cdot Q_h^{k_i}(s_h^{k_i})} - e^{\beta\cdot\Qsh(s_h^{k_i},a_h^{k_i})} \right) & \leq \sum_{w=0}^{H-h} SA(e^{\beta H}-1)\left( 1+\frac{1}{H} \right)^{w} \nonumber \\
        & \qquad + \sum_{w=0}^{H-h} 16 (e^{\beta H}-1)c (1+1/H)^{w}\sqrt{SA\overline M_{h,n} H\vartheta} \nonumber \\
        & \leq eHSA(e^{\beta H}-1) + 16eH(e^{\beta H}-1)c\sqrt{SA\overline M_{h,n}H\vartheta}. \label{ineqn:empirical-gap-upper-bound-term3}
    \end{align}
    With \eqref{ineqn:empirical-gap-upper-bound-term1}, \eqref{ineqn:empirical-gap-upper-bound-term2}, and \eqref{ineqn:empirical-gap-upper-bound-term3}, we obtain the complete upper bound
    \begin{align}
        \sum_{i\in[\overline M_{h,n}]} \beta\Dh^{k_i}(s_h^{k_i},a_h^{k_i};\tau_{h-1}^{\pi^{k_i}}) & \leq e^{\beta(h-1)} \sum_{i\in[\overline M_{h,n}]} \left( e^{\beta\cdot Q_h^{k_i}(s_h^{k_i})} - e^{\beta\cdot\Qsh(s_h^{k_i},a_h^{k_i})} \right) \nonumber \\
        & \leq (e^{\beta H}-1)(eHSA  + 16eHc\sqrt{SA\overline M_{h,n}H\vartheta}). \label{ineqn:empirical-gap-upper-bound}
    \end{align}
    On the other side, we can also obtain a lower bound on the sum of gaps following the stratification of the empirical gap $\bDkh$:
    \begin{align}\label{ineqn:empirical-gap-lower-bound}
        \sum_{i\in[\overline M_{h,n}]} \beta\Dh^{k_i}(s_h^{k_i},a_h^{k_i};\tau_{h-1}^{\pi^{k_i}}) \geq \rho_{n-1}\beta \overline M_{h,n}.
    \end{align}
    
    We combine both the upper bound \eqref{ineqn:empirical-gap-upper-bound} and the lower bound \eqref{ineqn:empirical-gap-lower-bound} on $\sum_{i\in[\overline M_{h,n}]} \beta\Dh^{k_i}(s_h^{k_i},a_h^{k_i};\tau_{h-1}^{\pi^{k_i}})$ to get $\rho_{n-1}\beta \overline M_{h,n} \leq (e^{\beta H}-1)(eHSA + 8eHc\sqrt{SA\overline M_{h,n}H\vartheta})$, which leads to a sufficient condition
    \begin{align*}
        \overline M_{h,n} = \sumk \II\{\Dkh(\skh,\akh;\tau_{h-1}^{\pi^k}) \in I_n\} \lesssim \frac{(e^{\beta H}-1)^2 H^3SA\vartheta}{4^n\beta^2\Delta_\mmin^2}.
    \end{align*}
    
    Recall that $\vartheta = \log(2HSAK/\delta)$. Sum the equation above over $h\in[H]$ to get
    \begin{align*}
        \sumk\sumh \II\{\Dkh(\skh,\akh;\tau_{h-1}^{\pi^k}) \in I_n\} \lesssim \frac{(e^{\beta H}-1)^2 H^4SA\log(2HSAK/\delta)}{4^n\beta^2\Delta_\mmin^2}.
    \end{align*}
\end{proof}

\section{Lower Bounds}\label{sec:proof-lower}

\subsection{Proof of \cref{thm:lower-combined}}

We prove the two cases of the theorem in \cref{lem:lower,lem:lower-beta}, respectively.
We construct two bandit problems such that for any policy $\pi$ the maximum regret in these two problems is lower bounded. Let us assume the first bandit machine \textsc{Bandit I} has two arms, where the first arm has reward $H-1$ with probability $p_1^{\II\{\beta>0\}}(1-p_1)^{\II\{\beta<0\}}$ and reward $0$ with probability $(1-p_1)^{\II\{\beta>0\}}p_1^{\II\{\beta<0\}}$,  whereas the second arm has reward $H-1$ with probability $p_2^{\II\{\beta>0\}}(1-p_2)^{\II\{\beta<0\}}$ and reward $0$ with probability $(1-p_2)^{\II\{\beta>0\}}p_2^{\II\{\beta<0\}}$. Similarly, the second bandit machine \textsc{Bandit II} is also assumed to have two arms with the same Bernoulli-type rewards, with corresponding probabilities $q_1$ and $q_2$, respectively.
    
It is not hard to see that a $K$-round bandit problem described above is equivalent to a $K$-episode and $H$-step \gls*{MDP} where the state space $\calS$ has three elements: initial state $s_0$, absorbing state $s_1$, and absorbing state $s_2$. At the first step, two actions $a_1,a_2\in\calA$ are available to the state $s_0$. More specifically, if one takes action $a_1$, then with probability $p_1^{\II\{\beta>0\}}(1-p_1)^{\II\{\beta<0\}}$ for \textsc{Bandit I} (or $q_1^{\II\{\beta>0\}}(1-q_1)^{\II\{\beta<0\}}$ for \textsc{Bandit II}) the environment transitions into state $s_1$ and with probability $(1-p_1)^{\II\{\beta>0\}}p_1^{\II\{\beta<0\}}$ for \textsc{Bandit I} (or $(1-q_1)^{\II\{\beta>0\}}q_1^{\II\{\beta<0\}}$ for \textsc{Bandit II}) it transitions into state $s_2$. Similarly, if one takes action $a_2$, the environment transitions according to $p_2$ for \textsc{Bandit I} (or $q_2$ for \textsc{Bandit II}). Moreover, we define reward function $r_h(s_0,a) = 0$, $r_h(s_1,a) = 1$, and $r_h(s_2,a) = 0$. In short, taking action $a_1$ is equivalent to pulling the first arm on the corresponding bandit machine and taking action $a_2$ is equivalent to pulling the second arm.
    
Now we start focusing on the lower bound analysis of the bandit problem. In particular, we define the transition probability $p_1$, $p_2$, $q_1$, and $q_2$ such that the first arm is optimal on \textsc{Bandit I} while the second arm is optimal on \textsc{Bandit II}, \ie, 
\begin{align*}
    p_2 = u_{\beta,H},\quad p_1 = q_1 = p_2+ (-1)^{\II\{\beta<0\}} \xi,\quad q_2 = p_2+(-1)^{\II\{\beta<0\}}\cdot 2\xi,
\end{align*}
where we select a positive quantity $\xi \leq \frac{1}{4}u_{\beta,H}$. The quantity $u_{\beta,H}$ is set to be $e^{-|\beta|(H-1)}$ for \cref{lem:lower} and $\frac{1}{H}$ for \cref{lem:lower-beta}.

Due to the design of the \glspl*{MDP}, $\bDh(s,a;\tau_{h-1}) = 0$ for any $h \geq 2$ and $\tau_{h-1}$. For state $s_1$, the minimal sub-optimality gap is therefore given by $\xoverline\Delta_1(s_1,a) \defeq \xoverline\Delta_1(s_1,a; \trj_0) =  \frac{1}{|\beta|}(e^{\beta\cdot V_1^*(s_1)} - e^{\beta\cdot Q_1^*(s_1,a)})$ for some action $a$, which it is by design determined by $a$ and the randomness of the environment. More specifically, $\xoverline\Delta_1(s,a) = 0$ if it takes the optimal action $a = a^*$ and the only non-zero sub-optimality gap is given by the action $a = a'$ that takes the sub-optimal arm $a'$, \ie,
\begin{align*}
    \xoverline\Delta_\mmin & = \frac{1}{|\beta|} \left| e^{\beta\cdot V_1^*(s_1)} - e^{\beta\cdot Q_1^*(s_1,a')} \right| \\
    & = \frac{1}{|\beta|} \left| p_1 e^{\beta(H-1)} + (1-p_1) - p_2 e^{\beta(H-1)} - (1-p_2) \right| \\
    & = \frac{1}{|\beta|} \left| (p_1-p_2)e^{\beta(H-1)} - (p_1-p_2) \right| \\
    & = \frac{1}{|\beta|} |e^{\beta(H-1)}-1|\xi.
\end{align*}

\cref{thm:lower-combined} follows directly by combining \cref{lem:lower,lem:lower-beta}.

\begin{lemma}\label{lem:lower}
    If $|\beta|(H-1) \geq \log 4$, $\Delta_\mmin \leq \frac{1}{8|\beta|}$, and $K \asymp \frac{1}{|\beta|^2\Delta_\mmin^2} (e^{|\beta|(H-1)}-1)$, then the regret of any policy obeys
    \begin{align*}
        \expect[\calR(K)] \gtrsim \frac{e^{|\beta|(H-1)}-1}{|\beta|^2\Delta_\mmin}.
    \end{align*}
\end{lemma}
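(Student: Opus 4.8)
The two instances are already in place: the two two-armed bandits \textsc{Bandit I} and \textsc{Bandit II} (equivalently, the three-state episodic MDPs) whose first arms are \emph{identical} and whose second arms differ, with arm~$1$ optimal in \textsc{Bandit I} and arm~$2$ optimal in \textsc{Bandit II}. The plan is a two-point (Le Cam) argument: any algorithm that keeps its regret small on \textsc{Bandit I} must pull the suboptimal arm~$2$ only rarely; but because the two instances differ only through pulls of arm~$2$, it then also pulls arm~$2$ rarely on \textsc{Bandit II}, where arm~$1$ is the suboptimal choice, and so it incurs large regret there. Taking the worse of the two instances yields the bound.

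\emph{Step 1: per-episode regret.} Because the only nonzero cascaded gap sits at step~$1$, state $s_0$, and equals $\Delta_\mmin=\psi_\beta\,[e^{\beta V_1^*(s_0)}-e^{\beta Q_1^*(s_0,a')}]$ with $a'$ the suboptimal action and $e^{\beta R(\tau_0)}=1$, while the per-episode value regret from choosing $a'$ is $\Delta_{\mathrm{val}}\defeq V_1^*(s_0)-Q_1^*(s_0,a')$, I would first show $\Delta_{\mathrm{val}}\asymp\Delta_\mmin$. Writing $e^{\beta V}-e^{\beta Q}=e^{\beta Q}(e^{\beta(V-Q)}-1)$ and using that $|\beta|\Delta_{\mathrm{val}}$ is bounded (a consequence of $\Delta_\mmin\le \tfrac1{8|\beta|}$), so that $e^{|\beta|\Delta_{\mathrm{val}}}-1\asymp |\beta|\Delta_{\mathrm{val}}$, together with the explicit value of $e^{\beta Q_1^*(s_0,a')}$ in the construction, gives $\Delta_{\mathrm{val}}\asymp\Delta_\mmin$; for $\beta>0$ this is just $\Delta_{\mathrm{val}}\asymp\xi(e^{\beta(H-1)}-1)/\beta\asymp\Delta_\mmin$, and the case $\beta<0$ is handled identically after inserting the semi-normalizer $\bar\psi_\beta$. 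Consequently, if $N_2$ is the number of episodes in which arm~$2$ is pulled, the regret on \textsc{Bandit I} is $\gtrsim\Delta_\mmin N_2$ and, symmetrically (with $N_1=K-N_2$), the regret on \textsc{Bandit II} is $\gtrsim\Delta_\mmin N_1$.

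\emph{Step 2: change of measure.} Let $\prob_{\mathrm I}$ and $\prob_{\mathrm{II}}$ denote the laws of the full $K$-episode interaction under the two instances. Since arm~$1$ is identical in both, the chain rule for KL divergence gives $\mathrm{KL}(\prob_{\mathrm I}\,\|\,\prob_{\mathrm{II}})=\E_{\mathrm I}[N_2]\cdot\mathrm{KL}(\nu_2^{\mathrm I}\,\|\,\nu_2^{\mathrm{II}})$, where $\nu_2^{\mathrm I},\nu_2^{\mathrm{II}}$ are the Bernoulli reward laws of arm~$2$. These differ by $2\xi$, and one of their parameters is of order $u_{\beta,H}=e^{-|\beta|(H-1)}$; using $\mathrm{KL}(\mathrm{Ber}(p)\,\|\,\mathrm{Ber}(q))\le (p-q)^2/(q(1-q))$ with $\xi\le\tfrac14 u_{\beta,H}$ and $|\beta|(H-1)\ge\log 4$ (so the parameters stay bounded away from $0$ and $1$) gives $\mathrm{KL}(\nu_2^{\mathrm I}\,\|\,\nu_2^{\mathrm{II}})\lesssim\xi^2 e^{|\beta|(H-1)}$, hence $\mathrm{KL}(\prob_{\mathrm I}\,\|\,\prob_{\mathrm{II}})\lesssim K\xi^2 e^{|\beta|(H-1)}$ (using $\E_{\mathrm I}[N_2]\le K$). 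Since $\Delta_\mmin\asymp\xi(e^{|\beta|(H-1)}-1)/|\beta|\asymp\xi e^{|\beta|(H-1)}/|\beta|$ for $|\beta|(H-1)\ge\log4$, the prescribed $K\asymp(e^{|\beta|(H-1)}-1)/(|\beta|^2\Delta_\mmin^2)\asymp 1/(\xi^2 e^{|\beta|(H-1)})$ makes $\mathrm{KL}(\prob_{\mathrm I}\,\|\,\prob_{\mathrm{II}})$ a small absolute constant (this pins down the implicit constant in ``$K\asymp$''), and Pinsker's inequality then gives $\mathrm{TV}(\prob_{\mathrm I},\prob_{\mathrm{II}})\le\tfrac12$.

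\emph{Step 3: combine.} Let $\mathcal A$ be the event that arm~$1$ is pulled in at least $K/2$ episodes. On $\mathcal A^c$ the \textsc{Bandit I} regret is $\gtrsim\tfrac K2\Delta_\mmin$, and on $\mathcal A$ the \textsc{Bandit II} regret is $\gtrsim\tfrac K2\Delta_\mmin$, so $\E_{\mathrm I}[\calR(K)]+\E_{\mathrm{II}}[\calR(K)]\gtrsim\tfrac K2\Delta_\mmin\,(\prob_{\mathrm I}(\mathcal A^c)+\prob_{\mathrm{II}}(\mathcal A))\ge\tfrac K2\Delta_\mmin\,(1-\mathrm{TV}(\prob_{\mathrm I},\prob_{\mathrm{II}}))\gtrsim K\Delta_\mmin$. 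Passing to the worse instance gives $\E[\calR(K)]\gtrsim K\Delta_\mmin\asymp(e^{|\beta|(H-1)}-1)/(|\beta|^2\Delta_\mmin)$. The main obstacle is the tension in the choice of $K$: it must be large enough for a linear-in-$K$ regret bound to be meaningful, yet small enough that even pulling arm~$2$ in \emph{every} episode does not give the algorithm enough information to separate the two instances --- the value $K\asymp(e^{|\beta|(H-1)}-1)/(|\beta|^2\Delta_\mmin^2)$ is exactly the threshold where both demands are met --- and, intertwined with this, the verification that $\Delta_{\mathrm{val}}\asymp\Delta_\mmin$ despite the nonlinearity of the entropic value, which is precisely where the normalizer $\psi_\beta$ is essential.
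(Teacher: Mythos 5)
Your overall route is the paper's: the same pair of two-armed bandit instances with the common first arm ($p_1=q_1$), the same chain-rule reduction of the KL divergence to $\E[N_2]$ times a Bernoulli KL (the paper does this via \citet[Lemma 15.1]{lattimore2020bandit} inside \cref{lem:divergence}), the same bound $\mathrm{KL}(\mathrm{Ber}(p)\Vert\mathrm{Ber}(q))\le(p-q)^2/(q(1-q))\lesssim\xi^2e^{|\beta|(H-1)}$, and a two-point test; replacing the paper's Bretagnolle--Huber step by Pinsker/TV is immaterial at the prescribed $K$. For $\beta>0$ your argument is correct.

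The genuine gap is the sentence ``the case $\beta<0$ is handled identically after inserting the semi-normalizer $\bar\psi_\beta$.'' It is not. For $\beta<0$ one has $\E e^{\beta\Gamma_{a^*}}\asymp e^{-|\beta|(H-1)}$ (not $\asymp 1$), while $\Delta_\mmin$ carries the normalizer $\bar\psi_\beta=e^{|\beta|H}$; in this construction that gives $\Delta_\mmin\asymp\xi e^{|\beta|H}/|\beta|$, whereas the per-episode value regret is only $\Delta_{\mathrm{val}}\asymp\xi e^{|\beta|(H-1)}/|\beta|=e^{-|\beta|}\Delta_\mmin$. So both relations you calibrate with, $\Delta_{\mathrm{val}}\asymp\Delta_\mmin$ and $\Delta_\mmin\asymp\xi(e^{|\beta|(H-1)}-1)/|\beta|$, hold only for $\beta>0$; tracing constants for $\beta<0$, your chain with the advertised $K$ produces a bound smaller than the claimed one by a factor $e^{|\beta|}$, which is not a universal constant in this regime (this wrinkle is partly inherited from the paper, whose own proof uses $K=\lfloor p_2(1-p_2)/\xi^2\rfloor$, matching the advertised $K$ only when $\beta>0$). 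The paper never asserts $\Delta_{\mathrm{val}}\asymp\Delta_\mmin$: \cref{lem:lower-bandit} keeps everything in terms of $\xi$, lower-bounding the per-episode ratio $|\E e^{\beta\Gamma_{a'}}-\E e^{\beta\Gamma_{a^*}}|/\E e^{\beta\Gamma_{a^*}}\ge\tfrac14\xi(e^{|\beta|(H-1)}-1)$ directly (for $\beta<0$ the exponential boost comes from the small denominator, not from $\bar\psi_\beta$), and converts to $\Delta_\mmin$ only in the last lines of \cref{lem:lower}, where the extra $e^{|\beta|}$ is discarded in the harmless direction via $p_2(e^{|\beta|(H-1)}-1)=1-p_2\ge\tfrac12$. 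To repair your write-up, either keep the per-episode bound in the form $\xi(e^{|\beta|(H-1)}-1)/|\beta|$ and choose $K\asymp p_2(1-p_2)/\xi^2$, postponing the translation into $\Delta_\mmin$ to the very end as the paper does, or carry out the $\beta<0$ bookkeeping explicitly instead of by appeal to symmetry.
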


\begin{proof}
    Applying \cref{lem:lower-bandit} with $K = \lfloor p_2(1-p_2)/\xi^2\rfloor$, we get
    \begin{align*}
        \expect[\calR(K)] & \gtrsim \frac{e^{|\beta|(H-1)}-1}{|\beta|}\cdot \frac{p_2(1-p_2)}{\xi} \\
        & \labelrel\gtrsim{ineqn:p1} \frac{e^{|\beta|(H-1)}-1}{|\beta|}\cdot \frac{p_2}{\xi} \\
        & \labelrel\gtrsim{ineqn:sub-gap} \frac{e^{|\beta|(H-1)}-1}{|\beta|}\cdot \frac{p_2|e^{\beta(H-1)}-1|}{|\beta|\xoverline\Delta_\mmin} \\
        & \labelrel\gtrsim{ineqn:gap} \frac{e^{|\beta|(H-1)}-1}{|\beta|}\cdot \frac{p_2(e^{|\beta|(H-1)}-1)}{|\beta|\Delta_\mmin} \\
        & = \frac{e^{|\beta|(H-1)}-1}{|\beta|}\cdot \frac{1-p_2}{|\beta|\Delta_\mmin} \\
        & \labelrel\gtrsim{ineqn:p2} \frac{e^{|\beta|(H-1)}-1}{|\beta|^2\Delta_\mmin},
    \end{align*}
    where step \eqref{ineqn:p1} and step \eqref{ineqn:p2} follow from $1-p_2 \geq \frac{1}{2}$, step \eqref{ineqn:sub-gap} follows from $\xoverline\Delta_\mmin = |e^{\beta(H-1)}-1|\xi/|\beta|$, and step \eqref{ineqn:gap} is due to the definition of $\Delta_\mmin$, and the equality follows from $p_2 = e^{-|\beta|(H-1)}$.
\end{proof}

\begin{lemma}\label{lem:lower-beta}
    If $H\geq 8$, $|\beta|(H-1)\leq \log H$, $\Delta_\mmin \leq \frac{1}{4|\beta|H}(e^{|\beta|(H-1)}-1)$ and the number of episodes $K \asymp \frac{1}{H|\beta|^2\Delta_\mmin^2} (e^{|\beta|(H-1)}-1)^2$, the regret of any policy obeys
    \begin{align*}
        \expect[\calR(K)] \geq \frac{H}{\Delta_\mmin}.
    \end{align*}
\end{lemma}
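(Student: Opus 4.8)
The plan is to reuse the two-bandit construction set up just before \cref{lem:lower}, now with the free parameter $u_{\beta,H}=1/H$ (so $p_2=1/H$), and to invoke the bandit lower bound \cref{lem:lower-bandit} exactly as in the proof of \cref{lem:lower}; only the parameter bookkeeping differs. First I would check that this instantiation is admissible under the hypotheses. With $p_2=u_{\beta,H}=1/H$ and arm-gap $\xi$, the computation preceding \cref{lem:lower} gives $\xoverline\Delta_\mmin=\frac{1}{|\beta|}|e^{\beta(H-1)}-1|\,\xi$ and $\Delta_\mmin=\bar\psi_\beta\xoverline\Delta_\mmin$; separating the signs of $\beta$ (using $\bar\psi_\beta=1$ for $\beta>0$ and $\bar\psi_\beta=e^{-\beta H}$ for $\beta<0$) yields $\xi\asymp |\beta|\Delta_\mmin/(e^{|\beta|(H-1)}-1)$ uniformly in $\sign(\beta)$, so the hypothesis $\Delta_\mmin\le\frac{1}{4|\beta|H}(e^{|\beta|(H-1)}-1)$ is precisely what makes $\xi\le\frac14 u_{\beta,H}$, the admissibility requirement of the construction. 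The prescribed $K=\lfloor p_2(1-p_2)/\xi^2\rfloor$ then has order $\frac{1}{H|\beta|^2\Delta_\mmin^2}(e^{|\beta|(H-1)}-1)^2$, matching the statement, and it is a positive integer since $\xi\le\frac1{4H}$ and $1-p_2\ge\frac78$ (from $H\ge8$).

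Next I would apply \cref{lem:lower-bandit} with this $K$, exactly as in the proof of \cref{lem:lower}, to obtain
\[
  \expect[\calR(K)] \;\gtrsim\; \frac{e^{|\beta|(H-1)}-1}{|\beta|}\cdot\frac{p_2(1-p_2)}{\xi}.
\]
In this step the hypothesis $|\beta|(H-1)\le\log H$ plays the same role that $|\beta|(H-1)\ge\log4$ does for \cref{lem:lower}: combined with $p_2=1/H$ it keeps $p_2e^{\beta(H-1)}+1-p_2$ — the factor that appears when converting the exponential-value gap into the value gap — on the order $\Theta(1)$, so that each suboptimal pull contributes $\Theta(\Delta_\mmin)$ to the regret rather than a deflated amount.

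To finish, substitute $p_2=1/H$, $1-p_2\ge\frac78$, and $\xi\asymp|\beta|\Delta_\mmin/(e^{|\beta|(H-1)}-1)$ into the displayed bound:
\[
  \expect[\calR(K)] \;\gtrsim\; \frac{e^{|\beta|(H-1)}-1}{|\beta|}\cdot\frac1H\cdot\frac{e^{|\beta|(H-1)}-1}{|\beta|\Delta_\mmin}
  \;=\; \frac{1}{H\Delta_\mmin}\Big(\frac{e^{|\beta|(H-1)}-1}{|\beta|}\Big)^{2}.
\]
The elementary inequality $\frac{e^x-1}{x}\ge1$ for $x>0$ gives $\frac{e^{|\beta|(H-1)}-1}{|\beta|}\ge H-1\ge\frac78H$ (using $H\ge8$), so the right-hand side is $\gtrsim H/\Delta_\mmin$, which after absorbing the universal constant from \cref{lem:lower-bandit} yields the claimed $\expect[\calR(K)]\ge H/\Delta_\mmin$.

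The main obstacle is not this derivation — which is short once the substitutions are lined up — but rather the parameter bookkeeping: correctly handling $\bar\psi_\beta$ in the $\beta<0$ case so that $\xi\asymp|\beta|\Delta_\mmin/(e^{|\beta|(H-1)}-1)$ holds uniformly in $\sign(\beta)$, and verifying that $p_2e^{\beta(H-1)}+1-p_2=\Theta(1)$, which is exactly where the choices $u_{\beta,H}=1/H$ and $|\beta|(H-1)\le\log H$ (hence $e^{|\beta|(H-1)}\le H$) enter. All of the information-theoretic work — the change-of-measure argument forcing $\Omega(K)$ suboptimal pulls — is already packaged inside \cref{lem:lower-bandit}, so nothing new of that kind is required here.
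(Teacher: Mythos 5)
Your proposal is correct and follows essentially the same route as the paper: instantiate the two-bandit construction with $u_{\beta,H}=1/H$, apply \cref{lem:lower-bandit} with $K=\lfloor p_2(1-p_2)/\xi^2\rfloor$, substitute $\xi\asymp|\beta|\Delta_\mmin/(e^{|\beta|(H-1)}-1)$, and finish with $e^{x}-1\ge x$ and $H\ge 8$ to get $\gtrsim H/\Delta_\mmin$. Your extra bookkeeping for $\bar\psi_\beta$ in the $\beta<0$ case and the identification of where $|\beta|(H-1)\le\log H$ enters are consistent with (and slightly more explicit than) the paper's argument.
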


\begin{proof}
    Similar to the proof of \cref{lem:lower}, we have $\xoverline\Delta_\mmin = \frac{1}{|\beta|}|e^{\beta(H-1)}-1|\xi$. Note that we have $\xi \leq \frac{1}{4H}$ satisfied as $K\geq 16H$. Apply \cref{lem:lower-bandit} and take $K = \lfloor p_2(1-p_2)/\xi^2\rfloor$, then it yields
    \begin{align*}
        \expect[\calR(K)] & \gtrsim \frac{ e^{|\beta|(H-1)}-1}{|\beta|}\cdot \frac{p_2(1-p_2)}{\xi} \\
        & \labelrel\gtrsim{ineqn:p2-def} \frac{e^{|\beta|(H-1)}-1}{|\beta|}\cdot\frac{1}{H\xi} \\
        & = \frac{e^{|\beta|(H-1)}-1}{|\beta|}\cdot\frac{|e^{\beta(H-1)}-1|}{H|\beta|\xoverline\Delta_\mmin} \\
        & = \frac{(e^{|\beta|(H-1)}-1)^2}{|\beta|^2H\Delta_\mmin} \\
        & \labelrel\gtrsim{ineqn:exp-lower} \frac{(H-1)^2}{H\Delta_\mmin} \\
        & \gtrsim \frac{H}{\Delta_\mmin},
    \end{align*}
    where step \eqref{ineqn:p2-def} follows from $p_2 = \frac{1}{H}$ and step \eqref{ineqn:exp-lower} follows from $e^x-1\geq x$ for $x>0$ and $e^{|\beta|(H-1)}\leq H$.
\end{proof}

\begin{lemma}\label{lem:lower-bandit}
    For either case of 
    \begin{enumerate}
        \item $H \geq 2$,
        $|\beta|(H-1) \geq \log 4$, $p_2 = e^{-|\beta|(H-1)}$, and $0 < \xi \leq \frac{1}{4}e^{-|\beta|(H-1)}$;
        \item
        $H\geq 8$, $|\beta|(H-1)\leq \log H$, $p_2 = \frac{1}{H}$, and $0 < \xi \leq \frac{1}{4H}$,
    \end{enumerate}
    the regret of any policy obeys
    \begin{align*}
        \expect[\calR(K)] \geq \frac{K}{64|\beta|} \xi (e^{|\beta|(H-1)}-1) \exp\Big(-\frac{8K\xi^2}{p_2(1-p_2)}\Big).
    \end{align*}
\end{lemma}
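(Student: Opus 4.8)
The plan is a two-point (Le~Cam / Bretagnolle--Huber) argument applied to the instances \textsc{Bandit~I} and \textsc{Bandit~II} constructed above, adapted to the entropic objective. Since after the first step each MDP is absorbing with deterministic per-step rewards, for any policy $V_1^{\pi}(s_0)$ at the initial state $s_0$ equals the entropic value of the arm pulled at step~$1$; hence $\calR(K)$ equals the number of episodes pulling the sub-optimal arm times the per-episode value gap $g \defeq V_1^*(s_0)-Q_1^*(s_0,a')$, where $a'$ is the sub-optimal arm. Writing $\mathcal V$ and $\mathcal V'$ for the exponential values so that $V_1^*(s_0)=\frac1\beta\log\mathcal V$ and $Q_1^*(s_0,a')=\frac1\beta\log\mathcal V'$, we have $g=\frac1\beta\log(\mathcal V/\mathcal V')$, and since $\max\{\mathcal V,\mathcal V'\}/\min\{\mathcal V,\mathcal V'\}=1+x$ with $x>0$ and $\log(1+x)\ge x/(1+x)$, in both signs of $\beta$ one gets $g \ge \frac{1}{|\beta|}\cdot\frac{|\mathcal V-\mathcal V'|}{\max\{\mathcal V,\mathcal V'\}}$. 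The numerator equals $|e^{\beta(H-1)}-1|\,\xi$, which is the computation already made for $\xoverline\Delta_\mmin$, so the whole first half reduces to bounding $\max\{\mathcal V,\mathcal V'\}$.

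The key step is this denominator bound. For $\beta>0$ the larger exponential value equals $p\,e^{\beta(H-1)}+(1-p)$ for some $p\in\{p_1,q_2\}$ with $p\le p_2+2\xi$; since $p_2 e^{\beta(H-1)}\le 1$ (case~1 has $p_2=e^{-|\beta|(H-1)}$; case~2 has $p_2 e^{\beta(H-1)}=\frac1H e^{\beta(H-1)}\le 1$ via $|\beta|(H-1)\le\log H$) and $\xi e^{\beta(H-1)}\le\frac14$ via $\xi\le\frac14 p_2$, this is at most an absolute constant. For $\beta<0$ the larger exponential value equals $(1-p)e^{\beta(H-1)}+p\le e^{-|\beta|(H-1)}+p\le 2e^{-|\beta|(H-1)}$ for $p\in\{p_2,q_1\}$, using $p\le p_2\le e^{-|\beta|(H-1)}$ in both cases. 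Substituting, $g\ge \frac{1}{|\beta|}\cdot\frac{|e^{\beta(H-1)}-1|\,\xi}{O(1)}$ when $\beta>0$ and $g\ge\frac{1}{|\beta|}\cdot\frac{(1-e^{-|\beta|(H-1)})\xi}{2e^{-|\beta|(H-1)}}=\frac{\xi(e^{|\beta|(H-1)}-1)}{2|\beta|}$ when $\beta<0$; in all four regimes $g\gtrsim \frac{\xi(e^{|\beta|(H-1)}-1)}{|\beta|}$ with an absolute constant, the division by $e^{-|\beta|(H-1)}$ in the $\beta<0$ case being what turns $1-e^{-|\beta|(H-1)}$ into $e^{|\beta|(H-1)}-1$.

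For the test: \textsc{Bandit~I} and \textsc{Bandit~II} differ only in the law of arm~$2$ (parameters $p_2$ vs.\ $q_2=p_2+(-1)^{\II\{\beta<0\}}2\xi$), so by the divergence-decomposition identity $\mathrm{KL}(\prob_{II}\,\|\,\prob_I)=\E_{II}[N_2(K)]\cdot\mathrm{KL}(\mathrm{Ber}(q_2)\,\|\,\mathrm{Ber}(p_2))\le K\cdot\frac{(2\xi)^2}{p_2(1-p_2)}=\frac{4K\xi^2}{p_2(1-p_2)}$, where $N_2(K)\le K$ counts arm-$2$ pulls and we use $\mathrm{KL}(\mathrm{Ber}(a)\,\|\,\mathrm{Ber}(b))\le\frac{(a-b)^2}{b(1-b)}$ with the direction of the KL chosen so that $p_2(1-p_2)$, not $q_2(1-q_2)$, appears. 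With the event $A=\{N_2(K)>K/2\}$, under \textsc{Bandit~I} (arm~$1$ optimal) the event $A$ forces $\ge K/2$ sub-optimal pulls, and under \textsc{Bandit~II} (arm~$2$ optimal) the event $A^c$ forces $\ge K/2$ sub-optimal pulls, so $\E_I[\calR(K)]+\E_{II}[\calR(K)]\ge\frac{gK}{2}(\prob_I[A]+\prob_{II}[A^c])$. The Bretagnolle--Huber inequality gives $\prob_I[A]+\prob_{II}[A^c]\ge\frac12\exp(-\mathrm{KL}(\prob_{II}\|\prob_I))\ge\frac12\exp\big(-\frac{8K\xi^2}{p_2(1-p_2)}\big)$, whence the larger of the two expected regrets is at least $\frac{gK}{8}\exp\big(-\frac{8K\xi^2}{p_2(1-p_2)}\big)\ge\frac{K\xi(e^{|\beta|(H-1)}-1)}{64|\beta|}\exp\big(-\frac{8K\xi^2}{p_2(1-p_2)}\big)$, which is the claim (read, as usual for a minimax lower bound, as the existence of an instance in the family attaining it).

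I expect the first part — the uniform lower bound on $g$ — to be the main obstacle. The entropic objective makes $g$ a $\log$-ratio of exponential values rather than a plain difference, so recovering the exponential factor $e^{|\beta|(H-1)}$ requires passing to the ratio of the raw exponential values and then bounding its denominator by an absolute constant (for $\beta>0$) or by $\Theta(e^{-|\beta|(H-1)})$ (for $\beta<0$); this is exactly where the regime conditions ($|\beta|(H-1)\ge\log4$ or $\le\log H$, $H\ge 8$, $\xi\le\frac14 p_2$) and the specific choices $p_2=e^{-|\beta|(H-1)}$ and $p_2=\frac1H$ are needed, and where the cases $\beta>0$ and $\beta<0$ must be handled separately. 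The hypothesis-testing half is routine once $g$ is pinned down, the only care being the direction of the KL divergence used in Bretagnolle--Huber.
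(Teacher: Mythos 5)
Your proposal is correct and follows essentially the same route as the paper's proof: the same two-point construction, a per-episode gap lower bound obtained from the ratio of exponential values of the two arms (with the denominator bounded by an absolute constant for $\beta>0$ and by $2e^{-|\beta|(H-1)}$ for $\beta<0$, exactly as in step \eqref{ineqn:bandit-def}), and then Bretagnolle--Huber combined with the divergence decomposition and the chi-square bound on the Bernoulli KL. Your minor deviations --- writing the regret directly as $g$ times the number of sub-optimal pulls, using $\log(1+x)\ge x/(1+x)$ instead of $\log(1+x)\ge x/2$, and reversing the direction of the KL so that $p_2(1-p_2)$ appears without the extra factor $2$ --- only improve the constants and do not change the argument.
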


\begin{proof}
    Given the definition of the \glspl*{MDP}, for $u_{\beta,H} = e^{-|\beta|(H-1)}$, we have 
    \begin{align*}
        p_2 = e^{-|\beta|(H-1)},\quad p_1 = q_1 = p_2+ (-1)^{\II\{\beta<0\}} \xi,\quad q_2 = p_2+(-1)^{\II\{\beta<0\}}\cdot 2\xi,
    \end{align*}
    where we select a positive quantity $\xi \leq \frac{1}{4}e^{-|\beta|(H-1)}$ such that all the quantities listed above are bounded below by $\frac{1}{2}$ for $|\beta|(H-1) \geq \log 4$.
    Similarly for $u_{\beta,H} = \frac{1}{H}$, we have
    \begin{align*}
        p_2 = \frac{1}{H},\quad p_1 = q_1 = p_2+ (-1)^{\II\{\beta<0\}} \xi,\quad q_2 = p_2+(-1)^{\II\{\beta<0\}}\cdot 2\xi,
    \end{align*}
    where we select a positive quantity $\xi \leq \frac{1}{4H}$ such that all the quantities are bounded below by $\frac{1}{2}$ for $H>8$.
    
    For any such \gls*{MDP} equivalent to the above bandit models and any policy $\pi$, let us define $\Gamma_a$ to be the reward from taking action $a\in\calA$. For notational convenience, we let $a^*$ denote the optimal arm and $a'$ denote the sub-optimal arm. The regret of such \gls*{MDP} in the \kth episode is given by
    \begin{align*}
        (V_1^* - V_1^{\pi^k})(s_1) & = \Big| \frac{1}{\beta}\log\expect e^{\beta\Gamma_{a^*}} - \frac{1}{\beta}\log\Big(\sum_{a\in\calA}\prob[a^k=a]\expect e^{\beta\Gamma_a}\Big) \Big| \\
        & = \frac{1}{|\beta|} \bigg| \log\frac{\sum_{a\in\calA}\prob[a^k=a]\expect e^{\beta\Gamma_a}}{\expect e^{\beta\Gamma_{a^*}}} \bigg| \\
        & \labelrel\geq{ineqn:sub-reward} \frac{1}{|\beta|} \log\Big(1 + \frac{\prob[a^k=a'] |\expect e^{\beta\Gamma_{a'}} - \expect e^{\beta\Gamma_{a^*}}|}{\expect e^{\beta\Gamma_{a^*}}}\Big) \\
        & = \frac{1}{|\beta|} \log\Big(1 + \expect[\II\{a^k=a'\}]\frac{|\expect e^{\beta\Gamma_{a'}} - \expect e^{\beta\Gamma_{a^*}}|}{\expect e^{\beta\Gamma_{a^*}}}\Big) \\
        & \labelrel\geq{ineqn:log-derivative} \frac{1}{2|\beta|} \frac{|\expect e^{\beta\Gamma_{a'}} - \expect e^{\beta\Gamma_{a^*}}|}{\expect e^{\beta\Gamma_{a^*}}} \expect[\II\{a^k=a'\}],
    \end{align*}
    where step \eqref{ineqn:sub-reward} is due to $\expect e^{\beta\Gamma_{a^*}} \geq \expect e^{\beta\Gamma_{a'}}$ for $\beta>0$, and step \eqref{ineqn:log-derivative} is due to $\log(1+x)\geq x/2$ for $x\in[0,1]$ and $\xi \leq \frac{1}{4}p_2$. In particular, we have
    \begin{align*}
        \frac{|\expect e^{\beta\Gamma_{a'}} - \expect e^{\beta\Gamma_{a^*}}|}{\expect e^{\beta\Gamma_{a^*}}} & = \frac{|(\prob[a^*]-\prob[a'])e^{\beta(H-1)}-(\prob[a^*]-\prob[a'])|}{\prob[a^*]e^{\beta(H-1)}+(1-\prob[a^*])} \\
        & = \frac{|\xi (e^{\beta(H-1)}-1)|}{\prob[a^*]e^{\beta(H-1)}+(1-\prob[a^*])} \\
        & \labelrel\geq{ineqn:bandit-def} \frac{1}{4} \xi (e^{|\beta|(H-1)}-1),
    \end{align*}
    where step \eqref{ineqn:bandit-def} follows from the definition of the bandits and the assumptions. Notice that the inequalities hold for both cases where $u_{\beta,H} = e^{-|\beta|(H-1)}$ and $u_{\beta,H} = \frac{1}{H}$. Notably, $1-\prob[a^*]$ dominates the denominator when $\beta>0$ while being on the order of $e^{\beta(H-1)}$ when $\beta<0$.
    
    Let us denote the regret on \textsc{Bandit I} with $\calR_\mathrm{I}(K)$ and that on \textsc{Bandit II} with $\calR_\mathrm{II}(K)$. Combining the two inequalities above, we have
    \begin{align*}
        \max\{\expect[\calR_\mathrm{I}(K)]+\expect[\calR_\mathrm{II}(K)]\} & \labelrel\geq{ineqn:regret-def} \frac{1}{2}\expect[\calR_\mathrm{I}(K)]+\frac{1}{2}\expect[\calR_\mathrm{II}(K)] \\
        & \labelrel\geq{ineqn:previous-lem} \frac{1}{16|\beta|} \xi (e^{|\beta|(H-1)}-1) \sumk \big(\expect_p[\II\{a^k=a'\}] + \expect_
        q[\II\{a^k=a'\}]\big) \\
        & \geq \frac{K}{64|\beta|} \xi (e^{|\beta|(H-1)}-1) \exp\Big(-\frac{8K\xi^2}{p_2(1-p_2)}\Big),
    \end{align*}
    where step \eqref{ineqn:regret-def} follows from $\calR(K) = \sumk (V_1^* - V_1^{\pi^k})(s_1)$ for each bandit, and step \eqref{ineqn:previous-lem} follows from \cref{lem:divergence}.
\end{proof}

\begin{lemma}\label{lem:divergence}
    Under the setup of \cref{lem:lower-bandit}, we have
    \begin{align*}
        \sumk \big(\expect_p[\II\{a^k=a'\}] + \expect_
        q[\II\{a^k=a'\}]\big) \geq \frac{K}{4} \exp\Big(-\frac{8K\xi^2}{p_2(1-p_2)}\Big).
    \end{align*}
\end{lemma}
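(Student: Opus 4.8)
The plan is to reduce the claim to a two-point testing argument of the kind standard in bandit lower bounds. First I would write $N \defeq \sum_{k \in [K]} \II\{a^k = a_2\}$ for the (random, history-dependent) number of episodes in which the second arm is chosen, so that $\sum_{k} \II\{a^k = a_1\} = K - N$. By the construction of the two instances the sub-optimal arm is $a_2$ on \textsc{Bandit I} and $a_1$ on \textsc{Bandit II}, hence the left-hand side of the claim equals $\expect_p[N] + \big(K - \expect_q[N]\big)$. Let $\prob_p$ and $\prob_q$ denote the laws of the entire $K$-episode interaction between the (fixed) algorithm and the environment under \textsc{Bandit I} and \textsc{Bandit II}, respectively. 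Taking the event $A \defeq \{N \ge K/2\}$ and using $0 \le N \le K$ to write $\expect_p[N] \ge \tfrac{K}{2}\prob_p(A)$ and $K - \expect_q[N] = \expect_q[K-N] \ge \tfrac{K}{2}\prob_q(A^c)$, it suffices to prove $\prob_p(A) + \prob_q(A^c) \ge \tfrac12 \exp\big(-8K\xi^2/(p_2(1-p_2))\big)$.

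For that I would invoke the Bretagnolle--Huber inequality $\prob_p(A) + \prob_q(A^c) \ge \tfrac12 \exp\big(-\mathrm{KL}(\prob_p \,\|\, \prob_q)\big)$ and then bound the divergence by the chain rule for KL divergence. Since the two instances put the same reward (equivalently, transition) distribution on arm $a_1$ and differ only on arm $a_2$, only the $a_2$ term survives, giving $\mathrm{KL}(\prob_p \,\|\, \prob_q) = \expect_p[N]\cdot \mathrm{KL}\big(\nu^{\mathrm{I}}_{a_2} \,\|\, \nu^{\mathrm{II}}_{a_2}\big) \le K \cdot \mathrm{KL}\big(\nu^{\mathrm{I}}_{a_2} \,\|\, \nu^{\mathrm{II}}_{a_2}\big)$. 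The two reward laws on $a_2$ are Bernoulli-type with parameters differing by exactly $2\xi$, so by the elementary bound $\mathrm{KL}(\mathrm{Bern}(\theta)\,\|\,\mathrm{Bern}(\theta')) \le (\theta-\theta')^2/(\theta'(1-\theta'))$ together with the standing assumption $\xi \le \tfrac14 p_2$ — which forces the \textsc{Bandit II} parameter $\theta'$ to satisfy $\theta'(1-\theta') \ge \tfrac12 p_2(1-p_2)$ — I get $\mathrm{KL}(\nu^{\mathrm{I}}_{a_2} \,\|\, \nu^{\mathrm{II}}_{a_2}) \le 8\xi^2/(p_2(1-p_2))$. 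Plugging this in and chaining the inequalities yields exactly $\sum_{k}\big(\expect_p[\II\{a^k=a'\}] + \expect_q[\II\{a^k=a'\}]\big) \ge \tfrac{K}{4}\exp\big(-8K\xi^2/(p_2(1-p_2))\big)$.

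The conceptual content here is light; the parts that need care are bookkeeping. The first is setting up $\prob_p$ and $\prob_q$ as measures on the full trajectory over all $K$ episodes with the learner's adaptive policy held fixed across the two environments, so that the KL chain rule applies verbatim. The second is checking the constant $8$ in the exponent uniformly over the four regimes covered by \cref{lem:lower-bandit} (the two choices $u_{\beta,H} \in \{e^{-|\beta|(H-1)}, 1/H\}$ and the two signs of $\beta$), where for $\beta < 0$ the relevant Bernoulli parameter becomes $1-p_2$ rather than $p_2$ but, as the assumptions keep all parameters bounded away from $0$ and $1$ in the right direction, the same estimate carries through. I expect the second of these to be the main, and only minor, obstacle.
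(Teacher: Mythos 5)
Your proposal is correct and follows essentially the same route as the paper's proof: reduce to the two-point test via the event $\{N \ge K/2\}$, apply Bretagnolle--Huber, decompose the KL over the full interaction (only the arm on which the two instances differ contributes, bounded by $K$ times a Bernoulli KL), and bound that Bernoulli KL by $8\xi^2/(p_2(1-p_2))$ using $\log(1+x)\le x$ and the parameter constraints. The only bookkeeping point you flag—checking $\theta'(1-\theta') \ge \tfrac12 p_2(1-p_2)$ in all regimes—is exactly what the paper verifies via $p_2\le q_2\le\tfrac12$ for $\beta>0$ and $\tfrac12 p_2\le q_2\le p_2\le \tfrac12$ for $\beta<0$, and it goes through as you expect.
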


\begin{proof}
    Notice that 
    \begin{align*}
        \sumk \big(\expect_p[\II\{a^k=a'\}] + \expect_
        q[\II\{a^k=a'\}]\big) & = \expect_p\Big[\sumk \II\{a^k=a'\}\Big] + \expect_
        q\Big[\sumk \II\{a^k=a'\}\Big] \\
        & \labelrel\geq{ineqn:best-arm} \frac{K}{2} \prob_p\Big[\sumk \II\{a^k=a_1\}\leq \frac{K}{2}\Big] + \frac{K}{2} \prob_q\Big[\sumk \II\{a^k=a_1\} > \frac{K}{2}\Big],
    \end{align*}
    where step \eqref{ineqn:best-arm} is due to the assumption that the optimal arm of \textsc{Bandit I} is the first arm and the optimal arm of \textsc{Bandit II} is the second arm. Following Bretagnolle-Huber inequality (\citet{lattimore2020bandit}, Theorem 14.2), we have a lower bound in the form of an exponential divergence:
    \begin{align*}
        \prob_p\Big[\sumk \II\{a^k=a_1\}\leq \frac{K}{2}\Big] + \prob_q\Big[\sumk \II\{a^k=a_1\} > \frac{K}{2}\Big] & \geq \frac{1}{2}\exp(-D_\mathrm{KL}(\prob_p\|\prob_q)),
    \end{align*}
    and the divergence between two probability measures can be upper bounded through the following argument. Let us denote $\hat p = p_2^{\II\{\beta>0\}}(1-p_2)^{\II\{\beta<0\}}$ and $\hat q = q_2^{\II\{\beta>0\}}(1-q_2)^{\II\{\beta<0\}}$, then we have
    \begin{align*}
        D_\mathrm{KL}(\prob_p\|\prob_q) & \labelrel={ineqn:lattimore} \expect_p\Big[\sumk \II\{a^k=a'\}\Big]\cdot D_\mathrm{KL}\big(\mathrm{Ber}(\hat p)\|\mathrm{Ber}(\hat q)\big) \\
        & \labelrel\leq{ineqn:kl} K\hat p\log\Big(1+\frac{\hat p-\hat q}{\hat q}\Big) + K(1-\hat p)\log\Big(1+\frac{\hat q-\hat p}{1-\hat q}\Big) \\
        & \labelrel\leq{ineqn:log} K\hat p\frac{\hat p-\hat q}{\hat q} + K(1-\hat p)\frac{\hat q-\hat p}{1-\hat q} \\
        & = \frac{(\hat q-\hat p)^2K}{\hat q(1-\hat q)} \\
        & \labelrel\leq{ineqn:xi} \frac{8\xi^2}{p_2(1-p_2)},
    \end{align*}
    where step \eqref{ineqn:lattimore} follows from \citet[Lemma 15.1]{lattimore2020bandit}, step \eqref{ineqn:kl} follows from $\expect_p[\sumk \II\{a^k=a'\}] \leq K$ and the definition of \gls*{KL} divergence, step \eqref{ineqn:log} follows from $\log(1+x) \leq x$, and step \eqref{ineqn:xi} follows from $|p_2-q_2| = 2\xi$ and $p_2\leq q_2\leq \frac{1}{2}$ for $\beta>0$ and $\frac{1}{2}p_2\leq q_2\leq p_2\leq \frac{1}{2}$ for $\beta<0$.
\end{proof}

\section{Auxiliary Lemmas}

\begin{lemma}\label{lem:regret-bound}
    If $V_1^k(s_1) \geq V_1^{\pi^k}(s_1)$ for $k\in[K]$, then the regret is upper bounded by
    \begin{align*}
        \calR(K) \leq \bar\psi_\beta \cdot\calE(K).
    \end{align*}
\end{lemma}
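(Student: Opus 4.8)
The plan is to reduce the claimed bound to a per-episode inequality comparing a difference of values with the corresponding difference of their exponentials, and then to estimate $e^{\beta t}$ over the range $[0,H]$ of the value functions.

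First I would record the only facts that are actually needed: since the cumulative reward in an episode lies in $[0,H]$, every value $V_1^{\pi}(s)$ --- and in particular $V_1^*(s_1)$ and each $V_1^{\pi^k}(s_1)$ --- lies in $[0,H]$; and by optimality of $\pi^*$ we have $V_1^{\pi^k}(s_1)\le V_1^*(s_1)$ unconditionally (this is consistent with, and on the high-probability event where the lemma is invoked implied by, the stated hypothesis $V_1^k(s_1)\ge V_1^{\pi^k}(s_1)$ together with the optimism bound $V_1^k\ge V_1^*$). Writing $x_k\defeq V_1^{\pi^k}(s_1)$ and $y_k\defeq V_1^*(s_1)$ and using $s_1^k=s_1$, it then suffices to prove, for each $k\in[K]$,
\begin{align*}
    y_k-x_k \;\le\; \bar\psi_\beta\cdot\frac{1}{\beta}\bigl(e^{\beta y_k}-e^{\beta x_k}\bigr),
\end{align*}
since summing this over $k$ yields $\calR(K)\le\bar\psi_\beta\,\calE(K)$.

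For the per-episode inequality I would write $\frac{1}{\beta}\bigl(e^{\beta y_k}-e^{\beta x_k}\bigr)=\int_{x_k}^{y_k}e^{\beta t}\,dt$ (equivalently, apply the mean value theorem to $t\mapsto\frac{1}{\beta}e^{\beta t}$, whose derivative is $e^{\beta t}$). If $\beta>0$, then $e^{\beta t}\ge 1$ for all $t\ge 0$, so the integral is at least $y_k-x_k$; since $\bar\psi_\beta=1$ in this case, we are done. If $\beta<0$, then for $t\in[x_k,y_k]\subseteq[0,H]$ we have $\beta t\ge\beta H$, hence $e^{\beta t}\ge e^{\beta H}$, so $\frac{1}{\beta}\bigl(e^{\beta y_k}-e^{\beta x_k}\bigr)\ge e^{\beta H}(y_k-x_k)$; multiplying through by $\bar\psi_\beta=e^{-\beta H}>0$ gives the per-episode inequality.

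The argument is purely analytic, with no concentration or algorithm-specific input. The only step needing care is the $\beta<0$ case, where dividing by the negative number $\beta$ reverses inequalities and where the semi-normalizer $\bar\psi_\beta=e^{-\beta H}$ is exactly what absorbs the factor $e^{\beta H}$ lost in lower-bounding $e^{\beta t}$ on $[0,H]$ --- the same mechanism by which $\bar\psi_\beta$ keeps risk-averse and risk-seeking quantities on a common order of magnitude.
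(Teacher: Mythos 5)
Your proof is correct and is essentially the paper's own argument: both reduce to a per-episode mean-value-theorem comparison between $V_1^*(s_1^k)-V_1^{\pi^k}(s_1^k)$ and $\frac{1}{\beta}\big(e^{\beta V_1^*(s_1^k)}-e^{\beta V_1^{\pi^k}(s_1^k)}\big)$, with $\bar\psi_\beta$ absorbing the worst-case derivative over the range $[0,H]$ of the value functions (you bound $e^{\beta t}$ from below, the paper bounds the derivative of $\log$ from above on $[e^{-|\beta|H},\infty)$ --- the same estimate). Your side remark that the stated hypothesis $V_1^k(s_1)\ge V_1^{\pi^k}(s_1)$ is not actually needed is also accurate, since $V_1^*\ge V_1^{\pi^k}$ holds by optimality and $\calE(K)$ involves only $V_1^*$ and $V_1^{\pi^k}$.
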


\begin{proof}
    Recall that $\frac{\diff}{\diff x}\log x = \frac{1}{x}$ for all $x>0$. Especially, $\frac{\diff}{\diff x}\log x \leq 1$ for all $x\geq 1$ and $\frac{\diff}{\diff x}\log x \leq e^{|\beta|H}$ for all $x\geq e^{-|\beta|H}$. The regret can be upper bounded by the corresponding exponential regret as follows: 
    \begin{align*}
        \calR(K) & = \sumk (V_1^* - V_1^{\pi^k})(s_1^k) \\
        & \labelrel\leq{ineqn:value-func-def} \sumk (V_1^k - V_1^{\pi^k})(s_1^k) \\
        & = \sumk \frac{1}{\beta} \Big[ \log\big(e^{\beta\cdot V_1^*(s_1^k)}\big) - \log\big(e^{\beta\cdot V_1^{\pi^k}(s_1^k)}\big) \Big] \\
        & \labelrel\leq{ineqn:mean-value-thm} \sumk \frac{\bar\psi_\beta}{\beta} \Big[ e^{\beta\cdot V_1^*(s_1^k)} - e^{\beta\cdot V_1^{\pi^k}(s_1^k)} \Big] \\
        & \leq \bar\psi_\beta \cdot \calE(K),
    \end{align*}
    where step \eqref{ineqn:value-func-def} follows from the assumption that $V_1^k(s) \geq V_1^\pi(s)$ and step \eqref{ineqn:mean-value-thm} follows from mean value theorem. We provide the proof here for the sake of completeness, and similar proof can be found in \citet{fei2021exponential}.
\end{proof}

\begin{lemma}
\label{lem:ubc}
    For all $k\in[K]$, $h\in[H]$, state $s\in\calS$, and $\delta>0$, the following holds with probability at least $1-\frac{\delta}{2}$:
    \begin{align*}
        \begin{cases}
            e^{\beta\cdot\Vkh(s)} \geq e^{\beta\cdot\Vpih(s)}, \qquad \beta>0, \\
            e^{\beta\cdot\Vkh(s)} \leq e^{\beta\cdot\Vpih(s)}, \qquad \beta<0.
        \end{cases}
    \end{align*}
\end{lemma}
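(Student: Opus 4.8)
The plan is to reduce the claim to the optimal value function and then run the standard optimism induction, treating \cref{alg:rsv2} and \cref{alg:rsq2} separately. Since $\Vsh=\sup_\pi\Vpih$ and $x\mapsto e^{\beta x}$ is monotone, it suffices to prove $e^{\beta\Vkh(s)}\ge e^{\beta\Vsh(s)}$ when $\beta>0$ and $e^{\beta\Vkh(s)}\le e^{\beta\Vsh(s)}$ when $\beta<0$; the stated inequalities against an arbitrary $\Vpih$ then follow from $\Vsh\ge\Vpih$ pointwise together with monotonicity of $e^{\beta\cdot}$. I would first fix a \emph{clean event} of probability at least $1-\delta/2$ on which the empirical transition estimates concentrate, and then push optimism from step $h+1$ to step $h$ through the exponential form of the Bellman equation \eqref{eqn:bellman}, with the doubly-decaying bonus absorbing the estimation error; the two sign regimes of $\beta$ are mirror images, controlled uniformly through the quantity $e^{\beta(H-h+1)}-1$.

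For \cref{alg:rsv2} I would take the clean event to be $\|\widehat\calP_h^k(\cdot|s,a)-\calP_h(\cdot|s,a)\|_1\lesssim\sqrt{S\log(2SAHK/\delta)/N_h^k(s,a)}$ for every $(h,s,a,k)$, which holds with the claimed probability by a standard $L_1$ deviation bound and a union bound over $(h,s,a)$ and over the number of visits. The induction is backward in $h$. The base case $h=H+1$ is an equality, since $V_{H+1}^k\equiv 0\equiv V_{H+1}^*$. If $(s,a)$ has not been visited, $Q_h^k(s,a)$ equals its initialization $H-h+1\ge Q_h^*(s,a)$, which already yields the desired one-sided bound after applying $e^{\beta\cdot}$. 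For a visited $(s,a)$, the update \eqref{eqn:rsv2-update-w}--\eqref{eqn:rsv2-update-q} gives $w_h^k(s,a)=e^{\beta r_h(s,a)}\sum_{s'}\widehat\calP_h^k(s'|s,a)\,e^{\beta V_{h+1}^k(s')}$; since $e^{\beta Q_h^*(s,a)}$ lies inside the truncation range, it suffices to compare $w_h^k(s,a)\pm b_h^k(s,a)$ with the exponential Bellman value $e^{\beta Q_h^*(s,a)}=e^{\beta r_h(s,a)}\E_{s'\sim\calP_h(\cdot|s,a)}[e^{\beta V_{h+1}^*(s')}]$. The induction hypothesis replaces $e^{\beta V_{h+1}^k}$ by $e^{\beta V_{h+1}^*}$ in the favorable direction, and on the clean event the discrepancy between $\sum_{s'}\widehat\calP_h^k(s'|s,a)e^{\beta V_{h+1}^*(s')}$ and $\sum_{s'}\calP_h(s'|s,a)e^{\beta V_{h+1}^*(s')}$ is at most $\|\widehat\calP_h^k(\cdot|s,a)-\calP_h(\cdot|s,a)\|_1$ times the range of $e^{\beta V_{h+1}^*}$, which is $\lesssim|e^{\beta(H-h)}-1|$; multiplied by $e^{\beta r_h(s,a)}$ this is dominated by the bonus $b_h^k(s,a)$ in \eqref{eqn:rsv2-update-b}, which scales as $|e^{\beta(H-h+1)}-1|\sqrt{S\log(2SAHK/\delta)/N_h^k(s,a)}$. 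Taking the maximum over actions (for $\beta>0$) or, equivalently, the minimum of the exponentials (for $\beta<0$) transfers optimism from $Q_h^k$ to $V_h^k$, closing the induction.

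For \cref{alg:rsq2} I would instead follow the Q-learning optimism argument of \citet{jin2018q}, adapted to the exponential setting as in \citet{fei2021exponential}: expand the moving-average update \eqref{eqn:rsq2-update-w}--\eqref{eqn:rsq2-update-q} using the learning-rate weights $\alpha_t^i$ to write $e^{\beta Q_h^k(s,a)}-e^{\beta Q_h^*(s,a)}$ as $\alpha_t^0(e^{\beta(H-h+1)}-1)$, plus a convex combination over earlier visits $\ell$ of $(e^{\beta V_{h+1}^{k_\ell}}-e^{\beta V_{h+1}^*})(s_{h+1}^{k_\ell})$, plus an accumulated bonus $\sum_i\alpha_t^i b_{h,i}$, minus a martingale noise term. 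The weight identities ($\sum_i\alpha_t^i=1$, $\alpha_t^0=\II\{t=0\}$, $\sum_i\alpha_t^i/\sqrt{i}\asymp1/\sqrt{t}$) together with an Azuma/Freedman bound then show the accumulated bonus out-weighs the noise, so optimism at step $h+1$ (the induction hypothesis, over all episodes) carries to step $h$; the never-visited and $\beta<0$ cases are handled exactly as for \cref{alg:rsv2}.

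The main obstacle — and the reason the bonuses carry the $\sqrt{S}$ (respectively $\sqrt{H}$) factor — is that $V_{h+1}^k$ is itself a data-dependent random function, so one cannot apply a scalar Hoeffding/Bernstein bound to $\sum_{s'}(\widehat\calP_h^k-\calP_h)(s'|s,a)e^{\beta V_{h+1}^k(s')}$ directly. For \cref{alg:rsv2} this forces the use of the uniform $L_1$ concentration of the empirical kernel (equivalently, a union bound over an $\eps$-net of value functions), and for \cref{alg:rsq2} it forces the bookkeeping of the learning-rate weights so that the per-visit bonuses, once aggregated across all visits contributing to episode $k$, still dominate the aggregated noise. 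Handling the truncation in $G_h^k$ and unifying $\beta>0$ and $\beta<0$ through $\bar\psi_\beta$ is routine once the clean event is fixed.
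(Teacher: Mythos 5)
The paper does not prove this lemma at all---it is imported verbatim as \citet[Lemma 4]{fei2021exponential}---and your sketch correctly reconstructs the argument used there: reduce to optimism against $V^*$, then a backward induction through the exponential Bellman equation, with an $L_1$-concentration clean event and the $\sqrt{S}$ bonus for \textsc{RSVI2} and the Jin-et-al.-style learning-rate weight decomposition for \textsc{RSQ2}. So your proposal is correct and follows essentially the same route as the cited proof the paper relies on (your aside attributing the $\sqrt{S}$ factor to the optimism step is slightly off, since after invoking the induction hypothesis you only need concentration against the fixed function $e^{\beta V^*_{h+1}}$, but the larger bonus only helps the one-sided bound).
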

\begin{proof}
This is \citet[Lemma 4]{fei2021exponential}.
\end{proof}

\begin{lemma}
\label{lem:gap-bound}
    For any episode $k\in[K]$, step $h\in[H]$, and state-action pair $(\skh,\akh)\in\calS\times\calA$ such that $t = N_h^k(\skh,\akh) \geq 1$, let $\gamma_{h,t} \defeq 2\sum_{i\in[t]} \alpha_t^i b_{h,i}$ and $k_1,\ldots,k_t < k$ be the episodes in which $(\skh,\akh)$ is visited at step $h$, then it holds with probability at least $1-\delta$ for any $\beta>0$ that
    \begin{align*}
        0\leq (e^{\beta\cdot\Qkh} - e^{\beta\cdot\Qsh})(\skh,\akh) \leq \alpha_t^0 \left[ e^{\beta(H-h+1)} - 1 \right] + 2\gamma_{h,t} + \sum_{i\in[t]} \alpha_t^i e^\beta \left[ e^{\beta\cdot V_{h+1}^{k_i}(s_{h+1}^{k_i})} - e^{\beta\cdot V_{h+1}^*(s_{h+1}^{k_i})} \right]
    \end{align*}
    and for any $\beta<0$ that
    \begin{align*}
        0\leq (e^{\beta\cdot\Qsh} - e^{\beta\cdot\Qkh})(\skh,\akh) \leq \alpha_t^0 \left[ 1 - e^{\beta(H-h+1)} \right] + 2\gamma_{h,t} + \sum_{i\in[t]} \alpha_t^i \left[ e^{\beta\cdot V_{h+1}^*(s_{h+1}^{k_i})} - e^{\beta\cdot V_{h+1}^{k_i}(s_{h+1}^{k_i})} \right].
    \end{align*}
\end{lemma}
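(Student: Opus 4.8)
The plan is to prove both inequalities by unrolling the \textsc{RSQ2} update and comparing against the exponential Bellman equation \eqref{eqn:bellman}; I treat $\beta>0$ in detail, the case $\beta<0$ being symmetric after replacing the $\min$-truncation (a cap at $e^{\beta(H-h+1)}$) by the $\max$-truncation (a floor at $e^{\beta(H-h+1)}$). \emph{Step 1 (unrolling).} Fix $(h,s,a)$ with $t=N_h^k(s,a)\ge 1$ and let $k_1<\dots<k_t<k$ be the episodes in which $(s,a)$ is visited at step $h$. With the standard weights $\alpha_t^0\defeq\prod_{j\le t}(1-\alpha_j)$ and $\alpha_t^i\defeq\alpha_i\prod_{i<j\le t}(1-\alpha_j)$, which satisfy $\sum_{i=0}^t\alpha_t^i=1$ and $\alpha_t^0=0$ for $t\ge 1$, a short induction on $t$ using \eqref{eqn:rsq2-update-w}--\eqref{eqn:rsq2-update-q} shows that the \emph{untruncated} update satisfies
\[
e^{\beta Q_h^k(s,a)}\ \le\ \alpha_t^0\,e^{\beta(H-h+1)}+\sum_{i\in[t]}\alpha_t^i\,e^{\beta[r_h(s,a)+V_{h+1}^{k_i}(s_{h+1}^{k_i})]}+\sum_{i\in[t]}\alpha_t^i b_{h,i},
\]
and for $\beta>0$ the $\min$-truncation only decreases the left-hand side, so the inequality persists.

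\emph{Step 2 (comparison with Bellman, upper bound).} Raising \eqref{eqn:bellman} to the exponential gives $e^{\beta Q_h^*(s,a)}=e^{\beta r_h(s,a)}\expectph[e^{\beta V_{h+1}^*(s')}]$; multiplying this by $1=\sum_{i=0}^t\alpha_t^i$ and subtracting from Step 1 yields
\[
(e^{\beta Q_h^k}-e^{\beta Q_h^*})(s,a)\ \le\ \alpha_t^0\big(e^{\beta(H-h+1)}-e^{\beta Q_h^*(s,a)}\big)+\sum_{i\in[t]}\alpha_t^i b_{h,i}+\sum_{i\in[t]}\alpha_t^i e^{\beta r_h(s,a)}\big(e^{\beta V_{h+1}^{k_i}}-e^{\beta V_{h+1}^*}\big)(s_{h+1}^{k_i})+\Xi_{h,t},
\]
where $\Xi_{h,t}\defeq\sum_{i\in[t]}\alpha_t^i e^{\beta r_h(s,a)}\big(e^{\beta V_{h+1}^*(s_{h+1}^{k_i})}-\expectph[e^{\beta V_{h+1}^*(s')}]\big)$ is a weighted martingale-difference sum. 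Bounding $e^{\beta r_h}\le e^{\beta}$ converts the third term into exactly the third term of the claimed bound, while $e^{\beta(H-h+1)}-e^{\beta Q_h^*(s,a)}\le e^{\beta(H-h+1)}-1$ handles the $\alpha_t^0$ term. Since $e^{\beta V_{h+1}^*}\in[1,e^{\beta(H-h)}]$, each increment of $\Xi_{h,t}$ is at most $\alpha_t^i(e^{\beta(H-h+1)}-1)$ in absolute value, so an Azuma–Hoeffding (or Freedman) bound with the weight estimates $\sum_i(\alpha_t^i)^2\le 2H/t$ and $\sum_i\alpha_t^i/\sqrt{i}\asymp 1/\sqrt t$, followed by a union bound over all $(h,s,a)\in[H]\times\calS\times\calA$ and all $t\le K$, gives: with probability at least $1-\delta$, $|\Xi_{h,t}|\le \sum_{i\in[t]}\alpha_t^i b_{h,i}$ for the doubly decaying bonus \eqref{eqn:qlearn_bonus_def} with a suitable universal $c$. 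Since $\gamma_{h,t}=2\sum_i\alpha_t^i b_{h,i}$, the explicit bonus plus $|\Xi_{h,t}|$ contribute at most $2\sum_i\alpha_t^i b_{h,i}=\gamma_{h,t}\le 2\gamma_{h,t}$ (the extra slack absorbs lower-order constants), which is the upper bound.

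\emph{Step 3 (lower bound and $\beta<0$).} On the same event, running the identical decomposition in the other direction gives $(e^{\beta Q_h^k}-e^{\beta Q_h^*})(s,a)\ge\alpha_t^0(e^{\beta(H-h+1)}-e^{\beta Q_h^*(s,a)})+\sum_i\alpha_t^i e^{\beta r_h}(e^{\beta V_{h+1}^{k_i}}-e^{\beta V_{h+1}^*})(s_{h+1}^{k_i})+\sum_i\alpha_t^i b_{h,i}-\Xi_{h,t}$; each summand is nonnegative because $V_{h+1}^{k_i}\ge V_{h+1}^*$ (by \cref{lem:ubc}, or proven jointly by induction on $h$), $e^{\beta(H-h+1)}\ge e^{\beta Q_h^*(s,a)}$, and $\sum_i\alpha_t^i b_{h,i}\ge|\Xi_{h,t}|$; the $\min$-truncation preserves this since $e^{\beta Q_h^*(s,a)}\le e^{\beta(H-h+1)}$. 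For $\beta<0$, everything is mirrored: $e^{\beta V_{h+1}^*}\in[e^{\beta(H-h)},1]$, the bonus is subtracted and the truncation is a floor at $e^{\beta(H-h+1)}$, and both $e^{\beta Q_h^*(s,a)}$ and the untruncated initialization lie in $[e^{\beta(H-h+1)},1]$, which produces the $\alpha_t^0(1-e^{\beta(H-h+1)})$ term.

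The main obstacle is Step 2's concentration argument: verifying that the doubly decaying bonus $b_{h,t}\propto|e^{\beta(H-h+1)}-1|\sqrt{H\log(2SAHK/\delta)/t}$ uniformly dominates $|\Xi_{h,t}|$. This rests on the sharp scaling of the martingale increments by $|e^{\beta(H-h+1)}-1|$ rather than $|e^{\beta H}-1|$ — which is exactly what licenses the $h$-dependent bonus and, downstream, the gap-dependent rate — together with the weight identities for $\{\alpha_t^i\}$; the rest is routine bookkeeping with constants.
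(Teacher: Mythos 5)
The paper offers no self-contained proof of this lemma: it simply cites \citet[Lemmas 3 and 8]{fei2021exponential}. Your argument is essentially a correct reconstruction of that cited proof — unroll the \textsc{RSQ2} update with the weights $\alpha_t^0,\alpha_t^i$, compare against the exponentiated Bellman equation, and control the martingale term $\Xi_{h,t}$ by Azuma plus a union bound over $(h,s,a,t)$ so that the doubly decaying bonus dominates it, with the increment scale $|e^{\beta(H-h+1)}-1|$ coming from $e^{\beta V_{h+1}^*}\in[1,e^{\beta(H-h)}]$ (resp.\ $[e^{\beta(H-h)},1]$) times $e^{\beta r_h}$. The only point that deserves explicit care is the step where you replace $e^{\beta r_h}$ by $e^{\beta}$ (resp.\ by $1$ for $\beta<0$) in the propagated term $\sum_i \alpha_t^i e^{\beta r_h}\big(e^{\beta V_{h+1}^{k_i}}-e^{\beta V_{h+1}^*}\big)$: this requires the parenthetical to be nonnegative, i.e.\ optimism at step $h+1$, so the lower-bound (optimism) direction and the upper bound must indeed be established jointly by the induction you mention, with the optimism direction not itself relying on the upper bound — exactly as in the cited source.
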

\begin{proof}
This is \citet[Lemmas 3 and 8]{fei2021exponential}.
\end{proof}

\begin{lemma}
[Freedman Inequality
(\citet{cesa2006prediction}, Lemma A.7)]
\label{lem:freedman}
    Suppose $\{Z_i\}_{i=1}^n$ be a martingale difference sequence on filtration $\{\calF_i\}_{i=1}^n$ such that $Z_i$ is $\calF_{i+1}$-measurable, $\expect[Z_i\given\calF_i] = 0$, and $|Z_i| \leq B$ for some constant $B$. Define $\chi = \sum_{i=1}^n \expect[Z_i^2\given\calF_i]$, and it follows for any $u > 0$ and $v>0$ that
    \begin{align*}
        \prob\Big[ \sum_{i=1}^n Z_i \geq u,\, \chi\leq v\Big] \leq \exp\Big(\frac{-u^2}{2v+2uB/3}\Big).
    \end{align*}
\end{lemma}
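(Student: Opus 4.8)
The plan is to run the classical exponential-supermartingale (Chernoff-type) argument for martingales. First I would fix a parameter $\lambda>0$ and exhibit, for a suitable function $c(\lambda)$, the supermartingale
\[
    M_n \defeq \exp\Big(\lambda \sum_{i=1}^n Z_i - \lambda^2 c(\lambda)\sum_{i=1}^n \expect[Z_i^2\given\calF_i]\Big),
    \qquad M_0 = 1.
\]
The core estimate is a one-step moment bound: since $\expect[Z_i\given\calF_i]=0$ and $|Z_i|\le B$, expanding $e^{\lambda Z_i}$ in a power series and using $\expect[|Z_i|^k\given\calF_i]\le B^{k-2}\,\expect[Z_i^2\given\calF_i]$ together with $k!\ge 2\cdot 3^{k-2}$ for $k\ge 2$ gives, whenever $0<\lambda B<3$,
\[
    \expect[e^{\lambda Z_i}\given\calF_i] \le 1 + \frac{\lambda^2}{2(1-\lambda B/3)}\,\expect[Z_i^2\given\calF_i] \le \exp\Big(\frac{\lambda^2}{2(1-\lambda B/3)}\,\expect[Z_i^2\given\calF_i]\Big).
\]
Setting $c(\lambda) = \tfrac{1}{2(1-\lambda B/3)}$ and applying the tower property then yields $\expect[M_n\given\calF_n]\le M_{n-1}$, hence $\expect[M_n]\le 1$.

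Next I would turn this into a tail bound on the event $E \defeq \{\sum_{i=1}^n Z_i \ge u,\ \chi\le v\}$. On $E$ the deterministic inequality $M_n \ge \exp(\lambda u - \lambda^2 c(\lambda) v)$ holds pointwise, so Markov's inequality applied to the nonnegative variable $M_n$ gives
\[
    \prob[E] \le \prob\big[M_n \ge e^{\lambda u - \lambda^2 c(\lambda) v}\big] \le \exp\Big(-\lambda u + \frac{\lambda^2 v}{2(1-\lambda B/3)}\Big).
\]
Finally I would optimize the exponent over $\lambda$: the choice $\lambda = \tfrac{u}{v + uB/3}$ satisfies $\lambda B < 3$ (because $v>0$) and makes $1-\lambda B/3 = \tfrac{v}{v+uB/3}$, so $\tfrac{\lambda^2 v}{2(1-\lambda B/3)} = \tfrac{\lambda^2(v+uB/3)}{2} = \tfrac{\lambda u}{2}$, and the exponent collapses to $-\tfrac{\lambda u}{2} = -\tfrac{u^2}{2v + 2uB/3}$, which is exactly the claimed bound.

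The only genuinely delicate point is the one-step moment inequality: one must verify that the power-series tail $\sum_{k\ge 2}\tfrac{(\lambda B)^{k-2}}{k!}$ is dominated by $\tfrac12\sum_{j\ge 0}(\lambda B/3)^j = \tfrac{1}{2(1-\lambda B/3)}$, and that this bound is compatible with the eventual choice of $\lambda$ (in particular $\lambda B<3$ throughout). Everything else — the supermartingale property, the pointwise lower bound on $M_n$ over $E$, Markov's inequality, and the algebraic optimization in $\lambda$ — is routine. I note that in the paper this lemma is simply quoted from \citet{cesa2006prediction}, so the above is a self-contained reconstruction rather than new content.
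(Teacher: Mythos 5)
Your proof is correct: the one-step bound $\expect[e^{\lambda Z_i}\given\calF_i]\le\exp\bigl(\tfrac{\lambda^2}{2(1-\lambda B/3)}\expect[Z_i^2\given\calF_i]\bigr)$ (valid for $\lambda B<3$, using $k!\ge 2\cdot 3^{k-2}$), the resulting supermartingale with $\expect[M_n]\le 1$, the pointwise lower bound on $M_n$ over the event $\{\sum_i Z_i\ge u,\ \chi\le v\}$, Markov's inequality, and the choice $\lambda=\tfrac{u}{v+uB/3}$ all check out and yield exactly $\exp\bigl(\tfrac{-u^2}{2v+2uB/3}\bigr)$. The paper does not prove this lemma but simply quotes it from \citet{cesa2006prediction}, and your exponential-supermartingale (Bernstein-type) reconstruction is precisely the standard argument behind that cited result, so there is nothing to compare beyond noting that your write-up is a valid self-contained substitute for the citation.
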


\vspace{5em}

\paragraph*{Future directions and broad impact.}

Given recent advancement in the research of deep neural networks, a promising direction of further research would be to investigate how 
neural approximation and its generalization properties 
\citep{chen2020deep,chen2020more,chen2020multiple,min2021curious} would benefit risk-sensitive RL.
Understanding and designing efficient algorithms for risk-sensitive RL in other settings, such as shortest path problems \citep{min2021learning}, off-policy evaluation 
\citep{min2021variance} and offline learning \citep{chen2021infinite}, may also be of great interest.
Moreover, as risk-sensitive RL is closely related to human learning and behaviors, it would be intriguing to study how it synthesizes with relevant areas such as meta learning and bio-inspired learning 
\citep{xu2021meta,song2021convergence}.
Last but not least, exploring how risk sensitivity could be used to augment unsupervised learning algorithms \citep{fei2018exponential,fei2018hidden,fei2020achieving,ling2019landscape} would be an important future topic as well.

\end{document}